\newtheorem{theorem}{Theorem}
\newtheorem{definition}{Definition}%
\newtheorem{assumption}{Assumption}
\newtheorem{lemma}{Lemma}
\newtheorem{corollary}{Corollary}
\begin{document}

\title[TRBoost: A Generic Gradient Boosting Machine based on Trust-region Method]{TRBoost: A Generic Gradient Boosting Machine based on Trust-region Method}

\author[1]{\fnm{Jiaqi} \sur{Luo}}\email{jiaqi.luo@dukekunshan.edu.cn}

\author[1]{\fnm{Zihao} \sur{Wei}}\email{zihao.wei@dukekunshan.edu.cn}
\author[1]{\fnm{Junkai} \sur{Man}}\email{junkai.man@dukekunshan.edu.cn}

\author*[1]{\fnm{Shixin} \sur{Xu}}\email{shixin.xu@dukekunshan.edu.cn}

\affil*[1]{\orgdiv{Data Science Research Center}, \orgname{Duke Kunshan University}, \orgaddress{\street{No.8 Duke Ave}, \city{Kunshan}, \postcode{215300}, \state{Jiangsu Province}, \country{China}}}


\abstract{Gradient Boosting Machines (GBMs) have demonstrated remarkable success in solving diverse problems by utilizing Taylor expansions in functional space. However, achieving a balance between performance and generality has posed a challenge for GBMs. In particular, gradient descent-based GBMs employ the first-order Taylor expansion to ensure applicability to all loss functions, while Newton's method-based GBMs use positive Hessian information to achieve superior performance at the expense of generality.
To address this issue, this study proposes a new generic Gradient Boosting Machine called Trust-region Boosting (TRBoost). In each iteration, TRBoost uses a constrained quadratic model to approximate the objective and applies the Trust-region algorithm to solve it and obtain a new learner. Unlike Newton's method-based GBMs, TRBoost does not require the Hessian to be positive definite, thereby allowing it to be applied to arbitrary loss functions while still maintaining competitive performance similar to second-order algorithms.
The convergence analysis and numerical experiments conducted in this study confirm that TRBoost is as general as first-order GBMs and yields competitive results compared to second-order GBMs. Overall, TRBoost is a promising approach that balances performance and generality, making it a valuable addition to the toolkit of machine learning practitioners.}

\keywords{Gradient Boosting, Trust-region Method}



\maketitle

\section{Introduction}
Gradient Boosting Machines (GBMs) \citep{friedman2000additive,friedman2001greedy,friedman2002stochastic,natekin2013gradient} are popular ensemble models that have achieved state-of-the-art results in many data science tasks. From an optimization perspective, GBMs approximate the optimal model using the line search method in the hypothesis space. This involves computing a search direction and finding a suitable step size that satisfies certain conditions to get the new learner. Based on the degree of Taylor expansion, GBMs can be classified into two categories: first-order GBMs and second-order GBMs. First-order GBMs use only gradient information to generate new learners, while second-order GBMs use Newton's direction. Second-order methods generally outperform first-order algorithms \citep{sigrist2021gradient}, but the Hessian of loss must be positive. In contrast, first-order algorithms have no restrictions on objective functions.

Note that the Taylor expansion is only a local approximation of the given function, so we can limit the variables to a small range in which the approximation function is trustworthy.
When the feasible region is a compact set, according to the extreme value theorem, the optimal solution can be achieved, whether it is convex or not.
This provides one way to break through the dilemma of current GBMs by adding some constraints on the region where Taylor expansion is used. 
The idea of adding constraints is exactly the key concept of the Trust-region method.
Trust-region method \citep{yuan2000review} defines a region around the current iterate and applies a quadratic model to approximate the objective function in this region. 
Benefiting from the constraint, Trust-region methods do not require a quadratic coefficient positive definite.
In general, Trust-region methods are more complex to solve than line search methods. However, since the loss functions are usually convex and one-dimensional, Trust-region methods can also be solved efficiently.

This paper presents TRBoost, a generic gradient boosting machine based on the Trust-region method. We formulate the generation of the learner as an optimization problem in the functional space and solve it using the Trust-region method. TRBoost benefits from Trust-region's ability to handle arbitrary differentiable losses without requiring a positive Hessian. Moreover, the adaptive radius mechanism allows the boosting algorithm to modify the target values and the number of learners automatically.
Theoretical analysis shows that TRBoost has the same convergence rate as previous algorithms. Empirical results demonstrate that Trust-region's strength can enhance performance competitively compared to other models, regardless of the losses and learners. 
Furthermore, We also discuss the role of Hessian and present experiments that show the degree of expansion has minimal impact on our method.

The main contributions of this paper are summarized as follows:
\begin{itemize}
    \item  A gradient boosting machine that works with any learners and loss functions is proposed. It can adaptively adjust the target values and evaluate the new learner in each iteration. The algorithm maintains a balance between performance and generality. It is as efficient as Newton's method than the first-order algorithm when the loss is strictly convex. And when the Hessian is not positive definite, similar results as the first-order method can be obtained.
    \item We prove that when the loss is MAE, TRBoost has $O(\frac{1}{T})$ rate of convergence; when the loss is MSE, TRBoost has a quadratic rate and a linear rate is achieved when the loss is Logistic loss.
    \item Some theoretical analysis and experiments prove that gradient mainly determines the results and Hessian plays a role in further improvement, for both line search-based GBMs and TRBoost.
\end{itemize}

\section{Backgrounds}
\subsection{Gradient Boosting and Related Work}
Mathematically, for a given feature set $X$ and a label set $Y$, supervised learning is to find an optimal function $F^*$ in a hypothesis space $\mathcal{F} = \{F|Y=F(X)\}$ by minimizing the objective function $\mathcal{L}$ on a given training set $\mathcal{D}=\{(\mathbf{x}_1, y_1), (\mathbf{x}_2, y_2), \cdots, (\mathbf{x}_n, y_n)\}$~ ($\mathbf{x}_i\in X$, $y_i\in Y$):
\begin{equation}
\label{e.empirical}
    \min_{F\in \mathcal{F}} \mathcal{L}(F) =   \frac{1}{n}\sum_{i=1}^{n}l(y_i, F(\mathbf{x}_i)).
\end{equation}

Boosting finds $F^*$ in a stagewise way by sequentially adding a new learner $f_t$ to the current estimator $F_{t-1}$, i.e., $F_t = F_{t-1}+f_t$. Gradient Boosting is inspired by numerical optimization and is one of the most successful boosting algorithms. It originates from the work of Freund and Schapire \citep{freund1997decision} and is later developed by Friedman \citep{friedman2000additive,friedman2001greedy}. 

Since GBMs can be treated as functional gradient-based techniques,   different approaches in optimization can be applied to construct new boosting algorithms. For instance, gradient descent and Newton's method are adopted to design first-order and second-order GBMs \citep{friedman2000additive,friedman2001greedy,chen2016xgboost}.
Nesterov's accelerated descent and stochastic gradient descent induce the Accelerated Gradient Boosting \citep{lu2020accelerating,biau2019accelerated} and Stochastic Gradient Boosting \citep{friedman2002stochastic} respectively. Also, losses in probability space can generate new methods that can deal with the probabilistic regression \citep{duan2020ngboost}.

Linear functions, splines, and tree models are three commonly used base learners \citep{friedman2001greedy, buhlmann2003boosting, buehlmann2006boosting, schmid2008boosting}. Among them, the decision tree is the  first choice and most of the popular optimizations for learners are tree-based. XGBoost \citep{chen2016xgboost} presents a fantastic parallel tree learning method that can enable the Gradient Boosting Decision Tree (GBDT) to handle large-scale data. Later, LightGBM \citep{ke2017lightgbm} and CatBoost \citep{prokhorenkova2018catboost} propose several novel algorithms to further improve the efficiency and performance of GBDT in processing big data.

Numerous scholars have confirmed the convergence of gradient boosting. Bickel et al . \citep{bickel2006some} initially demonstrated a sub-linear rate when the loss function is logistic, while Telgarsky \citep{telgarsky2012primal} later established the linear convergence for logistic loss. Additionally, some researchers \citep{grubb2011generalized} have demonstrated a linear convergence rate for boosting with strongly convex losses.

\subsection{Trust-region Method}
Suppose the unconstrained optimization is 
\begin{equation}
    \min_{\mathbf{x}\in \mathbb{R}^n} f(\mathbf{x}),
\end{equation}
where $f$ is the \textit{objective function} and $\mathbf{x}$ is the \textit{decision variables}.
We employ a quadratic model, 
\begin{equation}
\label{e.quadratic}
    f(\mathbf{x}_k+\mathbf{p}) = f(\mathbf{x}_k)+ \bigtriangledown f(\mathbf{x}_k)^T \mathbf{p} + \frac{1}{2}\mathbf{p}^T \mathbf{B}_k \mathbf{p}, 
\end{equation}
to characterize the properties of $f$ around $\mathbf{x}_k$, where $\mathbf{B}_k$ is a symmetric matrix. It is an approximation of the Hessian matrix and does not need to be positive definite. 
At each iteration, Trust-region solves the following subproblem to obtain each step
\begin{gather}
\label{e.subproblem}
\min_{\mathbf{p}\in \mathbb{R}^n}  ~~m_k(\mathbf{p}) = f(\mathbf{x}_k)+ \bigtriangledown f(\mathbf{x}_k)^T \mathbf{p} + \frac{1}{2}\mathbf{p}^T \mathbf{B}_k \mathbf{p}\\
\textrm{s.t.}~~  \|\mathbf{p}\| \le \bigtriangleup_k, \nonumber
\end{gather}
where $\bigtriangleup_k$ is the trust region radius. The strategy for choosing $\bigtriangleup_k$ is critical since the radius reflects the confidence in $m_k(\mathbf{p})$ and determines the convergence of the algorithm. $\|\cdot\|$ is a norm that is usually defined to be the Euclidean norm.
The following theorem provides the condition for problem \eqref{e.subproblem} to process a global solution.

\begin{theorem}
\label{thm.1}
The vector $\mathbf{p}^*$ is a global solution of \eqref{e.subproblem} if and only if $\mathbf{p}^*$ is feasible and there is a scalar $\lambda_k \ge 0$ such that the following conditions are satisfied:
\begin{align*}
    &(\mathbf{B_k}+\lambda_k \mathbf{I_k})\mathbf{p}^* = -\bigtriangledown f(\mathbf{x}_k),\\
    &\lambda_k(\bigtriangleup_k-\|\mathbf{p}^*\|) = 0,\\
    &(\mathbf{B_k}+\lambda_k \mathbf{I_k})~\mbox{is~positive~semi-definite.}
\end{align*}
\end{theorem}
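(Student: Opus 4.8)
The plan is to prove both directions of the equivalence around a single algebraic identity. Writing $\mathbf{g}_k = \bigtriangledown f(\mathbf{x}_k)$ and using the first stationarity condition $(\mathbf{B}_k+\lambda_k\mathbf{I})\mathbf{p}^* = -\mathbf{g}_k$ to eliminate $\mathbf{g}_k$ from $m_k$, a direct expansion yields
\[
m_k(\mathbf{p}) - m_k(\mathbf{p}^*) = \tfrac{1}{2}(\mathbf{p}-\mathbf{p}^*)^{T}(\mathbf{B}_k+\lambda_k\mathbf{I})(\mathbf{p}-\mathbf{p}^*) + \tfrac{\lambda_k}{2}\bigl(\|\mathbf{p}^*\|^2-\|\mathbf{p}\|^2\bigr),
\]
which holds for every $\mathbf{p}$. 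This identity is the workhorse for both directions, so I would derive it first; the computation is routine (symmetry of $\mathbf{B}_k$ and completing the square), but getting the two quadratic-in-norm terms to collapse correctly is the one place to be careful.

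For sufficiency ($\Leftarrow$), assume the three conditions hold. Because $\mathbf{B}_k+\lambda_k\mathbf{I}$ is positive semidefinite, the first term on the right is nonnegative for every $\mathbf{p}$. For the second term I would split on $\lambda_k$: if $\lambda_k=0$ it vanishes; if $\lambda_k>0$, complementarity forces $\|\mathbf{p}^*\|=\bigtriangleup_k$, so for any feasible $\mathbf{p}$ (with $\|\mathbf{p}\|\le\bigtriangleup_k$) we have $\|\mathbf{p}^*\|^2-\|\mathbf{p}\|^2\ge 0$. In either case $m_k(\mathbf{p})\ge m_k(\mathbf{p}^*)$ on the feasible ball, so $\mathbf{p}^*$ is a global minimizer.

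For necessity ($\Rightarrow$), suppose $\mathbf{p}^*$ solves the subproblem globally and split into two cases. If the constraint is inactive, $\|\mathbf{p}^*\|<\bigtriangleup_k$, then $\mathbf{p}^*$ is an unconstrained minimizer of the quadratic $m_k$, so the usual first- and second-order conditions give $\bigtriangledown m_k(\mathbf{p}^*)=\mathbf{g}_k+\mathbf{B}_k\mathbf{p}^*=\mathbf{0}$ and $\mathbf{B}_k$ positive semidefinite; taking $\lambda_k=0$ satisfies all three conditions and makes complementarity trivial. If the constraint is active, $\|\mathbf{p}^*\|=\bigtriangleup_k$, then since the constraint gradient $2\mathbf{p}^*$ is nonzero on the sphere the LICQ holds, so the KKT conditions apply and produce a multiplier $\lambda_k\ge 0$ with $\mathbf{g}_k+\mathbf{B}_k\mathbf{p}^*+\lambda_k\mathbf{p}^*=\mathbf{0}$ (the first condition) and complementary slackness (the second condition).

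The remaining and genuinely delicate step is establishing positive semidefiniteness of $\mathbf{B}_k+\lambda_k\mathbf{I}$ in the active case. Here I would return to the identity: global optimality gives $m_k(\mathbf{p})-m_k(\mathbf{p}^*)\ge 0$ for all feasible $\mathbf{p}$. Given an arbitrary direction $\mathbf{w}$ with $\mathbf{p}^{*T}\mathbf{w}\neq 0$, I would choose the nonzero scalar $\tau=-2\,\mathbf{p}^{*T}\mathbf{w}/\|\mathbf{w}\|^2$ so that $\mathbf{p}=\mathbf{p}^*+\tau\mathbf{w}$ again lies on the sphere $\|\mathbf{p}\|=\|\mathbf{p}^*\|=\bigtriangleup_k$. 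Then the norm term in the identity cancels and the inequality collapses to $\tfrac{1}{2}\tau^2\,\mathbf{w}^{T}(\mathbf{B}_k+\lambda_k\mathbf{I})\mathbf{w}\ge 0$, hence $\mathbf{w}^{T}(\mathbf{B}_k+\lambda_k\mathbf{I})\mathbf{w}\ge 0$. Since the set of such $\mathbf{w}$ (those not orthogonal to $\mathbf{p}^*$) is dense in $\mathbb{R}^n$ and the quadratic form is continuous, the inequality extends to all $\mathbf{w}$, giving the third condition. This construction — picking the chord of the sphere through $\mathbf{p}^*$ in direction $\mathbf{w}$ and then passing to the limit for the missing tangent directions — is the crux of the argument and the step I expect to demand the most care.
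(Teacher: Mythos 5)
Your proof is correct and is essentially the paper's own argument: the paper does not prove Theorem~\ref{thm.1} itself but defers to \citep{nocedal1999numerical}, and the proof given there (the standard Mor\'e--Sorensen characterization) rests on exactly your identity $m_k(\mathbf{p})-m_k(\mathbf{p}^*)=\tfrac{1}{2}(\mathbf{p}-\mathbf{p}^*)^{T}(\mathbf{B}_k+\lambda_k\mathbf{I})(\mathbf{p}-\mathbf{p}^*)+\tfrac{\lambda_k}{2}\bigl(\|\mathbf{p}^*\|^2-\|\mathbf{p}\|^2\bigr)$, the same inactive/active case split with KKT in the active case, and the same chord construction $\mathbf{p}=\mathbf{p}^*+\tau\mathbf{w}$ with $\tau=-2\,\mathbf{p}^{*T}\mathbf{w}/\|\mathbf{w}\|^2$ followed by a density-and-continuity argument to cover directions tangent to the sphere. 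All steps, including the delicate semidefiniteness argument, check out as written.
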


Given a step $\mathbf{p}_k$, we define the ratio between  the \textit{actual reduction} and the \textit{predicted reduction}
\begin{equation}
\label{e.trratio}
    \rho_k = \frac{f(\mathbf{x}_k)-f(\mathbf{x}_k+\mathbf{p}_k)}{m_k(\mathbf{0})-m_k(\mathbf{p}_k)},
\end{equation}
 which measures how well $m_k(\mathbf{p})$ approximates and helps to  adjust the radius $\bigtriangleup_k$. The following Algorithm \ref{alg.trust} describes the standard
 Trust-region method, more details and the proof of Theorem \ref{thm.1} can be found in \citep{nocedal1999numerical}.

\begin{algorithm}
\footnotesize
\caption{Trust-region algorithm}
\label{alg.trust}
\begin{algorithmic}[1]
    \State Given max radius $\bigtriangleup_{max}$, initial radius $\bigtriangleup_0\in (0, \bigtriangleup_{max})$, initial point $x_0$, $k\leftarrow 0$;
    \State Given parameters $0\le \eta<\hat{\rho}_1 < \hat{\rho}_2 < 1$, $\gamma_1 < 1 < \gamma_2$;
    \While{Stop condition not met}
    \State Obtain $\mathbf{p}_k$ by solving \eqref{e.subproblem};
    \State Evaluate $\rho_k$ from \eqref{e.trratio};
    \If{$\rho_k < \hat{\rho}_1$}
        \State $\bigtriangleup_{k+1}=\gamma_1\bigtriangleup_k$;
    \Else
        \If{$\rho_k > \hat{\rho}_2$ and $\|\mathbf{p}_k\|=\bigtriangleup_k$}
            \State $\bigtriangleup_{k+1}=\min(\gamma_2\bigtriangleup_k, \bigtriangleup_{max})$;
        \Else
            \State $\bigtriangleup_{k+1}=\bigtriangleup_k$;
        \EndIf
    \EndIf
    \If{$\rho_k > \eta$}
      \State $\mathbf{x}_{k+1}=\mathbf{x}_k+\mathbf{p}_k$;
    \Else
      \State $\mathbf{x}_{k+1}=\mathbf{x}_k$;
    \EndIf
    \EndWhile
\end{algorithmic}
\end{algorithm}

In practice, $\hat{\rho}_1=0.25$, $\hat{\rho}_2=0.75$, $\gamma_1=0.25$, $\gamma_2=2$ are taken as default.

\section{Trust-region Gradient Boosting}
In this section, we first introduce the generic formulation of the algorithm TRBoost and then present the special case when base learners are decision trees.
\subsection{General Formulation}
\textbf{Objective.} Given a dataset with $n$ samples and $m$ features $\mathcal{D}=\{(\mathbf{x}_1, y_1), (\mathbf{x}_2, y_2), \cdots, (\mathbf{x}_n, y_n)\}$~($\mathbf{x}_i \in \mathbb{R}^m, y_i\in \mathbb{R}$), we define the objective of iteration $t$ as
\begin{equation}
\label{e.objfun}
    \mathcal{L}^{t} = \frac{1}{n}\sum_{i=1}^{n}l(y_i, \hat{y}^{t-1}_i+f_t(\mathbf{x}_i)),
\end{equation}
where $t \in [1, T]$ is the learner number, $\hat{y}^{t-1}_i$ is the prediction from the previous $t-1$ learners, $f_t$ is a new weak learner and $l$ is the loss function that does not need to be strictly convex. 

\noindent\textbf{Optimization and Target Values.}
Trust-region method is employed to optimize the objective, which has the following formulation
\begin{gather}
\label{e.objective}
    \min_{f_t} \mathcal{\widetilde{L}}^{t} = \frac{1}{n}\sum_{i=1}^{n}(g^{t-1}_i f_t(\mathbf{x}_i)+\frac{1}{2}b^{t-1}_i f_t(\mathbf{x}_i)^2),\\
    \textrm{s.t.}~~\vert f_t(\mathbf{x}_i)\vert\le r_i^{t}, i=1,2,\cdots, n, \nonumber
\end{gather}
where $g^{t-1}_i = \frac{\partial l(y_i,\hat{y}^{t-1}_i)}{\partial \hat{y}^{t-1}_i}$,
$b^{t-1}_i$ is the second derivative $\frac{\partial^2
l(y_i,\hat{y}^{t-1}_i)}{\partial (\hat{y}^{t-1}_i)^2}$ or some approximation.

Since each item $l(y_i, \hat{y}^{t-1}_i+f_t(\mathbf{x}_i))$ is independent, we can split \eqref{e.objective} into following $n$ one-dimensional constrained optimization problems,
\begin{gather}
\label{e.trbsubproblem}
\min_{z^{t}_i} g^{t-1}_i z^{t}_i+\frac{1}{2}b^{t-1}_i (z^{t}_i)^2\\
\textrm{s.t.}~~\vert z^{t}_i\vert\le r_i^{t}, \nonumber
\end{gather} 
where the solution $z^{t}_i$ is the target that $f_t$ needs to fit.
They are easy to solve and each $z^{t}_i$ has analytical expression, which is
\begin{equation}
\label{e.trbtarget1}
z^{t}_i = \left\{
\begin{array}{l}
sgn(-g^{t-1}_i)\times\min(\vert \frac{g^{t-1}_i}{b^{t-1}_i}\vert, r_i^{t}),~b^{t-1}_i>0; \\
sgn(-g^{t-1}_i)\times r_i^{t},~b^{t-1}_i\le0 .\\
\end{array}
\right.
\end{equation}

The segmented form \eqref{e.trbtarget1} can be rewritten in the following format
\begin{equation}
\label{e.trbtarget2}
z^{t}_i = \frac{-g^{t-1}_i}{b^{t-1}_i+\mu^{t}_i},
\end{equation}
which matches Theorem \ref{thm.1}. 
Here $\mu^{t}_i$ is a non-negative scalar related to $r_i^{t}$ ensuring the positive denominator $b^{t-1}_i+\mu^{t}_i$. The relationship between $\mu^{t}_i$ and $r^{t}_i$ is as follows:
\begin{enumerate}[(a)]
    \item If $\vert \frac{g^{t-1}_i}{b^{t-1}_i}\vert \ge r_i^{t}$, then $z_i^{t} =sgn(-g^{t-1}_i)\vert\frac{g^{t-1}_i}{b^{t-1}_i}\vert=\frac{-g^{t-1}_i}{b^{t-1}_i}$. Let $\mu_i^{t}=0$, then $z^{t}_i = \frac{-g^{t-1}_i}{b^{t-1}_i+\mu^{t}_i}$.
    \item  If $\vert\frac{g^{t-1}_i}{b^{t-1}_i}\vert< r_i^{t}$ or $b^{t-1}_i\le 0$, then $z_i^{t} =sgn(-g^{t-1}_i){r^{t}_i}$. Let $\mu_i^{t}=\frac{\vert g^{t-1}_i \vert}{r_i^{t}}-b^{t-1}_i$, then $z^{t}_i = \frac{-g^{t-1}_i}{b^{t-1}_i+\mu^{t}_i}$.
\end{enumerate} 

As the number of instances increases, there will be a mass of $\mu^{t}_i$ that needs to be adjusted in each iteration, which will be time-consuming.
Therefore, we replace different $r^{t}_i$ with single value  $\mu^{t} = \max\{\mu^{t}_i\}_{i=1}^{n}$ for simplicity and obtain the target value
\begin{equation}
\label{e.trbvalue}
    z^{t}_i = \frac{-g^{t-1}_i}{b^{t-1}_i+\mu^{t}}.
\end{equation}

\noindent\textbf{Update Strategy.}
Let $\hat{z}^{t}_i=f_t(\mathbf{x_i})$ be the output of the new learner, we provide two ratios to determine how to update $\mu^{t}$. The first one is the same as \eqref{e.trratio}, which is
\begin{equation}
\label{e.trbratio1}
    \rho_1^{t} = \frac{\mathcal{L}^{t-1}-\mathcal{L}^{t}}{-\frac{1}{n}\sum_{i=1}^{n}[g^{t-1}_i \hat{z}^{t}_i+\frac{1}{2}b^{t-1}_i (\hat{z}^{t}_i)^2]}.
\end{equation}
When $b_i^{t}=\frac{\partial^2
l(y_i,\hat{y}^{t-1}_i)}{\partial (\hat{y}^{t-1}_i)^2}$ and the loss function is $L_2$ loss, $\rho_1^{t}$ is always 1, so that $\mu^{t}$ can not be updated. Therefore, the second ratio is introduced based on the difference of $\mathcal{L}$ to overcome the drawbacks, which is
\begin{equation}
\label{e.trbratio2}
    \rho_2^{t} = \frac{\mathcal{L}^{t-1}-\mathcal{L}^{t}}{\frac{1}{n}\sum_{i=1}^{n}\vert \hat{z}^{t}_i\vert}.
\end{equation}

\noindent\textbf{Algorithm.} 
The detailed procedure of generic TRBoost is presented in Algorithm \ref{alg.trgbm}, which is on the basis of Algorithm\ref{alg.trust}. $\epsilon_1$, $\epsilon_2$, and $\gamma$ are three constants that control the update of $\mu^t$. If $\rho^t < \epsilon_1$, it means that the radius is too large such that the Taylor approximation is not good. On the other hand, although $\rho^t > \epsilon_2$ implies a good approximation, the sharp decrease in the objective $\mathcal{L}^{t}$ may induce overfitting. Hence in both cases, we need to reduce the radius (enlarge $\mu^t$) and let $\rho^t$ be close to 1. The default values of $\epsilon_1$ and $\epsilon_2$ are 0.9 and 1.1 respectively. Besides, large $\gamma$ will cause the target values \eqref{e.trbvalue} to drop to 0 fast and make the algorithm converge early, so we set the default value to 1.01. Moreover, $\rho^t < \eta$ indicates that the corresponding weak  learner  contributes little to the decline of the objective function, thus it will not be added to the ensemble, which can reduce the model complexity and lower the risk of overfitting.

\begin{algorithm}
\footnotesize
\caption{Generic Trust-region Gradient Boosting}
\label{alg.trgbm}
\begin{algorithmic}[1]
    \State Given initial constants $\mu^0 \ge 0$, initial tree $F_0$;
    \State Given parameters $0\le \eta \leq \epsilon_1 < 1 < \epsilon_2$, $\gamma > 1$;
    \While{Stop condition not met}
    \State Calculate $z^{t}_i, i=1,2,\cdots,n$ from \eqref{e.trbvalue};
    \State Obtain $f_t$ by fitting $\{\mathbf{x}_i, z^{t}_i\}_{i=1}^{n}$;
    \State Compute $\rho^{t}$ using \eqref{e.trbratio1} or \eqref{e.trbratio2};
    \If{$\rho^{t} < \epsilon_1$ or $\rho^{t} > \epsilon_2$}
        \State $\mu^{t+1}=\gamma\mu^{t}$;
    \Else
        \State $\mu^{t+1}=\mu^{t}$;
    \EndIf
    \If{$\rho^{t} > \eta$}
        \State $F_{t+1}=F_{t}+f_t$;
    \Else
        \State $F_{t+1}=F_{t}$;
    \EndIf
    \EndWhile
\end{algorithmic}
\end{algorithm}

\subsection{Trust-region Boosting Tree}
\noindent\textbf{Leaf Values.}
For a tree with a fixed structure, it can be written as a piecewise linear function $f_t(\mathbf{x}) =\sum_{j=1}^{k}C_{j}^{t} I(\mathbf{x}\in R_{j}^{t})$, where $C_{j}^{t} \in \mathbb{R}$ is the value of leaf $j$ and $R_{j}^{t} \subseteq \mathbb{R}^m$ is the corresponding region.
Suppose the instances in $R_{j}^{t}$ is $\{\mathbf{x}_{j_1}, \mathbf{x}_{j_2}, \cdots, \mathbf{x}_{j_{\ell}}\}$, we have
\begin{equation*}
    \vert C_j^t\vert = \vert f_t(\mathbf{x}_s)\vert \le r_s^t, s=j_1,j_2,\cdots, j_{\ell}. 
\end{equation*}
Let $r_j^t = \min\{r_s^t\}_{s=j_{1}}^{j_{\ell}}$, then $\vert C_{j}^{t}\vert\le r_{j}^{t}$.
Since there is no intersection between different region $R_{j}^{t}$, $n$ subproblems in \eqref{e.trbsubproblem} are equivalent to the following $k$ models about $C_{j}^{t}$, $j=1, 2, \dots, k$:
\begin{gather}
\label{e.trbopt}
\min_{C_{j}^{t}} ~~\frac{1}{2}B_{j}^{t-1} (C_{j}^{t})^2 + G_{j}^{t-1} C_{j}^{t}\\
\textrm{s.t.}~~ \vert C_{j}^{t}\vert\le r_{j}^{t}, \nonumber
\end{gather}
where $G_{j}^{t-1} = \sum_{x_{i}\in R_{j}^{t}}g_{i}^{t-1}, B_{j}^{t-1} = \sum_{x_i\in R_{j}^{t}}b_{i}^{t-1}$.

Same as \eqref{e.trbvalue}, the optimal leaf value $C_{j}^{t}$  can be computed by
\begin{equation}
\label{e.leafvalue1}
C_{j}^{t} = \frac{-G_{j}^{t-1}}{B_{j}^{t-1}+\mu_{j}^{t}}.
\end{equation}
We use a variant of $\mu_{j}^{t}$ to reduce the influence of the number of instances on $C_{j}^{t}$, which is defined as  $\mu_{j}^{t} = \alpha^{t} n_{j}^{t}+\beta^{t}$. Here $n_{j}^{t}$ is the number of instances contained in leaf $j$, $\alpha^t$ and $\beta^t$ are two constants needed to be updated like $\mu^t$ in Algorithm \ref{alg.trgbm}, i.e., $\alpha^{t+1}=\gamma\alpha^{t}$, $\beta^{t+1}=\gamma\beta^{t}$.  Their default values are 0.1 and 10 respectively. And \eqref{e.leafvalue1} now becomes
\begin{equation}
\label{e.leafvalue2}
C_{j}^{t} = \frac{-G_{j}^{t-1}}{B_{j}^{t-1}+\alpha^{t} n_{j}^{t}+\beta^{t}}.
\end{equation}

After getting the leaf value, we can calculate the corresponding optimal value by
\begin{equation}
    \mathcal{\widetilde{L}}_{j}^{t} = \frac{1}{2}B_{j}^{t-1} (C_{j}^{t})^2 + G_{j}^{t-1} C_{j}^{t}.
\end{equation}

\noindent\textbf{Splitting Rules.}
Different functions such as \textit{gini impurity}\citep{breiman2017classification},
\textit{loss reduction} \citep{chen2016xgboost} and \textit{variance gain} \citep{ke2017lightgbm} can be applied to decide whether the leaf node should split.
In our current implementation version, we choose loss reduction as the splitting function. 

Suppose $R_P = R_L \cup R_R$, $R_L$, and $R_R$ are the corresponding regions of the left and right nodes after the parent node $R_P$ split. The loss reduction in our method is defined as follows:
\begin{equation*}
\begin{split}
    \mathcal{L}_{split} =&~   \frac{1}{2}\left[\frac{G_L^2B_L}{(B_L+\mu_L)^2}+\frac{G_R^2B_R}{(B_R+\mu_R)^2}-\frac{G_P^2B_P}{(B_P+\mu_P)^2}\right]\\
    &~+\left(\frac{-G_L^2}{B_L+\mu_L}+\frac{-G_R^2}{B_R+\mu_R}-\frac{-G_P^2}{B_P+\mu_P}\right)
\end{split}
\end{equation*}
where $G_{*}=\sum_{x_{i}\in R_{*}}g_{i}$, $B_{*}=\sum_{x_{i}\in R_{*}}b_{i}$ and $\mu_{*}=\alpha n_{*}+\beta$, $* \in \{P, L, R\}$.

\noindent\textbf{Comparison with XGBoost.}
We end this subsection by providing comparisons between TRBoost and XGBoost and show the superiority of the former one: 
\begin{enumerate}[(1)]
\item \textbf{Approximation model}: The approximation model of XGBoost at step $t$ is
\begin{equation}
    \mathcal{\widetilde{L}}_{XGB}^{t} = \sum_{i=1}^{n}(g^{t-1}_i f_t(\mathbf{x}_i)+\frac{1}{2}h^{t-1}_i f_t(\mathbf{x}_i)^2)+\Omega(f_t). \nonumber
\end{equation}
This model is unconstrained and requires $h^{t-1}_i>0$ to ensure that the minimization problem can be solved. In contrast, the bounded feasible region in TRBoost makes the approximation model able to be solved whatever the sign of the quadratic term is.

\item \textbf{Leaf value}: Although the leaf value of both methods can be written as $C = \frac{-G}{H+\mu}$, the $C$ in TRBoost is more flexible. $H$ does not need to be strictly positive and $\mu$ can be updated during iterations.

\item  \textbf{Boosting strategy}: 
In each iteration, XGBoost adds all new learners to the ensemble model, while TRBoost adds selectively.

\end{enumerate}

\section{Experiments}
\subsection{Experiments Setup}
\textbf{Datasets.} 12 datasets, which come from the UCI Machine Learning Repository \citep{Dua:2019} and OpenML \citep{vanschoren2014openml}, and 1 synthetic dataset are used for comparisons. The datasets are listed in Table \ref{T.data}.
The Noisy data is generated by the function \textit{make regression} in Scikit-Learn \citep{pedregosa2011scikit} where $10\%$ strong outliers are added to the training data. For all datasets, we randomly hold out 80$\%$ of the instances as the training set and the rest as the test set. For the training data,  20$\%$ is held out as the validation set, and  the grid search method is used to select the best parameters. After obtaining the best parameters, we retrain the model to make predictions on the test set. The entire process is repeated 5 times for all datasets.

\noindent\textbf{Loss Functions and Evaluation Metric.}
Four commonly used convex losses are employed for experiments, namely \textit{Log Loss} in classification,  \textit{Squared Error ($L_2$ Loss)}, \textit{Absolute Loss ($L_1$ Loss)} and \textit{Huber Loss} in regression. For the binary classification task, \textit{AUC} \citep{davis2006relationship} and \textit{F1-score} are chosen as the evaluation metric. For the regression task, loss functions are also applied to evaluate the performance of different methods. The $b_i^{t-1}$ in \eqref{e.trbvalue} is chosen as $\frac{\partial^2
l(y_i,\hat{y}^{t-1}_i)}{\partial (\hat{y}^{t-1}_i)^2}$.

\noindent\textbf{Implementation Details.} 
The Scikit-Learn implementation of GBDT and the XGBoost Python Package\footnote{\url{https://xgboost.readthedocs.io/en/stable/python/index.html}} are employed for comparisons, they are first-order and second-order algorithms respectively. 
Decision Tree, Linear Regression, and Cubic Spline \citep{hastie2009elements} are chosen as the base learners in TRBoost, and $\rho_1^t$ is selected as the approximation ratio.
The decision tree model is implemented by ourselves using Python\footnote{\url{https://github.com/Luojiaqimath/TRBoost}} and the other two models are implemented by Scikit-Learn.
We do not perform any special preprocessing on categorical features.
The parameters we adjust are presented in Table \ref{T.param} and others are left to their default values. All the experiments are conducted on a workstation running Ubuntu 20.04 with an Intel Core i9-10900X CPU and 128G Memory.

\begin{table}
\scriptsize
    \caption{Datasets used in experiments}
    \label{T.data}
    \centering
    \begin{tabular}{lccc}
     \hline
     Datasets & \#Ins./\#Feat. & Task/Loss & Metric\\
     \hline
     Adult              & 32561/14   &   Clf/Log   & AUC\&F1\\
     German             & 1000/20  &   Clf/Log   & AUC\&F1\\
     Electricity        & 45312/8  &   Clf/Log    & AUC\&F1\\
     Sonar              & 208/60  &   Clf/Log    & AUC\&F1\\
     Credit            & 1000/20   &   Clf/Log   & AUC\&F1\\
     Spam               & 4601/57  &   Clf/Log    & AUC\&F1\\
     \hline
     California         & 20634/8  &   Reg/$L_2$  & Loss  \\
     Concrete           & 1030/8  &   Reg/$L_2$    & Loss \\
     Energy             & 768/8  &   Reg/$L_2$    & Loss  \\
     Power              & 9568/4  &   Reg/$L_2$    & Loss  \\
     Kin8nm             & 8192/8  &   Reg/$L_2$    & Loss \\
     Wine quality       & 6497/11  &   Reg/$L_2$    & Loss  \\
     Noisy data         & 500/5  &   Reg/$L_1$\&Huber  & Loss\\
     \hline
    \end{tabular}
\end{table}

\begin{table}
\caption{Adjusted parameters}
\label{T.param}
\centering
\begin{tabular}{lcc}
\hline
Algorithm  & Parameters name  & Values \\
\hline
\multirow{2}{*}{GBDT}    & learning rate  & [0.1, 0.5, 1] \\
                         & \#estimators   & 1-100 \\
\hline
\multirow{3}{*}{XGBoost} & learning rate & [0.1, 0.5, 1] \\
                         & regularization term $\lambda$ & [0, 0.5, 1, 5] \\
                         & \#estimators   & 1-100 \\
\hline
\multirow{3}{*}{TRBoost} &  $\mu$ or $\alpha$ & [0.1, 0.5, 1, 5] \\
                         & $\eta$ &  [0, 0.01, 0.1]  \\
                         & \#estimators   & 1-100 \\
 \hline
\end{tabular}
\end{table}

\begin{figure}[ht]
\centering
\subfigure[Adult]{\includegraphics[width=0.22\textwidth]{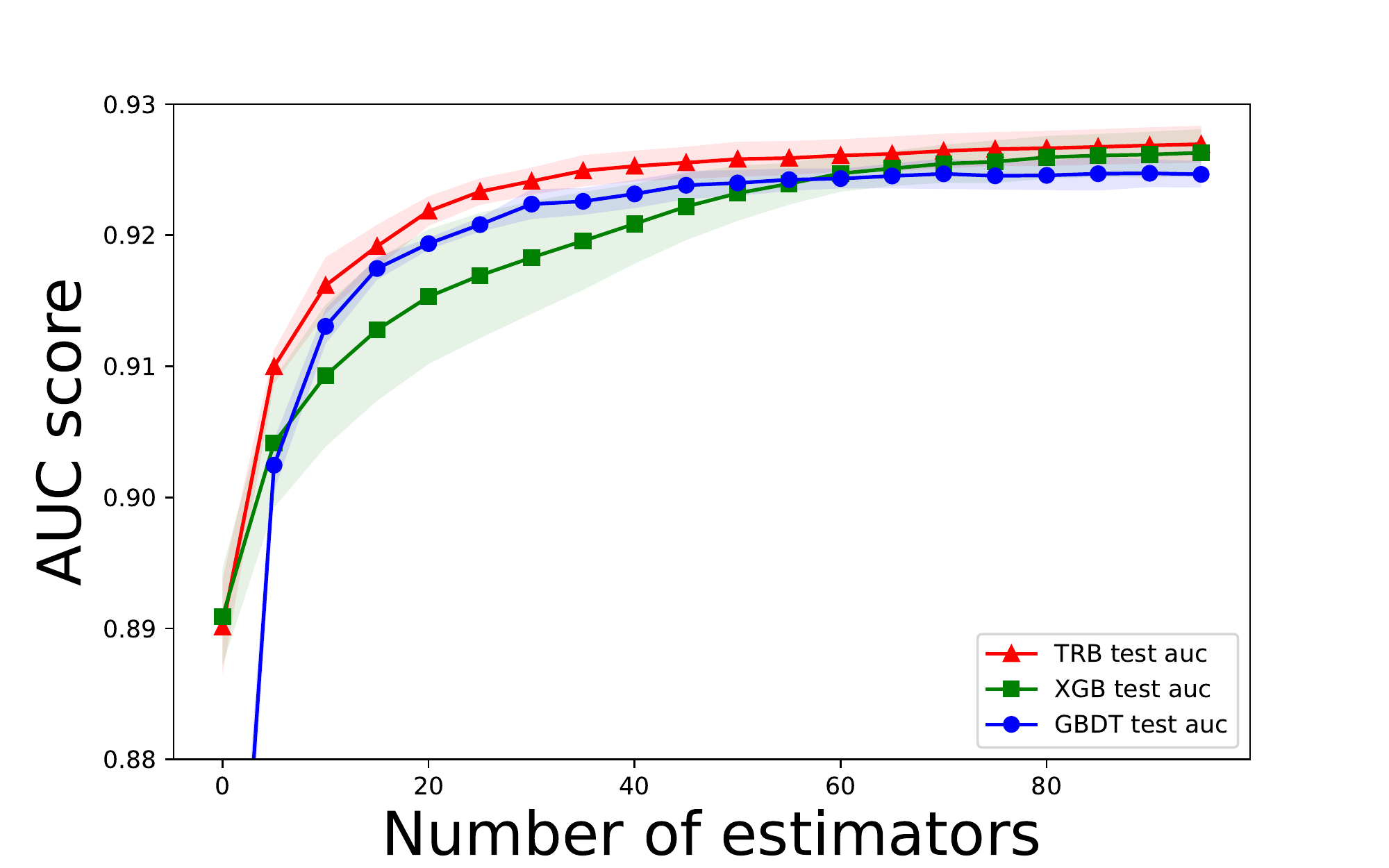}}
\subfigure[German]{\includegraphics[width=0.22\textwidth]{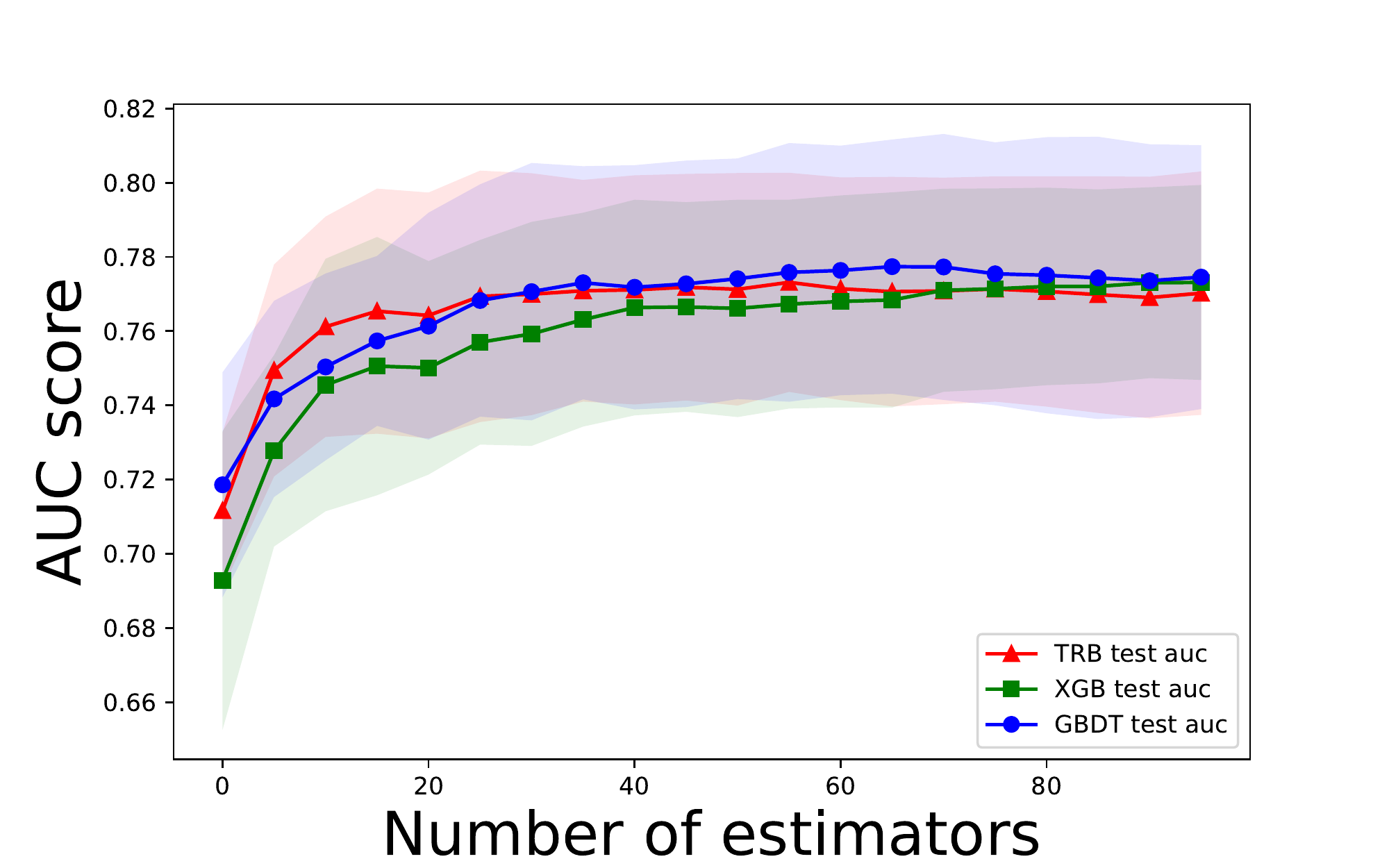}}
\subfigure[Electricity]{\includegraphics[width=0.22\textwidth]{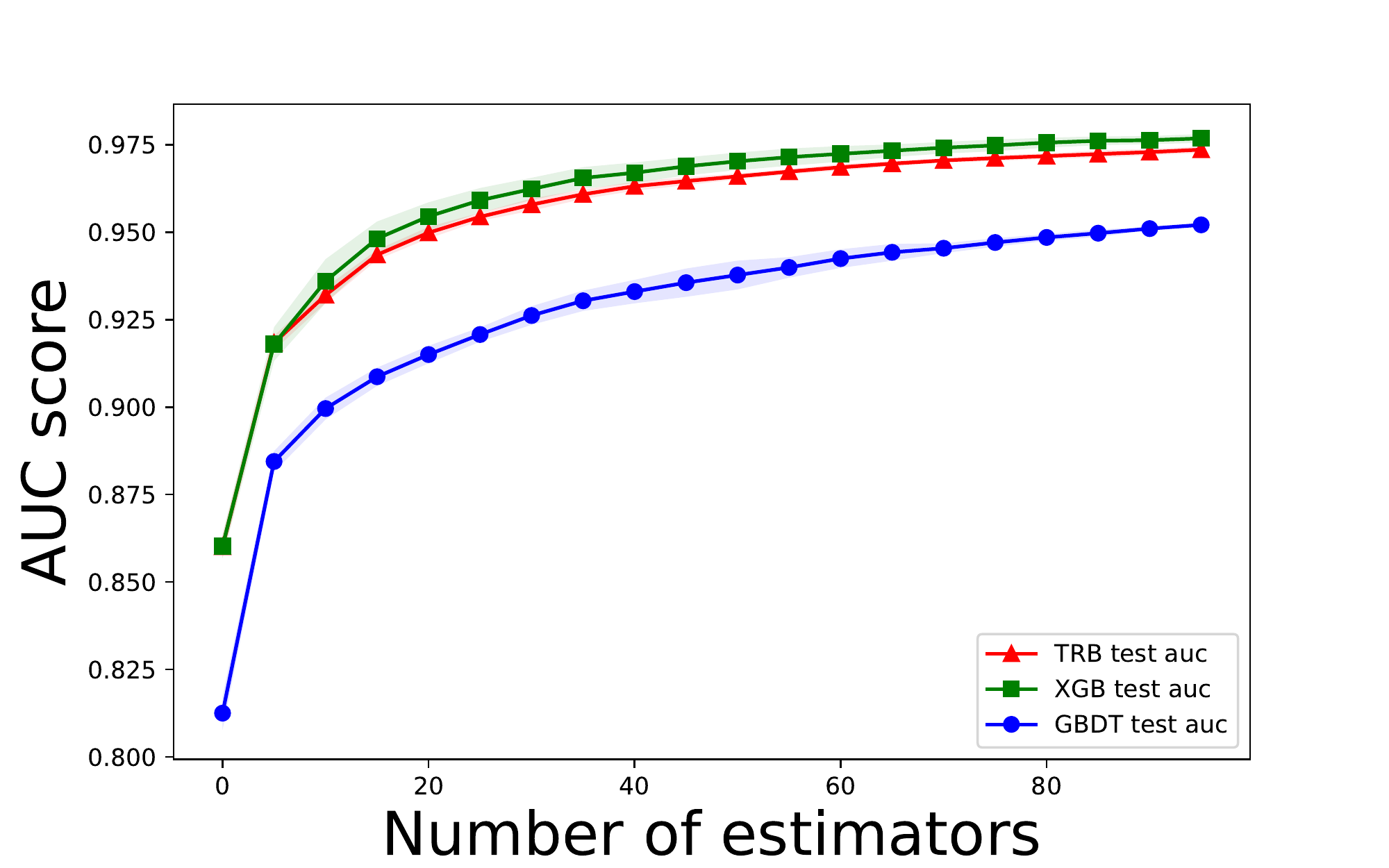}}
\subfigure[Sonar]{\includegraphics[width=0.22\textwidth]{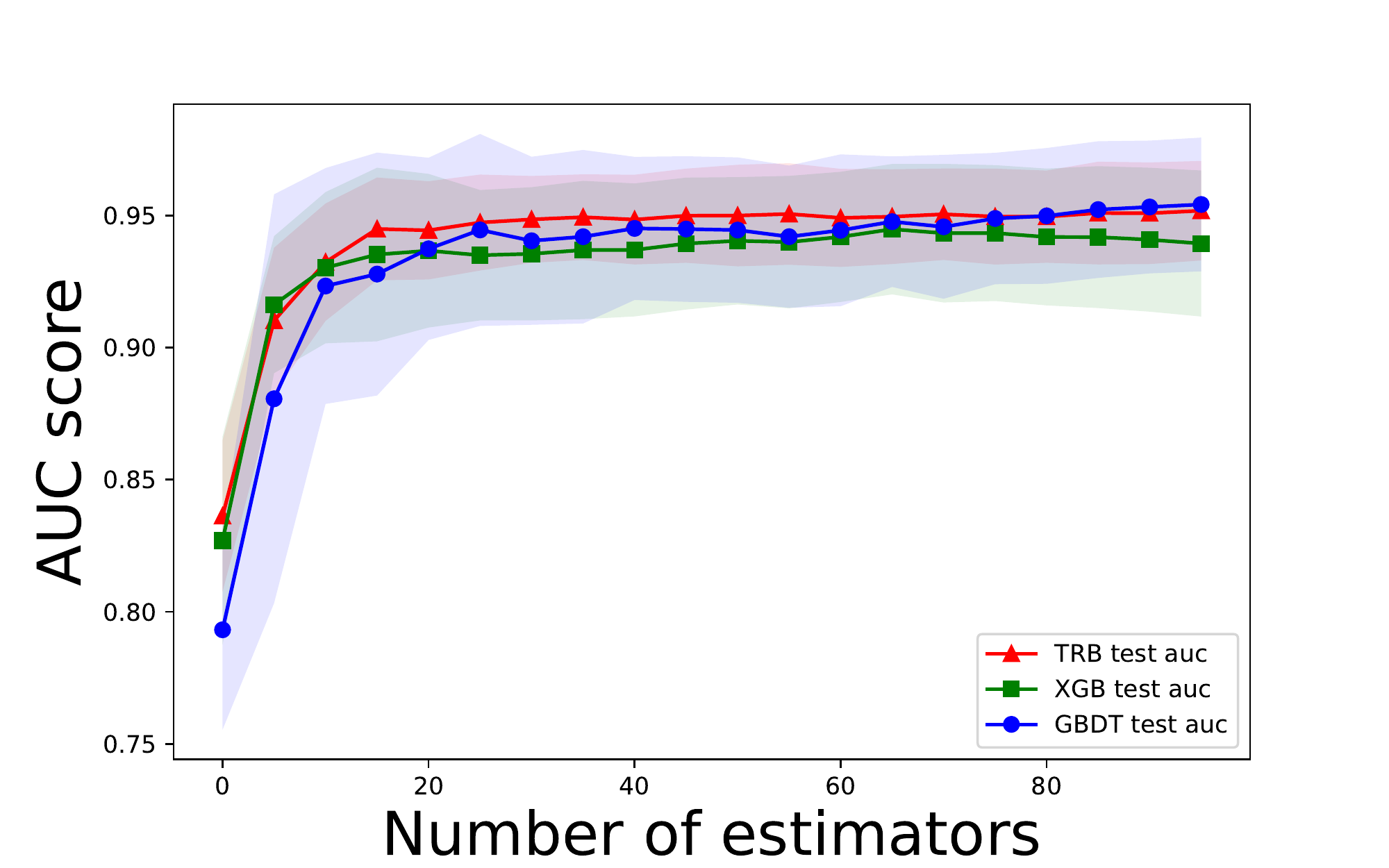}}
\subfigure[Credit]{\includegraphics[width=0.22\textwidth]{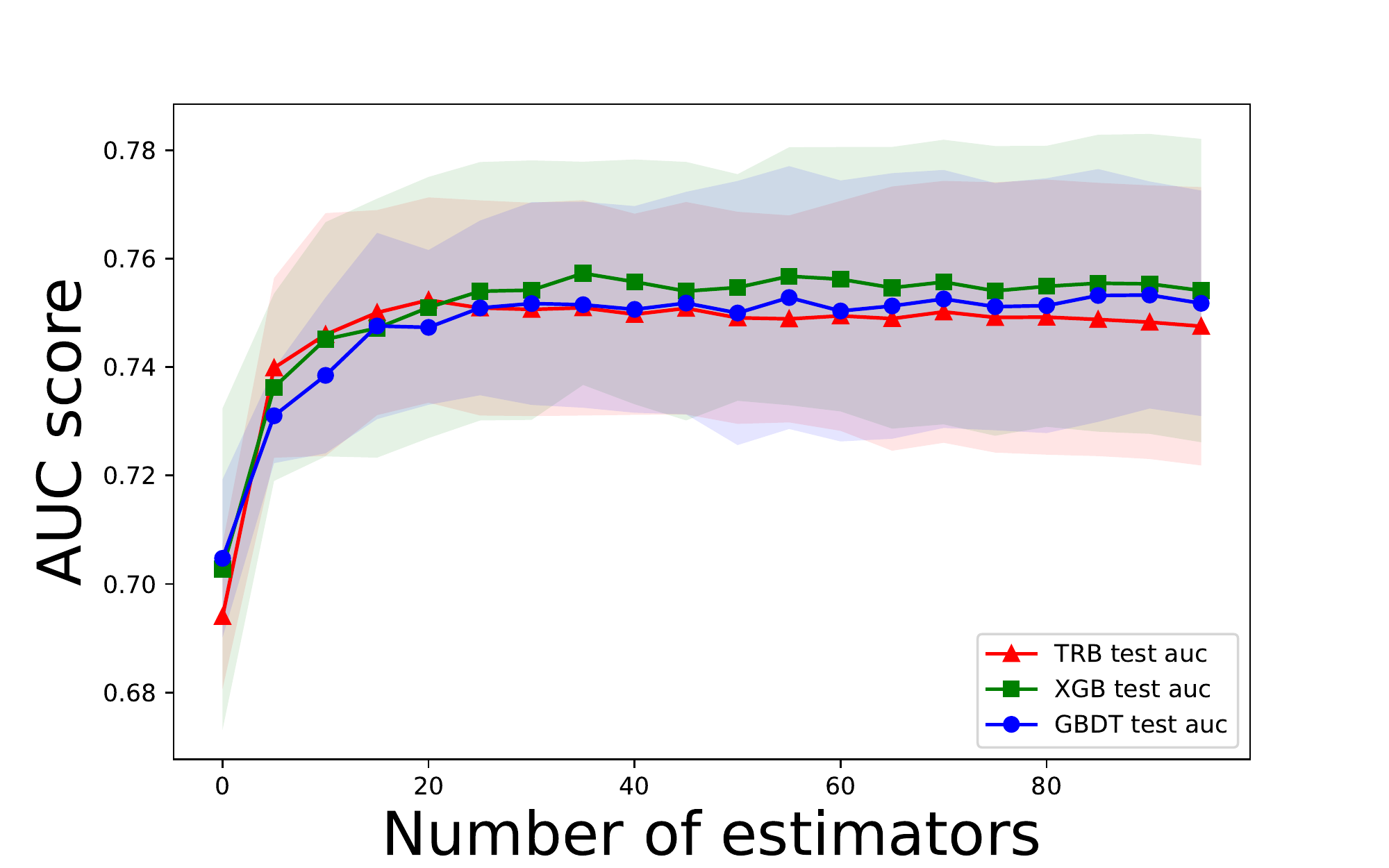}}
\subfigure[Spam]{\includegraphics[width=0.22\textwidth]{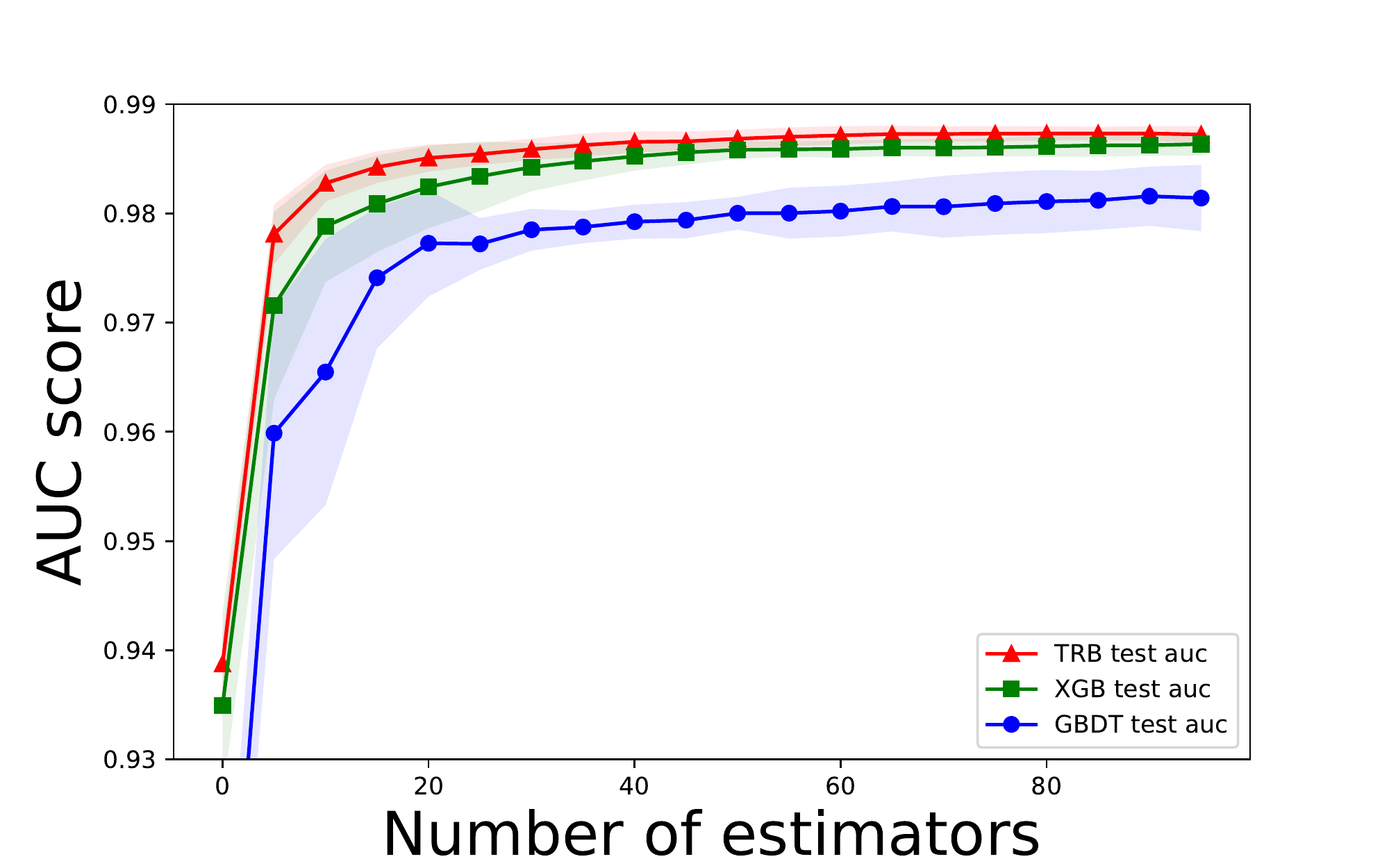}}
\subfigure[California]{\includegraphics[width=0.22\textwidth]{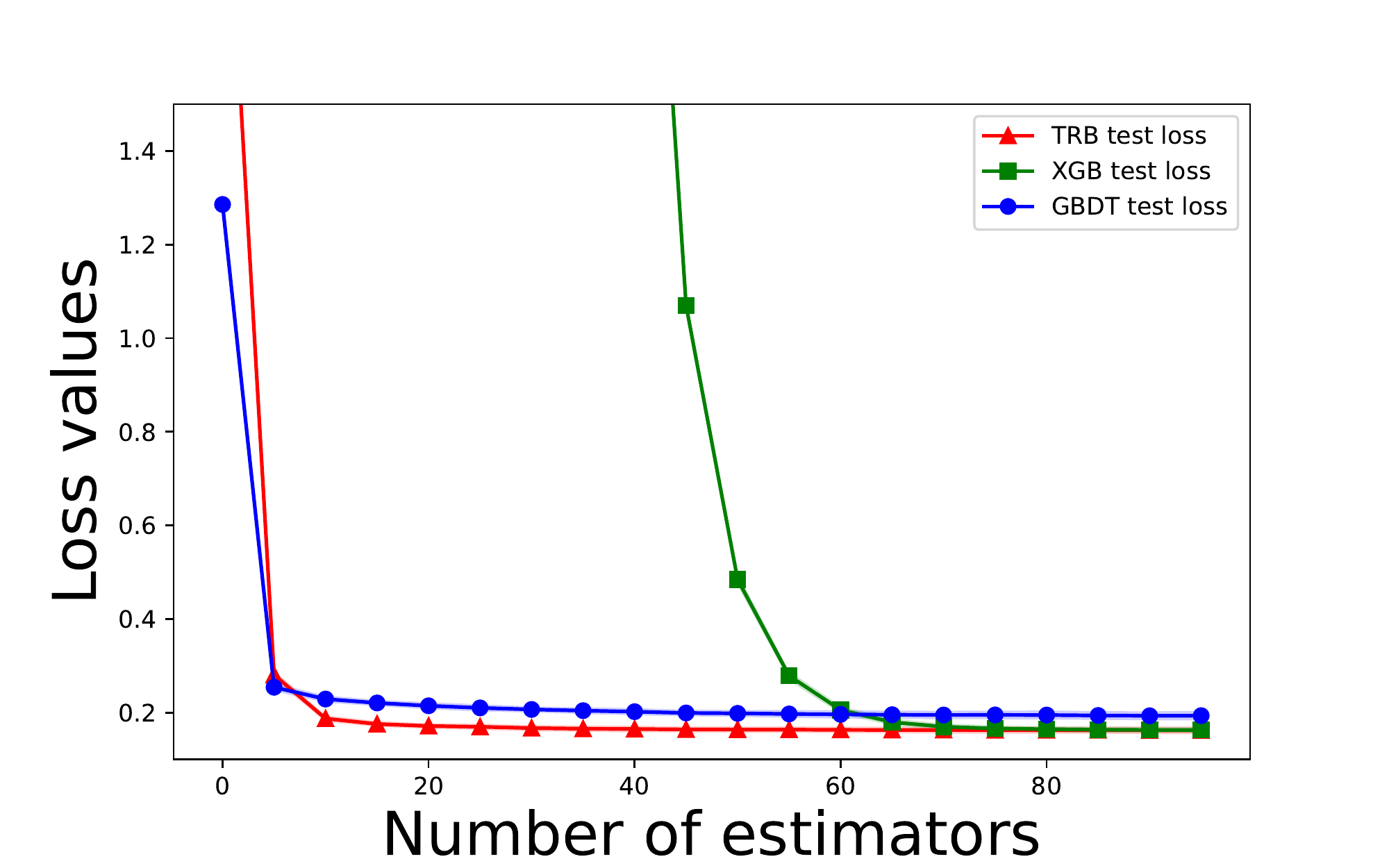}}
\subfigure[Concrete]{\includegraphics[width=0.22\textwidth]{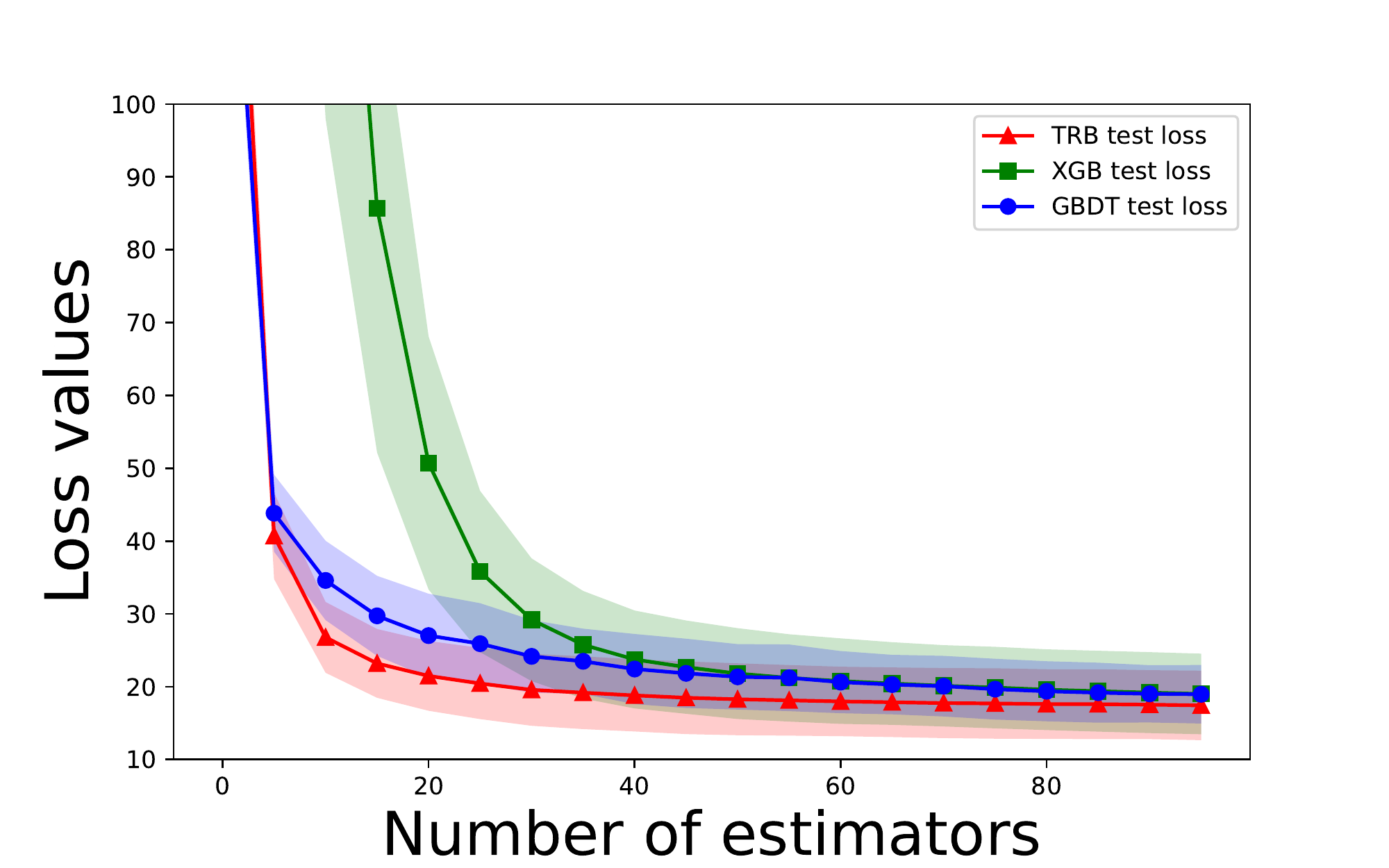}}
\subfigure[Energy]{\includegraphics[width=0.22\textwidth]{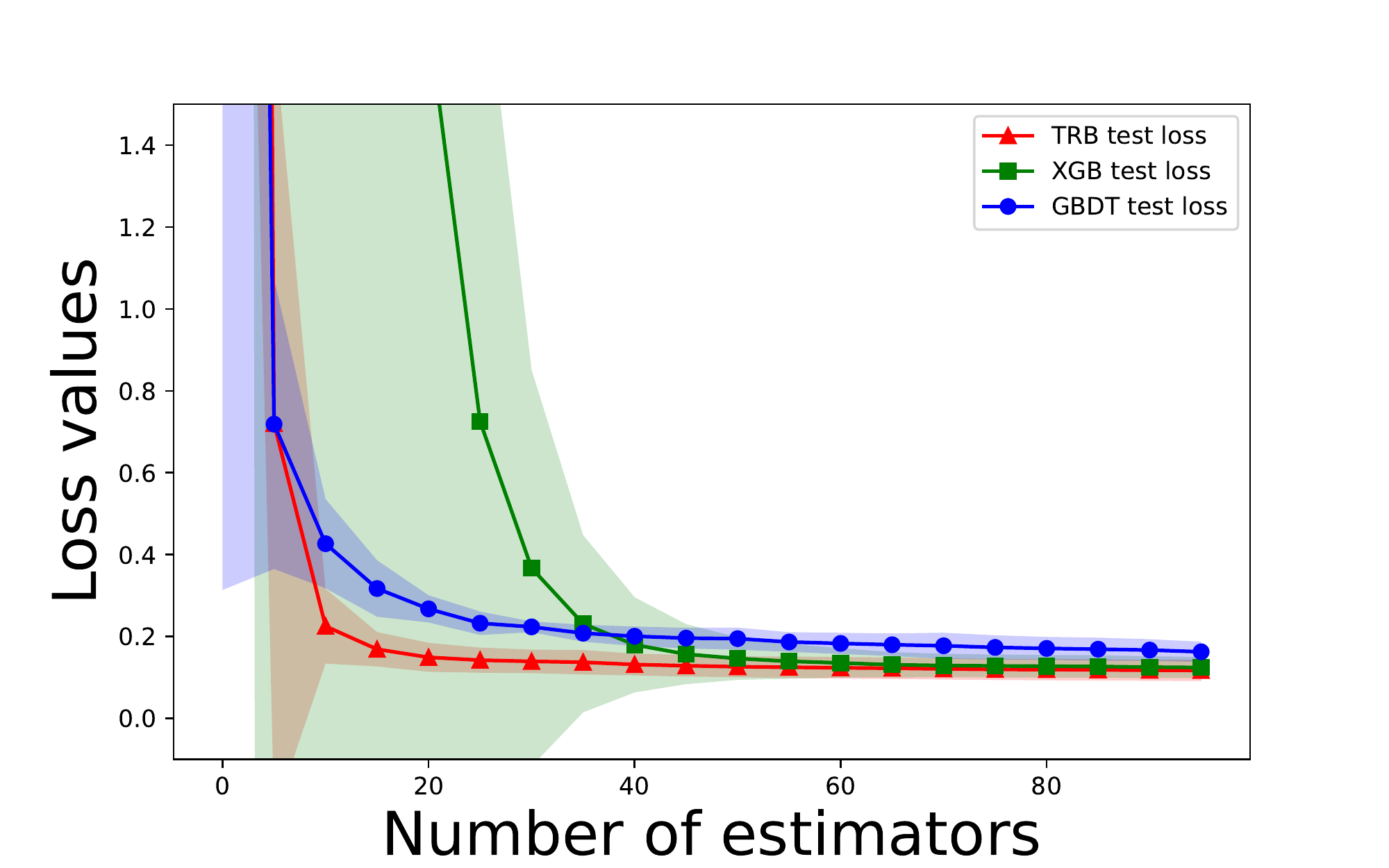}}
\subfigure[Power]{\includegraphics[width=0.22\textwidth]{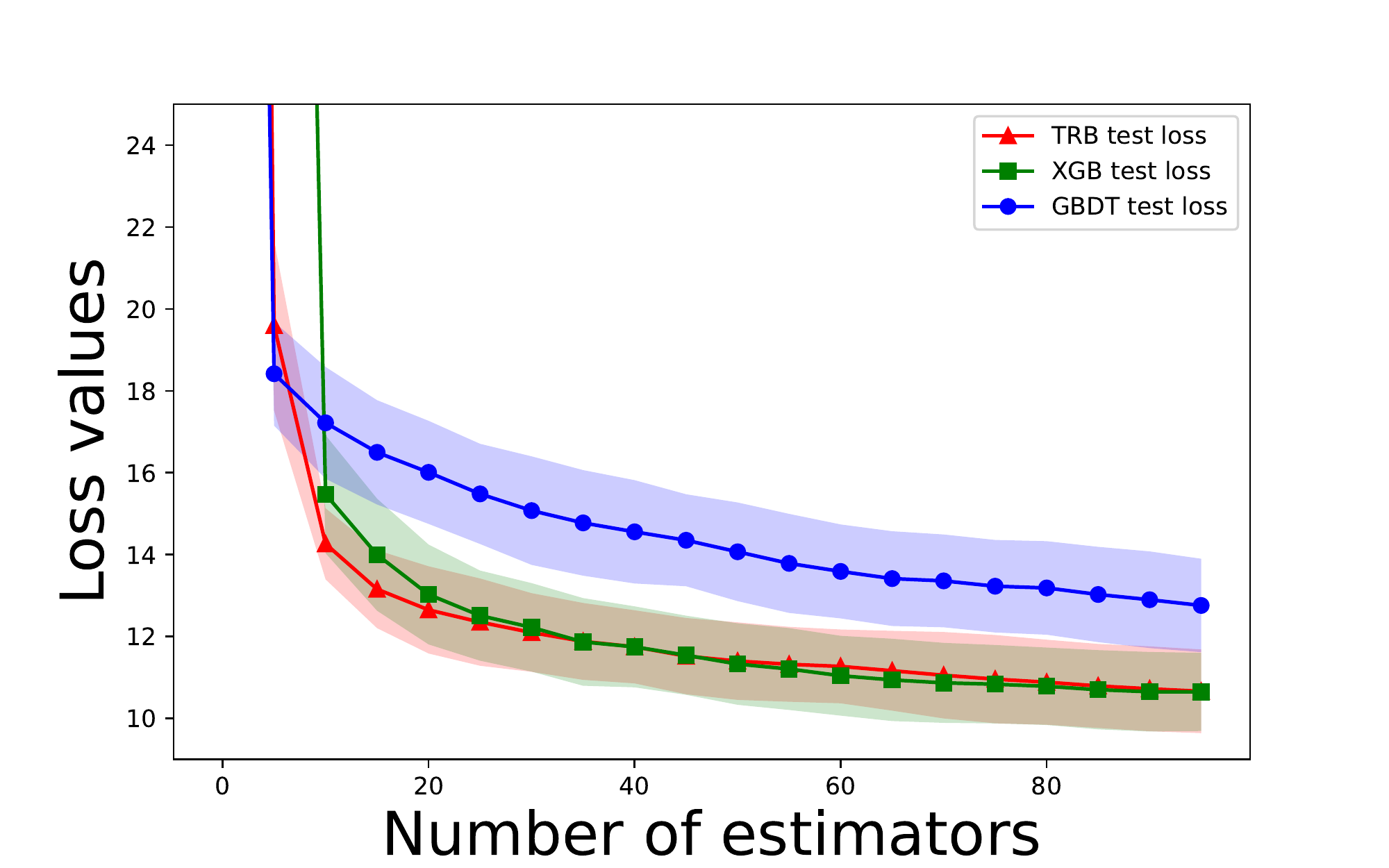}}
\subfigure[Kin8nm]{\includegraphics[width=0.22\textwidth]{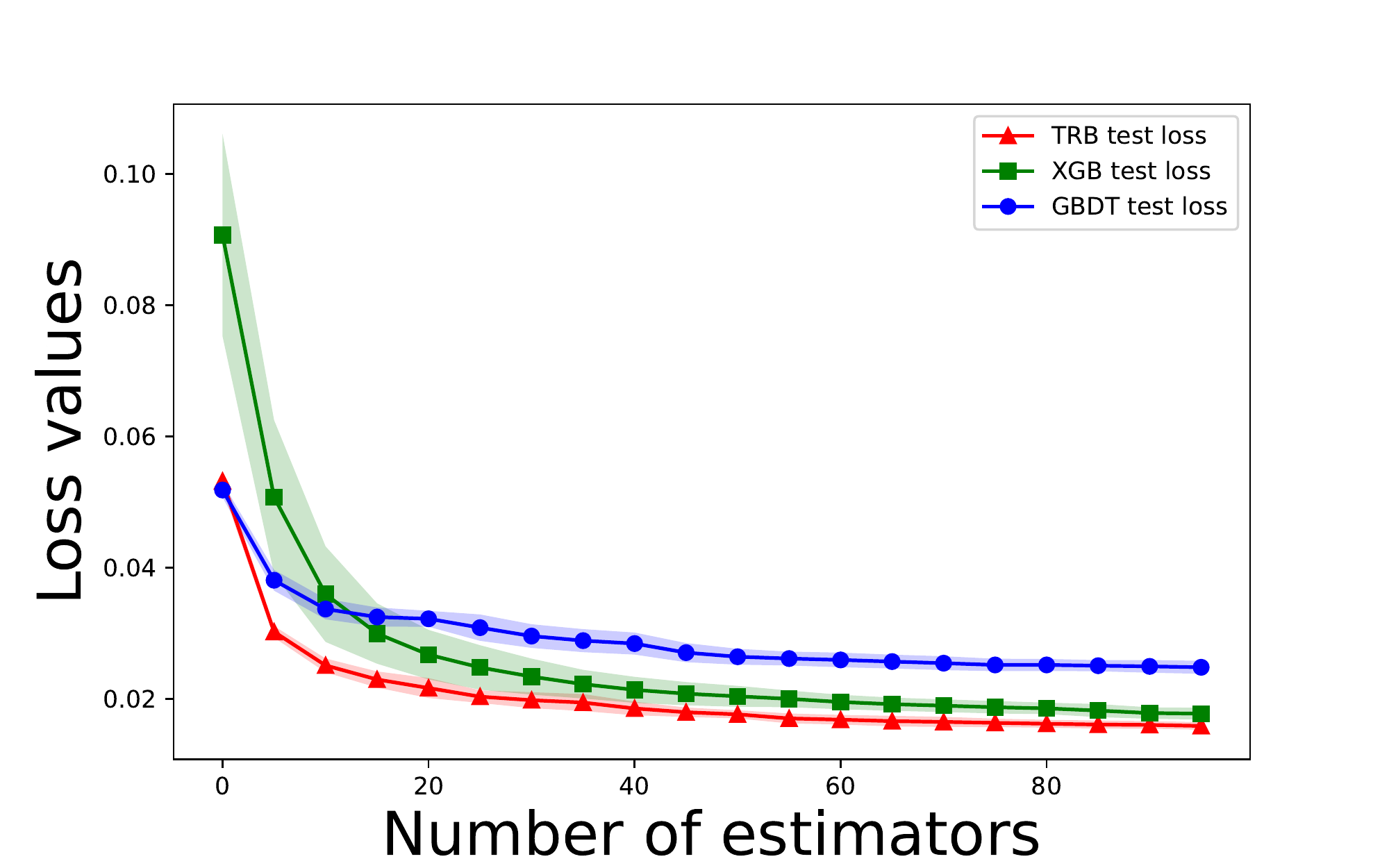}}
\subfigure[Wine]{\includegraphics[width=0.22\textwidth]{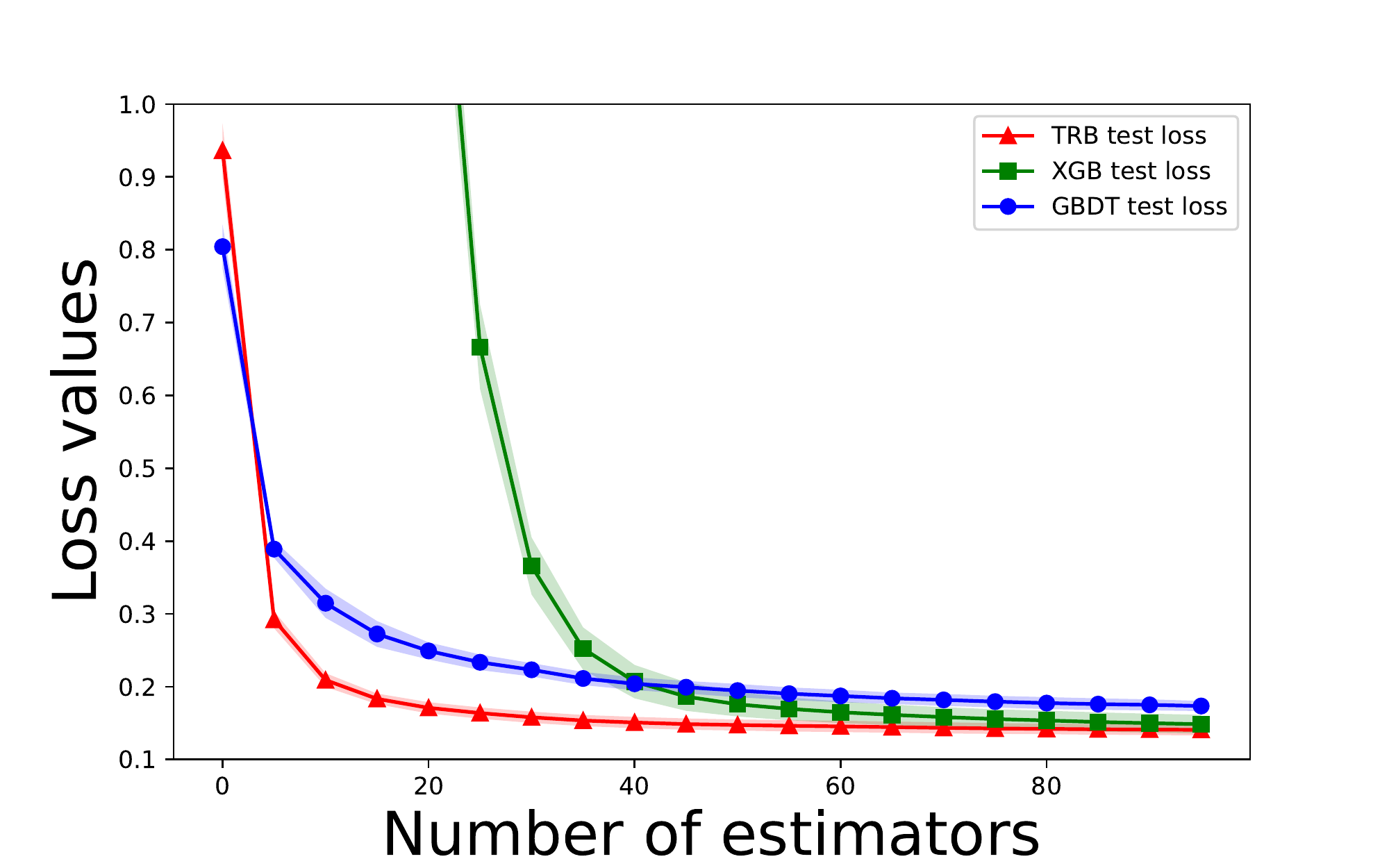}}
\subfigure[Noisy-Absolute]{\includegraphics[width=0.22\textwidth]{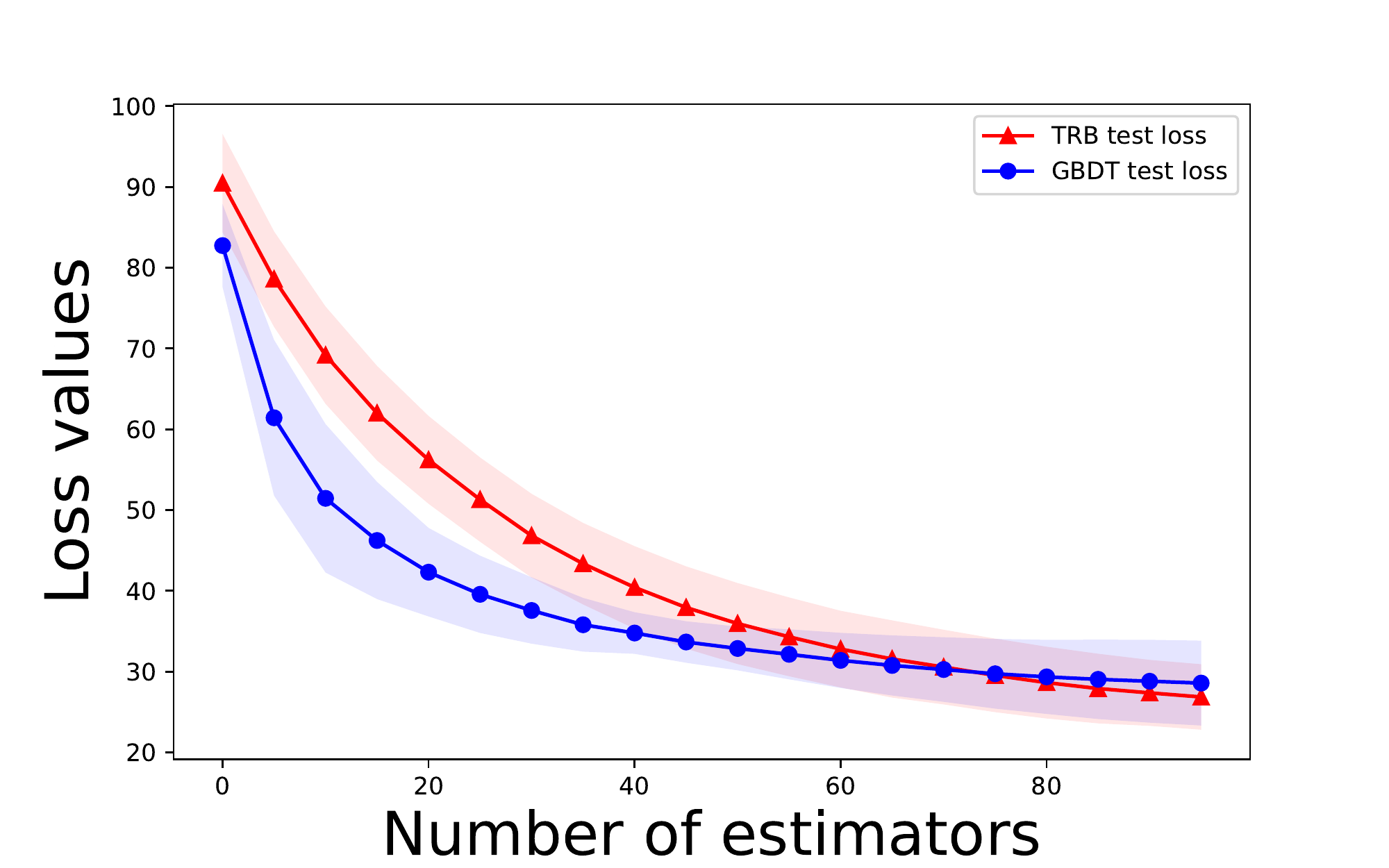}}
\subfigure[Noisy-Huber]{\includegraphics[width=0.22\textwidth]{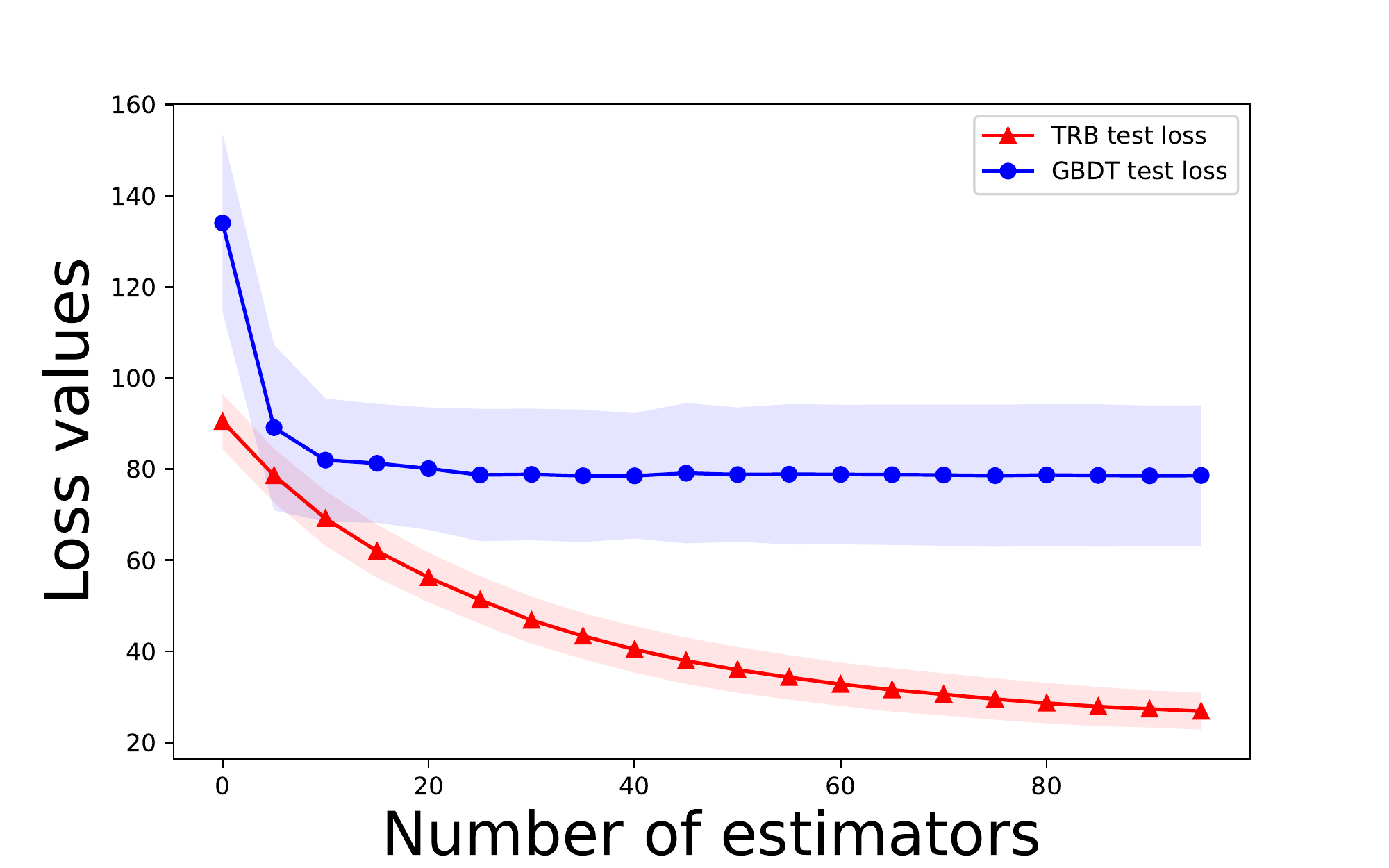}}
\caption{\textbf{Test curves of 5 trials for  GBDT, XGBoost, and TRBoost-Tree on different tasks}. Dotted lines correspond to the mean AUC/loss and the shaded areas represent the $\pm$ one standard deviation of the mean AUC/loss. Red curve: TRBoost; Green curve: XGBoost; Blue curve: GBDT.} 
\label{F.result}
\end{figure}

\subsection{Performance Comparisons}
Numerical results of different algorithms with optimal parameters are presented in Tables \ref{T.clfresult}-\ref{T.regresult}, and the test AUC and loss values of GBDT, XGBoost and TRBoost-Tree with 100 iterations are shown  in Figure \ref{F.result}.
The column named TRB-Tree denotes that the base learner is the decision tree and the approximation ratio is $\rho_1^t$, TRB-Tree(Diff) means the ratio is $\rho_2^t$, and the last two columns represent the model whose base learner is Linear Regression and Spline respectively.
To sum up, TRBoost achieves better performance and converges faster than GBDT and XGBoost in most datasets. 

Table \ref{T.clfresult} lists the results of different algorithms. It can be seen that TRBoost methods yield comparable AUC results with GBDT and XGBoost. 
At the same time,  tree-based TRBoost is more advantageous in F1-score. 
Figure \ref{F.result} (a)-(f) shows that the AUC curves obtained by TRB-Tree are similar to XGBoost and converge faster and better than those of GBDT in most cases. 

For regression problems with squared error (Figure \ref{F.result} (g)-(l)), TRBoost performs better than XGBoost and GBDT, both in terms of loss and convergence. The slow convergence of XGBoost at the beginning is because the learning rate obtained from grid search is small, and the constant Hessian also shrinks the leaf values. Although the leaf values of TRBoost also contain the Hessian, it is not multiplied by the extra learning rate, hence the convergence will be faster. We will present more details about the role of the Hessian in the next section.

From Table \ref{T.regresult}, we can see that when the loss function is not strictly convex (Noisy-Absolute and Noisy-Huber), GBDT and TRBoost are  still applicable and obtain reasonable results, while XGBoost can not. Moreover, Figure \ref{F.result} (n) shows nonzero quadratic term in loss will make TRBoost much better than GBDT both in the rate of convergence and results.

In our four boosting machines, it can be seen that the tree-based learner is still more advantageous for tabular data and both approximation ratios can obtain state-of-the-art results. Although other learners can also perform well in some cases, they are limited by the datasets. 
For example, a Linear Regression learner is fast but tends not to work well when the data distribution is complex. In contrast, Spline learners can get better results but the calculation time will be proportional to both the feature dimension and the data size. We also use other models such as SVR, Gaussian Process, and Polynomial as base learners for experiments and find that they are all restricted to the datasets. In general, if we have priors about the data, such as the approximate distribution, then a learner based on these priors will allow us to get satisfactory results with few estimators, which will greatly reduce the computational cost. Otherwise, the tree-based learner is still the first choice.

\begin{sidewaystable}
\renewcommand\arraystretch{1.5}
    \caption{Results (Mean$\pm$Std.$\%$) of different models on classification tasks.}
    \label{T.clfresult}
    \centering
    \begin{tabular}{lccccccc}
    \hline
    Dataset & Metric & GBDT & XGB & TRB-Tree & TRB-Tree(Diff) & TRB-LR &  TRB-Spline \\
    \hline
    \multirow{2}{*}{Adult} & AUC  &92.40$\pm$0.08  &\textbf{92.69$\pm$0.15}  &\textbf{92.69$\pm$0.15}  &92.65$\pm$0.15 &83.76$\pm$0.28 & 90.48$\pm$0.20\\
      & F1-score &70.77$\pm$0.16  &70.92$\pm$0.40 &\textbf{71.51$\pm$0.47}  &71.08$\pm$0.45 &31.52$\pm$1.67 & 57.91$\pm$1.20  \\
    \hline
    
    \multirow{2}{*}{German} & AUC  &76.25$\pm$3.48  &75.07$\pm$2.27 &76.99$\pm$2.66  &76.42$\pm$2.16 &\textbf{77.25$\pm$3.87}   &76.02$\pm$4.77 \\
      & F1-score  &40.20$\pm$12.13  &46.12$\pm$3.17  &\textbf{52.74$\pm$3.48}  &49.34$\pm$5.35 &41.43$\pm$22.50 & 43.94$\pm$12.34   \\
    \hline
     
    \multirow{2}{*}{Electricity} & AUC  &95.08$\pm$0.26   &\textbf{97.70$\pm$0.08}    &97.41$\pm$0.08  &97.07$\pm$0.01  &80.81$\pm$0.78   &86.26$\pm$0.42 \\
    & F1-score  &85.88$\pm$0.46  &\textbf{90.86$\pm$0.36}  &90.10$\pm$0.14  &89.43$\pm$0.36 &65.21$\pm$3.43 & 73.35$\pm$0.82  \\
    \hline 
     
    \multirow{2}{*}{Sonar}  & AUC & 92.13$\pm$2.23  &  93.93$\pm$2.49   & 94.36$\pm$1.90 & \textbf{94.59$\pm$1.99} &78.04$\pm$13.13  &75.71$\pm$7.79 \\
    & F1-score &81.35$\pm$7.53  &85.33$\pm$6.51  &84.74$\pm$5.62  &\textbf{86.77$\pm$4.19} &64.79$\pm$12.75 & 66.02$\pm$8.34   \\
    \hline
     
    \multirow{2}{*}{Credit}  &AUC & 74.91$\pm$1.50  &74.90$\pm$2.87   &74.44$\pm$1.82  &73.27$\pm$2.11 &68.24$\pm$9.60   &\textbf{76.01$\pm$4.48}  \\
    & F1-score  &83.68$\pm$1.09  &84.16$\pm$2.12  &\textbf{84.55$\pm$1.72}  &83.34$\pm$1.55 &82.63$\pm$1.50 & 83.76$\pm$1.99    \\
    \hline
     
    \multirow{2}{*}{Spam}   &AUC & 98.24$\pm$0.19  &  98.64$\pm$0.09   & \textbf{98.72$\pm$0.06} & 98.68$\pm$0.01 &95.86$\pm$0.72  &97.11$\pm$0.49\\
    & F1-score &93.36$\pm$0.56  &93.99$\pm$0.86  &94.06$\pm$0.63  &\textbf{94.19$\pm$0.59} &84.06$\pm$1.30 & 89.74$\pm$1.61   \\
    \hline
    \end{tabular}
\end{sidewaystable}

\begin{sidewaystable}
\renewcommand\arraystretch{1.6}
    \caption{Results (Mean$\pm$Std.) of different models on regression tasks.}
    \label{T.regresult}
    \centering
    \begin{tabular}{lcccccc}
    \hline
    Dataset & GBDT & XGB & TRB-Tree & TRB-Tree(Diff) & TRB-LR &  TRB-Spline \\
    \hline
    California  &0.1939$\pm$0.0097  &0.1630$\pm$0.0061 &0.1646$\pm$0.0073  &\textbf{0.1622$\pm$0.0089} &0.3876$\pm$0.0136   & 0.3115$\pm$ 0.0052\\
    
    Concrete  &18.5382$\pm$3.5001  &18.8673$\pm$5.5329   &17.5351$\pm$4.8169  &\textbf{17.4294$\pm$4.7445} &103.4283$\pm$5.9466   & 33.7457$\pm$3.6076  \\
    
    Energy  &0.1723$\pm$0.0275  &0.1237$\pm$0.0225   &\textbf{0.1228$\pm$0.0238}  &0.1273$\pm$0.0124 &9.3203$\pm$0.7809   &1.2059$\pm$0.0711  \\
    
    Power  &12.6886$\pm$1.1460  &10.6783$\pm$0.9315   &\textbf{10.6199$\pm$1.0479}  &10.6623$\pm$0.9368 &21.7999$\pm$1.1445  & 18.5266$\pm$1.1151 \\
    
    Kin8nm  &0.0251$\pm$0.0010  &0.0177$\pm$0.0009   &\textbf{0.0159$\pm$0.0006}  &0.0167$\pm$0.0004 &0.0410$\pm$0.0011   &0.0392$\pm$0.0011 \\
    
    Wine quality  &0.1735$\pm$0.0089  &0.1475$\pm$0.0130   &\textbf{0.1407$\pm$0.0077}  &0.1442$\pm$0.0082 &0.2749$\pm$0.0188    &0.2190$\pm$0.0124 \\
    
    \hline
    Noisy-Absolute  &28.6531$\pm$5.8856  &-   &\textbf{27.1661$\pm$3.3732}  &29.0376$\pm$3.8524 &86.4291$\pm$6.2855    &87.6589$\pm$6.2304 \\
    
    Noisy-Huber  &37.8729$\pm$4.1529  &-   &\textbf{26.4561$\pm$3.9762}  &31.2837$\pm$4.8995 &85.2901$\pm$5.8389    &87.3820$\pm$6.2032 \\
    \hline
    \end{tabular}
\end{sidewaystable}

\section{Theoretical Analysis}
In this section, we first prove that TRBoost has the same convergence as Newton's method when the loss is strictly convex. And when the Hessian is not positive definite, the same convergence result as the first-order method can be obtained. After that, we then discuss the impact of Hessian on our algorithm.
\subsection{Convergence analysis}
We follow the proof ideas of Sun et al. \citep{sun2014convergence} and use the same notations. 
We present the main results below and some supplemental  materials are provided in Appendix \ref{appd.theory}. For the proofs and more details of the introduced theorems, we refer interested readers to the original paper \citep{sun2014convergence}. 

\begin{theorem}[Convergence rate of TRBoost]
When the loss is the mean absolute error (MAE), TRBoost has $O(\frac{1}{T})$ rate; when the loss is MSE, TRBoost has a quadratic rate and a linear rate is achieved when the loss is Logistic loss.
\end{theorem}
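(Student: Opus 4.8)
The plan is to recast one TRBoost iteration as an (approximate) descent step in the functional space spanned by the base learners, and then to import the loss-specific convergence theorems of Sun et al.\ \citep{sun2014convergence} once the per-iteration decrease has been quantified. First I would establish a descent lemma: writing $\hat z_i^{t}=f_t(\mathbf{x}_i)$ for the fitted output and using the target definition \eqref{e.trbvalue}, I would bound the actual decrease $\mathcal{L}^{t-1}-\mathcal{L}^{t}$ from below by (a fraction of) the predicted reduction $\frac{1}{n}\sum_i (g_i^{t-1})^2/(b_i^{t-1}+\mu^{t})$ of the quadratic model, where the fraction is guaranteed by the acceptance rule $\rho^{t}>\eta$. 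Because the construction of $\mu^{t}$ (cases (a)--(b) following \eqref{e.trbtarget2}) always keeps $b_i^{t-1}+\mu^{t}>0$, the quadratic surrogate \eqref{e.objective} is a faithful local model and every accepted step is a true descent step, so $\{\mathcal{L}^{t}\}$ is non-increasing and the telescoped sum $\sum_t (\mathcal{L}^{t-1}-\mathcal{L}^{t})$ is finite. This is the engine that drives all three rates.

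Second, I would specialize the descent lemma to each loss. For \emph{MAE} the second derivative vanishes, so $b_i^{t-1}=0$ and the target \eqref{e.trbvalue} collapses to $z_i^{t}=-g_i^{t-1}/\mu^{t}$, a scaled negative gradient; TRBoost then \emph{is} functional gradient descent with step $1/\mu^{t}$, and since the subgradients of $|\cdot|$ are bounded and $\mathcal{L}$ is convex, the standard telescoping/averaging argument of \citep{sun2014convergence} yields the $O(1/T)$ rate. For \emph{MSE} the second derivative equals a positive constant, so the quadratic model \eqref{e.objective} reproduces $\mathcal{L}^{t}$ exactly; the trust-region step coincides with the exact Newton step on a quadratic, and feeding this exactness into the same framework gives the quadratic rate. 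For \emph{logistic} loss the Hessian $b_i^{t-1}=\sigma_i(1-\sigma_i)$ is positive and, on the bounded region containing the iterates, the objective is both strongly convex and Lipschitz-smooth; the resulting finite condition number upgrades the descent lemma to a contraction $\mathcal{L}^{t}-\mathcal{L}^{*}\le \kappa\,(\mathcal{L}^{t-1}-\mathcal{L}^{*})$ with $\kappa<1$, which is the linear rate.

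The step I expect to be the main obstacle is the descent lemma itself, specifically bridging the gap between the idealized targets $z_i^{t}$ solved pointwise in \eqref{e.trbsubproblem} and the output $\hat z_i^{t}$ that an actual weak learner can realize. The clean decrease holds when $f_t$ fits $\{z_i^{t}\}$ exactly, but genuine base learners only fit approximately, so I would need an edge / weak-learnability condition ensuring that the realized reduction is a fixed positive fraction of the predicted one. Quantifying that fraction, and then verifying that the adaptive rule for $\mu^{t}$ together with the selective acceptance $\rho^{t}>\eta$ (which may occasionally skip a learner) preserves both the monotonicity and the step-size bounds required by each case, is where most of the technical care lies. By contrast, the loss-specific constants — the gradient bound for MAE, the exactness of the model for MSE, and the strong-convexity and smoothness moduli for logistic loss — are routine to compute and to substitute into the three specializations.
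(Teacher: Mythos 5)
Your overall skeleton --- a per-iteration descent estimate, loss-by-loss specialization, and a weak-learnability condition to bridge the pointwise targets $z_i^t$ and what a tree can realize --- matches the paper's architecture: the paper likewise proves a one-instance case first and then the tree case via a projection-matrix formulation and Lemma~\ref{lemma.weak3}. The genuine gap is in your logistic-loss step. You invoke strong convexity and Lipschitz smoothness ``on the bounded region containing the iterates'' to get a contraction of $\mathcal{L}^t-\mathcal{L}^*$. But logistic loss is not strongly convex --- its Hessian $h=4p(1-p)$ vanishes as $|F|\to\infty$ --- and on separable data the infimum is $0$ and is not attained, so the iterates need not remain in any compact set and no positive strong-convexity modulus exists along the trajectory; the condition-number contraction you want is unavailable. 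The paper's mechanism is different and avoids all of this: it uses the loss-specific inequality $\frac{g^2}{h}\ge l$ (Theorem~\ref{thm.comp}, valid for every $p$), the change-of-Hessian bound $h(F+f)/h(F)\le e^{2|f|}$ (Lemma~\ref{lemma.change}) together with probability clipping (Assumption~\ref{assu.clap}) to control $h_\xi/(h+\mu)$, and boundedness of $\mu$ (Assumption~\ref{assu.bound}), yielding $l^+\le\bigl(1-\tfrac{1}{2}\tfrac{h}{h+\mu}\bigr)l$ --- a contraction of the loss itself toward $0$, requiring no minimizer, no compactness, and no conditioning. Without substituting something like Theorem~\ref{thm.comp} for your strong-convexity argument, the linear-rate claim does not go through.

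A secondary soft spot is MAE: for a nonsmooth convex objective, the ``standard telescoping/averaging'' subgradient argument gives $O(1/\sqrt{T})$, not $O(1/T)$. The paper instead exploits the structure $|g|=1$, so that with $\mu=1/\nu$ each step decreases the loss by exactly $\nu$, and then manufactures the recurrence $l^+\le l-cl^2$ (choosing $c$ with $c(l^t)^2<\nu$ for all $t$) so as to invoke the recurrence-to-$O(1/T)$ result (Theorem~\ref{thm.1/t}); your plan needs this or an equivalent device. Relatedly, your descent-lemma ``engine'' based on the acceptance rule $\rho^t>\eta$ only delivers monotonicity and summability of the decrements; to extract rates you must additionally lower-bound the predicted reduction against the \emph{current loss}, which is precisely the role of Theorem~\ref{thm.comp} (one instance) and Lemma~\ref{lemma.weak3} plus Corollary~\ref{cor.bound} (trees) in the paper --- you correctly anticipate weak learnability for the learner-realizability gap, but not this loss-comparison step. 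Your MSE argument (exactness of the quadratic model, step reducing to Newton's with $\mu=0$, $\nu=1$) coincides with the paper's.
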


\subsubsection{One-Instance Example}
Denote the loss at the current iteration by $l=l^t(y, F)$ and that at the next iteration by $l^{+} = l^{t+1}(y, F+f)$. Suppose the steps of gradient descent GBMs, Newton's GBMs, and TRBoost, are $-\nu g$, $\frac{-\nu g}{h}$, and $\frac{-g}{h+\mu}$, respectively. $\nu$ is the learning rate and is usually less than 1 to avoid overfitting.

For regression tasks with MAE loss, since the Hessian $h=0$, we have $l^{+}=l+gf$. Let $\mu=\frac{1}{v}$, then TRBoost degenerate into gradient descent and has the same convergence as the gradient descent algorithm, which is \textbf{$O(\frac{1}{T})$}.
\begin{proof}
    Since $g=1$ or -1, substituting $f=-\nu g$ to $l^{+}=l+gf$, we have
    \begin{equation}
        l^{+}=l-\nu g^2=l-\nu < l.
    \end{equation}
    Because $l$ is monotonically decreasing and $l > 0$, $\nu$ is a constant, thus there exists a constant $0<c<1 $ such that 
    \begin{equation}
        c(l^t)^2 < \nu, \forall t.
    \end{equation}
    Hence we have 
    \begin{equation}
        l^{+}=l-\nu < l-cl^2.
    \end{equation}   
    According to the Theorem \ref{thm.1/t} in Appendix, the $O(\frac{1}{T})$ convergence rate is obtained.
\end{proof}

For MSE loss, it is a quadratic function. Hence just let $\nu=1$ and $\mu=0$, then follow the standard argument in numerical optimization book \citep{nocedal1999numerical}, both Newton's method and trust-region method have the \textbf{quadratic convergence}.

We now discuss the convergence when the Hessian is not constant.
We prove that the trust-region method has a \textbf{linear convergence rate} like Newton's method with fewer constraints.
\begin{proof}
By the Mean Value Theorem, we have 
\begin{equation}
\label{e.one_mean}
    l^+ = l + g f + \frac{1}{2}h_{\xi}f^2,
\end{equation}
$\xi \in [0, f]$ and $h_{\xi}$ denotes the Hessian at $\xi$. Substituting $f=\frac{-g}{h+\mu}$ to \eqref{e.one_mean}, we have
\begin{equation}
    l^+ = l-\frac{g^2}{h+\mu} + \frac{1}{2}h_{\xi}\frac{g^2}{(h+\mu)^2}.
\end{equation}
In paper \citep{sun2014convergence}, it needs $f \ge 0$ to ensure $\frac{h_{\xi}}{h} \le 1$. But in our method, we do not need this constraint. Since the Hessian $h=4p(1-p)$ (Definition \ref{def.1}) is bounded by the predicted probability $p$ and a proper $\mu$ (for example $\mu=1$) can always make $\frac{h_{\xi}}{h+\mu} \le 1$. 

By applying Theorem \ref{thm.comp} we have
\begin{align}
    l^+ &= l-\frac{g^2}{h+\mu} + \frac{1}{2}\frac{h_{\xi}}{h+\mu}\frac{g^2}{h+\mu} \\
        &\le l-\frac{1}{2}\frac{g^2}{h+\mu} \nonumber \\
        &=  l-\frac{1}{2}\frac{h}{h+\mu}\frac{g^2}{h} \nonumber \\
        &\le l-\frac{1}{2}\frac{h}{h+\mu}l \nonumber \\
        &= (1-\frac{1}{2}\frac{h}{h+\mu})l. \nonumber 
\end{align}
For any $\mu$ that satisfies the inequality $\frac{h_{\xi}}{h+\mu} \le 1$ and the Assumption \ref{assu.bound}, $0 < (1-\frac{1}{2}\frac{h}{h+\mu}) < 1$ always holds, hence we obtain the \textbf{linear convergence} according to Theorem \ref{thm.linear}.
\end{proof}

\subsubsection{Tree model case}
We omit the proof for regression tasks with different loss functions since it is the same as the One-instance case.
\begin{proof}
Define the total loss $L(f)=\sum_{i=1}^{n}l(y_i, F_i+f_i)$. Write it in matrix form and apply the Mean Value Theorem, we have 
\begin{equation}
    L(f) = L(0) + \mathbf{g}^{\intercal}\mathbf{f} + \frac{1}{2} \mathbf{f}^{\intercal}\mathbf{H}_{\xi}\mathbf{f}, 
\end{equation}
where $\mathbf{g}=(g_1, \cdots, g_{n} )^{\intercal}$, $\mathbf{H}=diag(h_1,\cdots,h_{n})$ denote the instance wise gradient and Hessian, respectively.
In a decision tree, instances falling into the same leaf must share a common fitted value. We can thus write down $\mathbf{f}=\mathbf{V}\mathbf{s}$, where $  \mathbf{s} = (s_1, \cdots , s_j, \cdots , s_J)^{\intercal}\in \mathbb{R}^{J}$ are the fitted values on the $J$ leaves and $\mathbf{V}\in \mathbb{R}^{n\times J}$ is a projection matrix $\mathbf{V}=[\mathbf{v}_1, \cdots, \mathbf{v}_j, \cdots, \mathbf{v}_{J}]$, where $\mathbf{v}_{j, i}=1$ if the $j$th leaf contains the $i$th instance and 0 otherwise.
Substituting $\mathbf{f}=\mathbf{V}\mathbf{s}$ to $L(f)$ and reload the notation $L(\cdot)$ for $\mathbf{s}$ we have 
\begin{equation}
    L(\mathbf{s}) = L(0) + (\mathbf{g}^{\intercal}\mathbf{V})\mathbf{s} + \frac{1}{2} \mathbf{s}^{\intercal}(\mathbf{V}^{\intercal}\mathbf{H}_{\xi}\mathbf{V})\mathbf{s}.
\end{equation}

Denote $L(\mathbf{s})$ by $L^+$ and $L(0)$ by $L$ and replace $\mathbf{f}$ with the trust-region step
\begin{align}
\mathbf{f}&=-\mathbf{V}\mathbf{s} \\
          &= -\mathbf{V}(\mathbf{V}^{\intercal}\mathbf{H}\mathbf{V}+\mu \mathbf{I})^{-1} \mathbf{V}^{\intercal} \mathbf{g}, \nonumber
\end{align}
then we have
\begin{align}
L^+=&~L-(\mathbf{V}^{\intercal}\mathbf{g})^{\intercal}\hat{\mathbf{H}}^{-1}\mathbf{V}^{\intercal}\mathbf{g} \\
 & + \frac{1}{2} (\mathbf{V}^{\intercal}\mathbf{g})^{\intercal}\hat{\mathbf{H}}^{-1}(\mathbf{V}^{\intercal}\mathbf{H}_{\xi}\mathbf{V})\hat{\mathbf{H}}^{-1}\mathbf{V}^{\intercal}\mathbf{g}, \nonumber
\end{align}
where $\hat{\mathbf{H}}=\mathbf{V}^{\intercal}\mathbf{H}\mathbf{V}+\mu \mathbf{I}$, $I$ is the identity matrix.

Using the Lemma \ref{lemma.nodewise}, Lemma \ref{lemma.weak3}, and Corollary \ref{cor.bound} in the Appendix, we obtain the following inequality
\begin{align}
L^+\le &~L-(\mathbf{V}^{\intercal}\mathbf{g})^{\intercal}\hat{\mathbf{H}}^{-1}\mathbf{V}^{\intercal}\mathbf{g} + \frac{\tau}{2}(\mathbf{V}^{\intercal}\mathbf{g})^{\intercal}\hat{\mathbf{H}}^{-1}\mathbf{V}^{\intercal}\mathbf{g})~~~(Lemma~4)\\
 =&~L-(1-\frac{\tau}{2})(\mathbf{V}^{\intercal}\mathbf{g})^{\intercal}\hat{\mathbf{H}}^{-1}\mathbf{V}^{\intercal}\mathbf{g} \nonumber \\
 \le&~L-\lambda\gamma_{*}^2 (1-\frac{\tau}{2})\mathbf{g}^{\intercal}\mathbf{H}^{-1}\mathbf{g} ~~~ (Lemma ~5)\nonumber \\
 \le&~L-\lambda\gamma_{*}^2 (1-\frac{\tau}{2})L ~~~(Corollary~1) \nonumber \\
 =&~(1-\lambda\gamma_{*}^2 (1-\frac{\tau}{2}))L. \nonumber
\end{align}
Since $\tau$ and $\gamma_{*}$ are constants, $0 < \lambda \le \frac{h^{min}}{n(h^{max}+\mu)}$, a proper $\lambda$ can make $0 < \lambda\gamma_{*}^2 (1-\frac{\tau}{2}) < 1$  . According to the Theorem \ref{thm.linear} in the Appendix, the linear rate is achieved.
\end{proof}

\subsection{Discussions on Hessian}
\subsubsection{Impact of Hessian}
We further explore the impact of second-order approximation in Gradient Boosting, suppose the step of first-order GBMs, second-order GBMs, and TRBoost, is $-\nu g$, $\frac{-\nu g}{h}$, and $\frac{-g}{h+\mu}$,  respectively.

Figure \ref{F.clfloss} gives the loss curves of the binary classification tasks previously used. It can be seen that the loss curves of XGBoost and TRBoost converge faster and better than those of GBDT. There are two reasons for this, one is that the Hessian $h$, which is a function of the predicted probability $p$, is less than 1 and monotonically decreasing, hence it makes $\vert \frac{-\nu g}{h}\vert$ greater than $\vert-\nu g\vert$ and accelerates the decline of the objective function. The other is that $\vert\frac{\nu g}{h}\vert>\vert\nu g\vert$ makes the probability $p = \psi(\frac{\nu g}{h})$ predicted by the second-order method closer to 0 or 1 than the first-order method, so the corresponding objective function is smaller. However, while their losses are quite different, the AUC results of these methods are closer. This is because the AUC score does not depend on the probability, the rank of the prediction is more important.
For regression tasks, $h=2$ shrinks the predicted value $\frac{-\nu g}{h}$, hence the second-order algorithm converges slowly. In Figure \ref{F.result}(g-l), we can see that XGBoost is indeed the slowest converge because $\nu<$1 further narrows its predictions, TRBoost has similar convergence to GBDT because a suitable $\mu$ can make $\vert\frac{-g}{h+\mu}\vert \thickapprox \vert-\nu g\vert>\vert\frac{-\nu g}{h}\vert$.

Based on the above discussion, we claim that gradient determines the targets and hessian can further improve the results.
The learning rate is to reduce the size of the predicted values and find a better solution. However, combined with the expression of TRBoost, the learning rate seems unnecessary since we can always decrease the predicted value by shrinking the target value. Hence, $\frac{-g}{h+\mu}$ is a general target value that combines the advantages of both $-g$ and $\frac{-g}{h}$ and the adaptive adjustment mechanism of $\mu$ makes it more flexible.

\begin{figure}[ht]
\centering
\subfigure[Adult]{\includegraphics[width=0.22\textwidth]{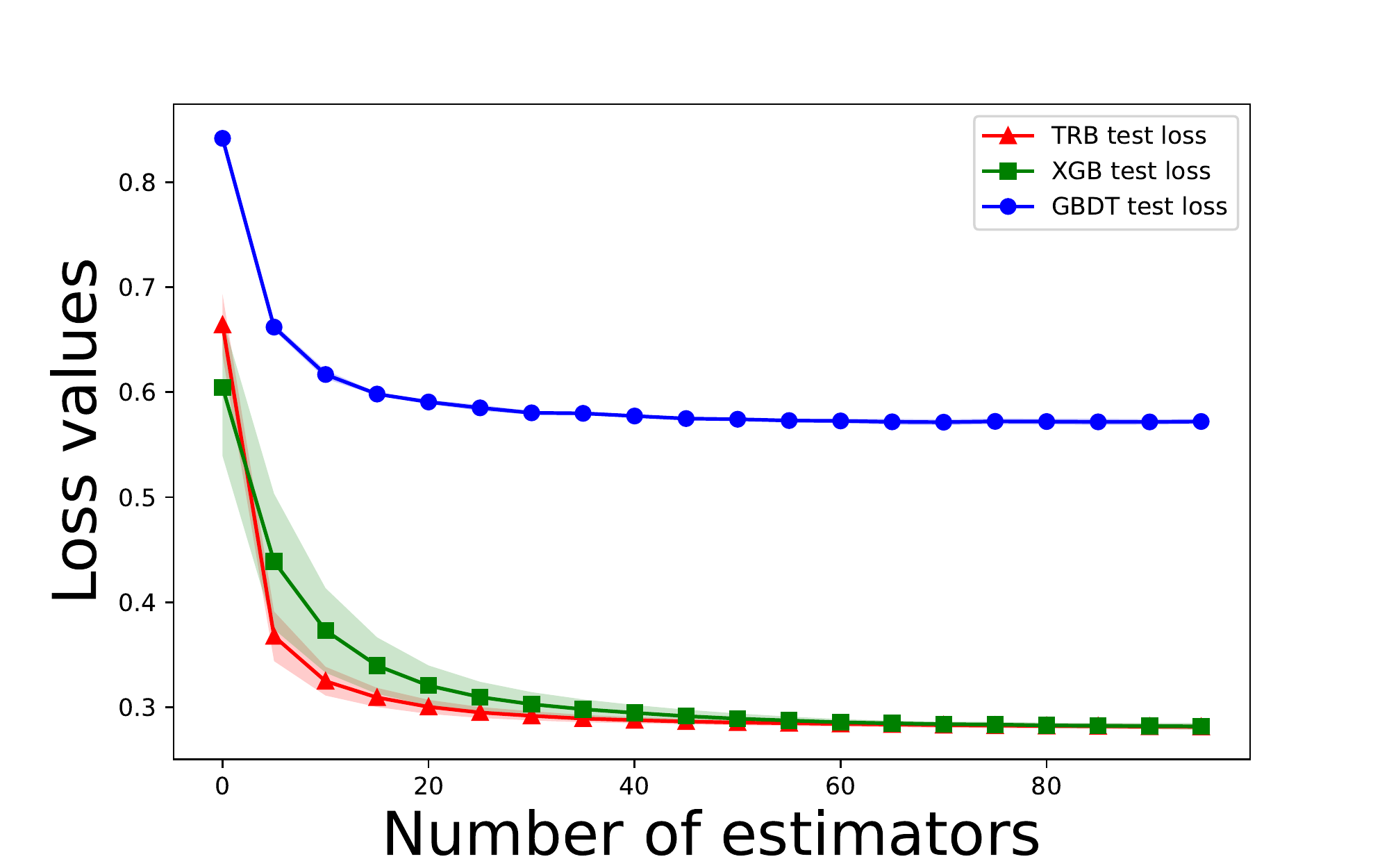}}
\subfigure[German]{\includegraphics[width=0.22\textwidth]{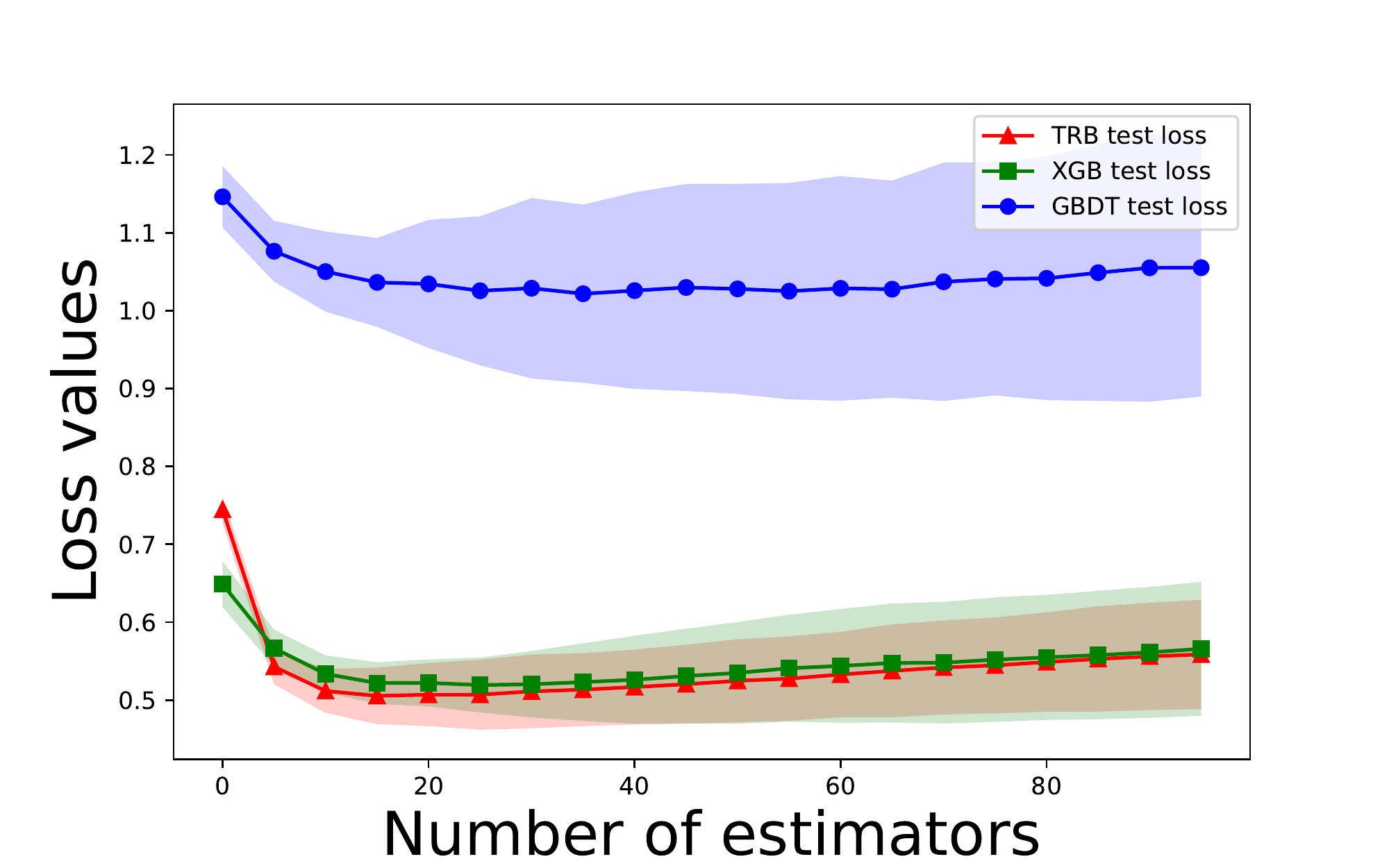}}
\subfigure[Electricity]{\includegraphics[width=0.22\textwidth]{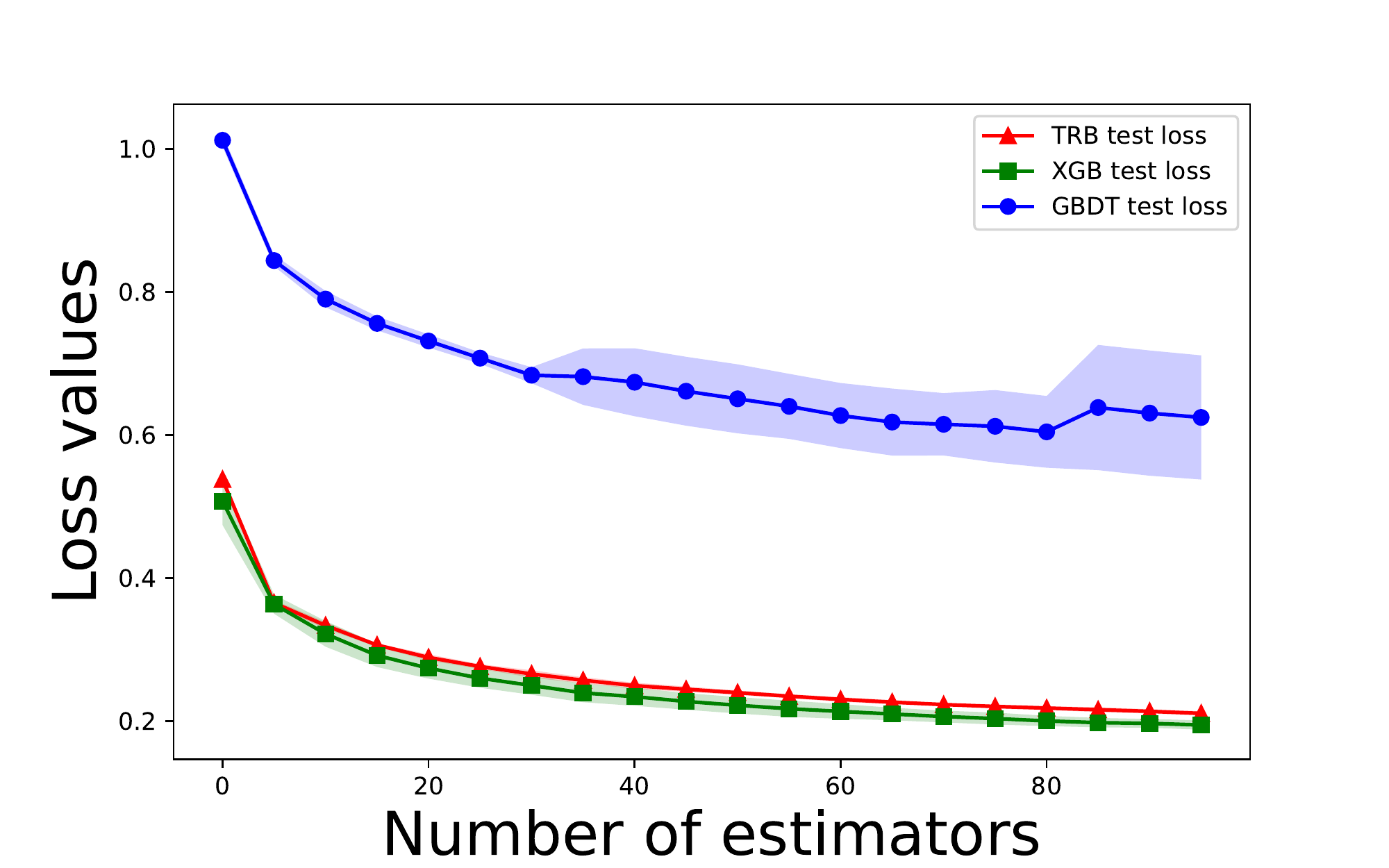}}
\subfigure[Sonar]{\includegraphics[width=0.22\textwidth]{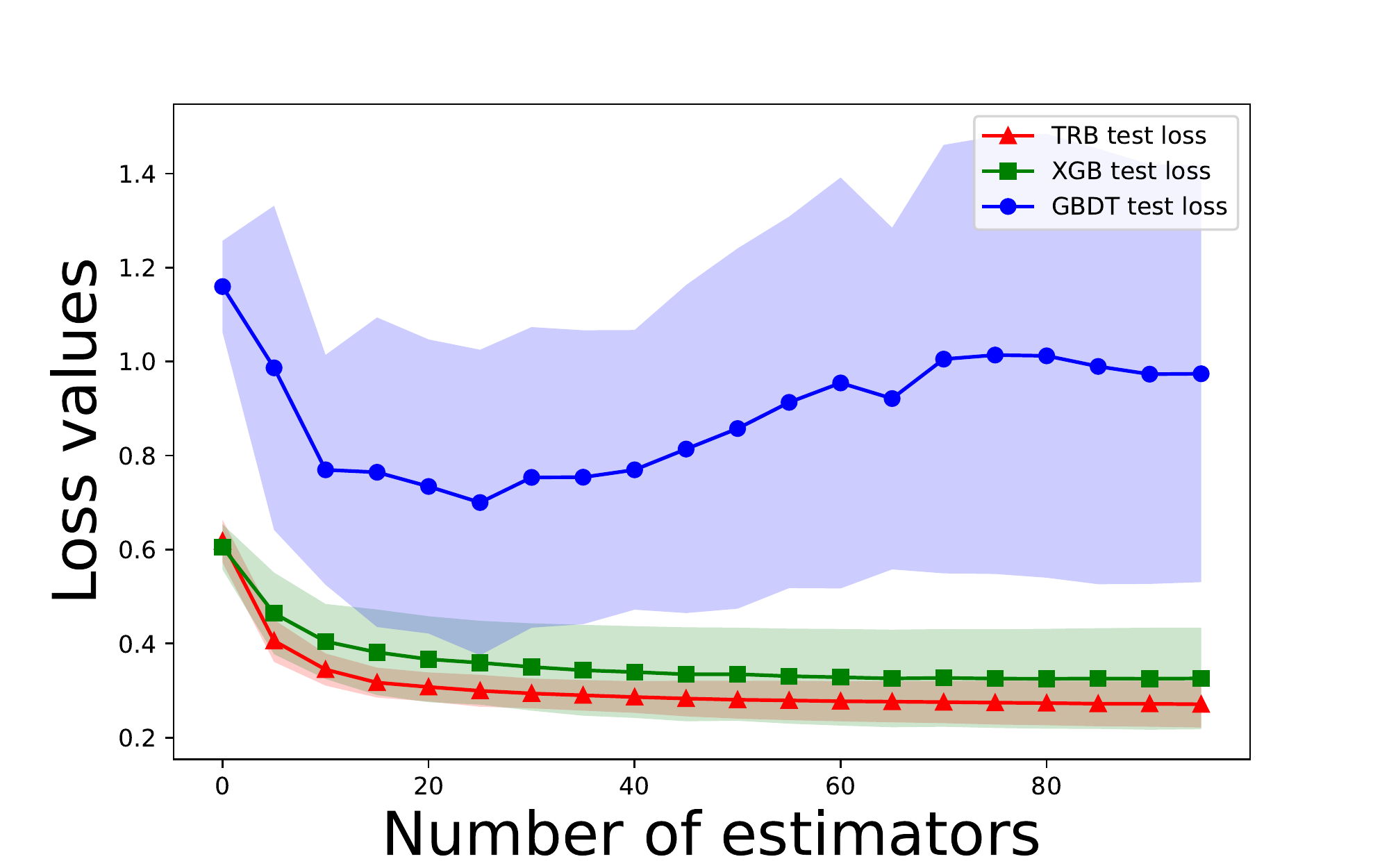}}
\subfigure[Credit]{\includegraphics[width=0.22\textwidth]{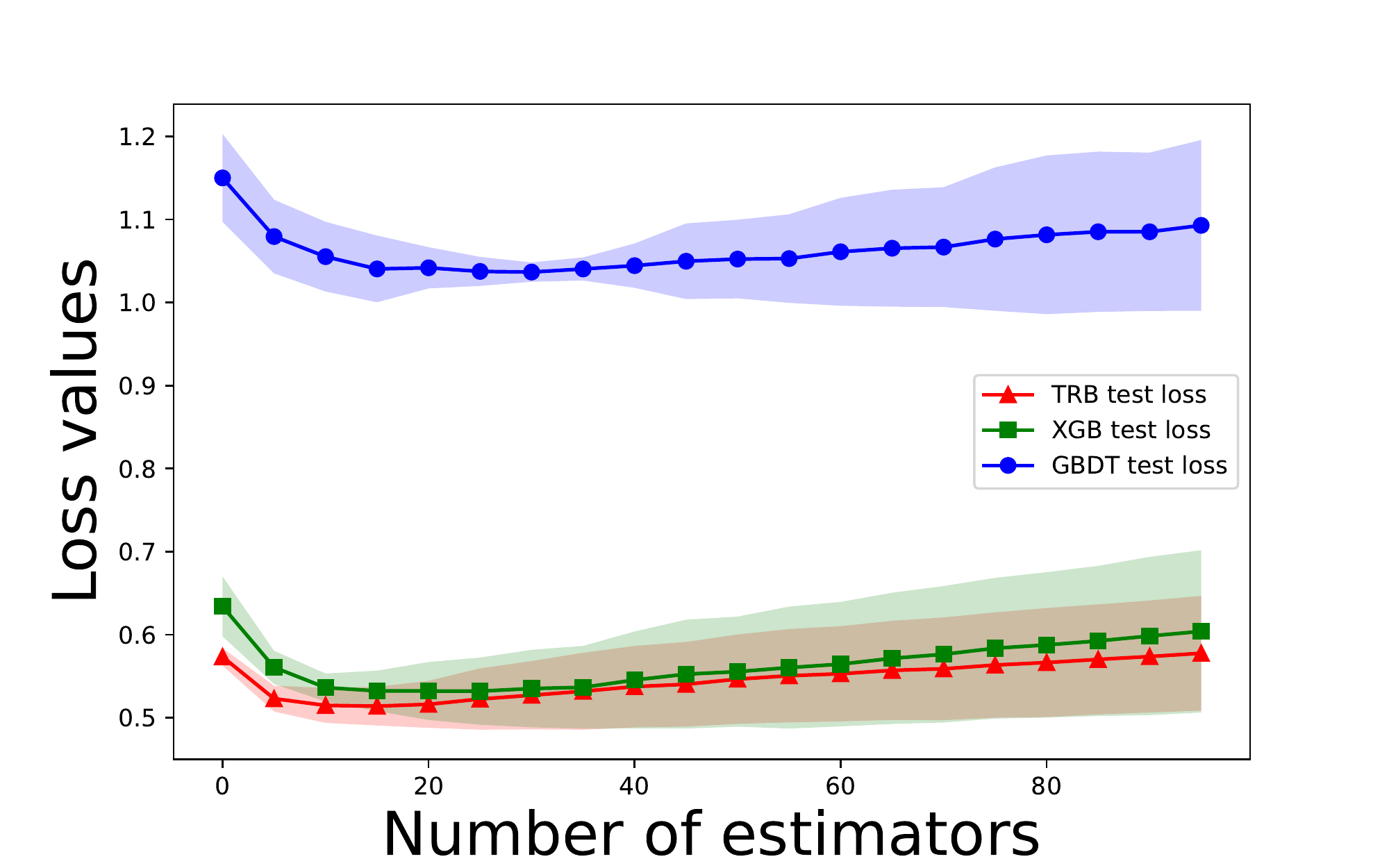}}
\subfigure[Spam]{\includegraphics[width=0.22\textwidth]{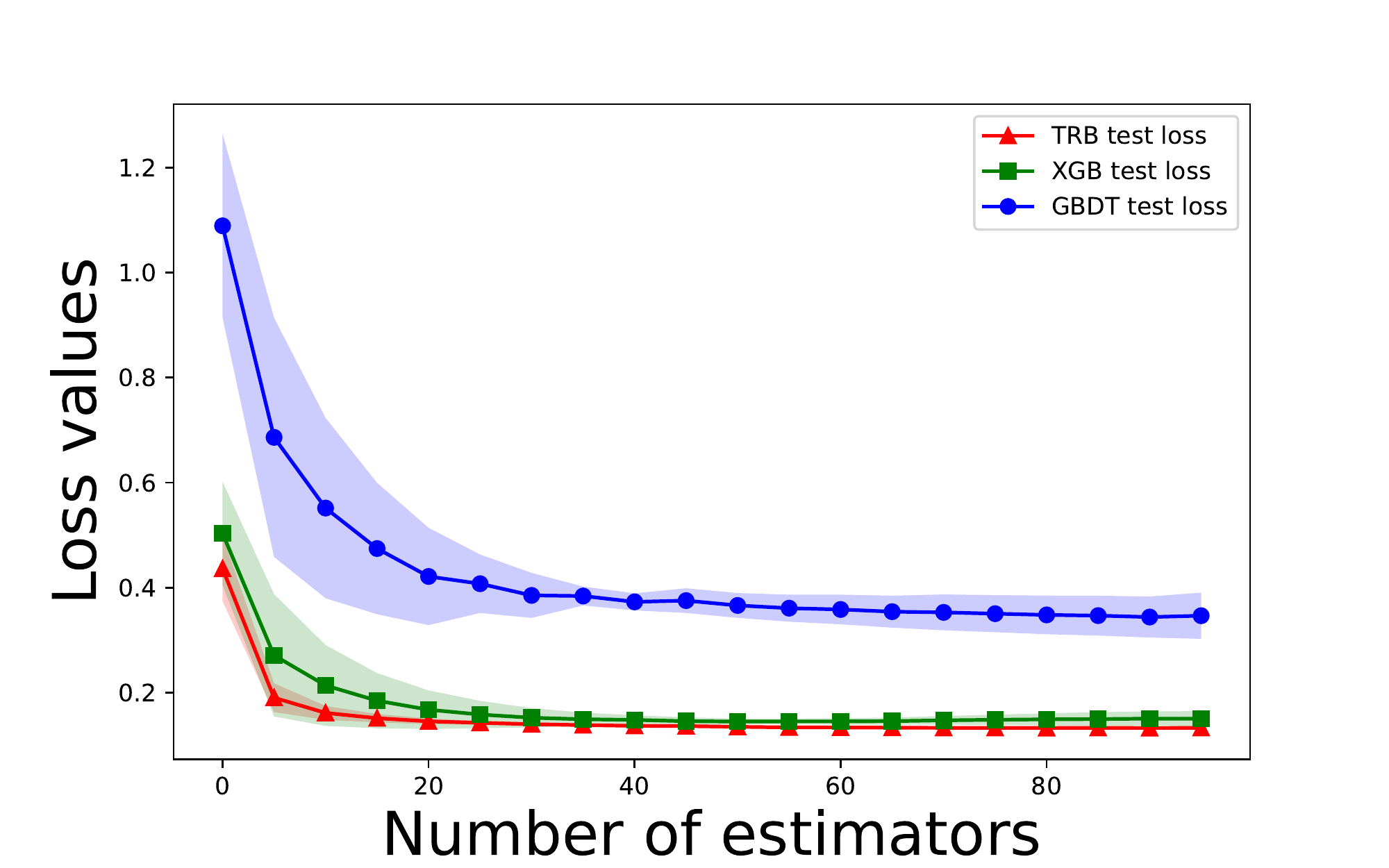}}
\caption{\textbf{Test loss curves of 5 trials for GBDT, XGBoost, and TRBoost-Tree on different classification tasks}. Dotted lines correspond to the mean loss and the shaded areas represent the $\pm$ one standard deviation of the mean loss. Red curve: TRBoost; Green curve: XGBoost; Blue curve: GBDT.} 
\label{F.clfloss}
\end{figure}

\subsubsection{Numerical Examples}
We further verify the assertion by conducting some experiments with TRBoost-Tree. We take the regression task as an example and apply the Concrete data for the test. For all results, the only difference is the choice of  $B_j^{t-1}$ and $\alpha^t$ in \eqref{e.leafvalue2}. $B_j^{t-1}=0$ means that we use the first-order approximation to the objective.

From Figure \ref{F.hessian}, we can see that when $\alpha=1$, the second-order approximation is better than the first-order method in spite of convergence and results. That is because when $B_j^{t-1}=0$, a large leaf value $C_{j}^{t}$ leads to a sharp decline in the training loss and causes overfitting. If $B_j^{t-1}>0$, it shrinks the leaf value and plays a regular role.
When $\alpha=10$, the shrinkage effect of $B_j^{t-1}$ disappears hence both approximations do nice jobs, and the results are better than those with $\alpha=1$. 
Combining the above discussions with $\lim_{C\to 0} \frac{\frac{1}{2}BC^2+GC}{GC}=1$, we conclude that the second-order approximation is rewarding only when $\alpha$ is small, i.e. the radius of trust region is large.

In numerical optimization, small steps tend to find better points but require more iterations. On the contrary, large steps can make the objective converge faster, but they may miss the optimal point. 
Hence in practice, we suggest using big $\alpha$ which means small steps to find a better model. At this time, the performance of algorithms does not rely much on the order of expansion of the objective. This also proves the generality of our method.

\begin{figure}[ht]
\centering
\subfigure{\includegraphics[scale=0.35]{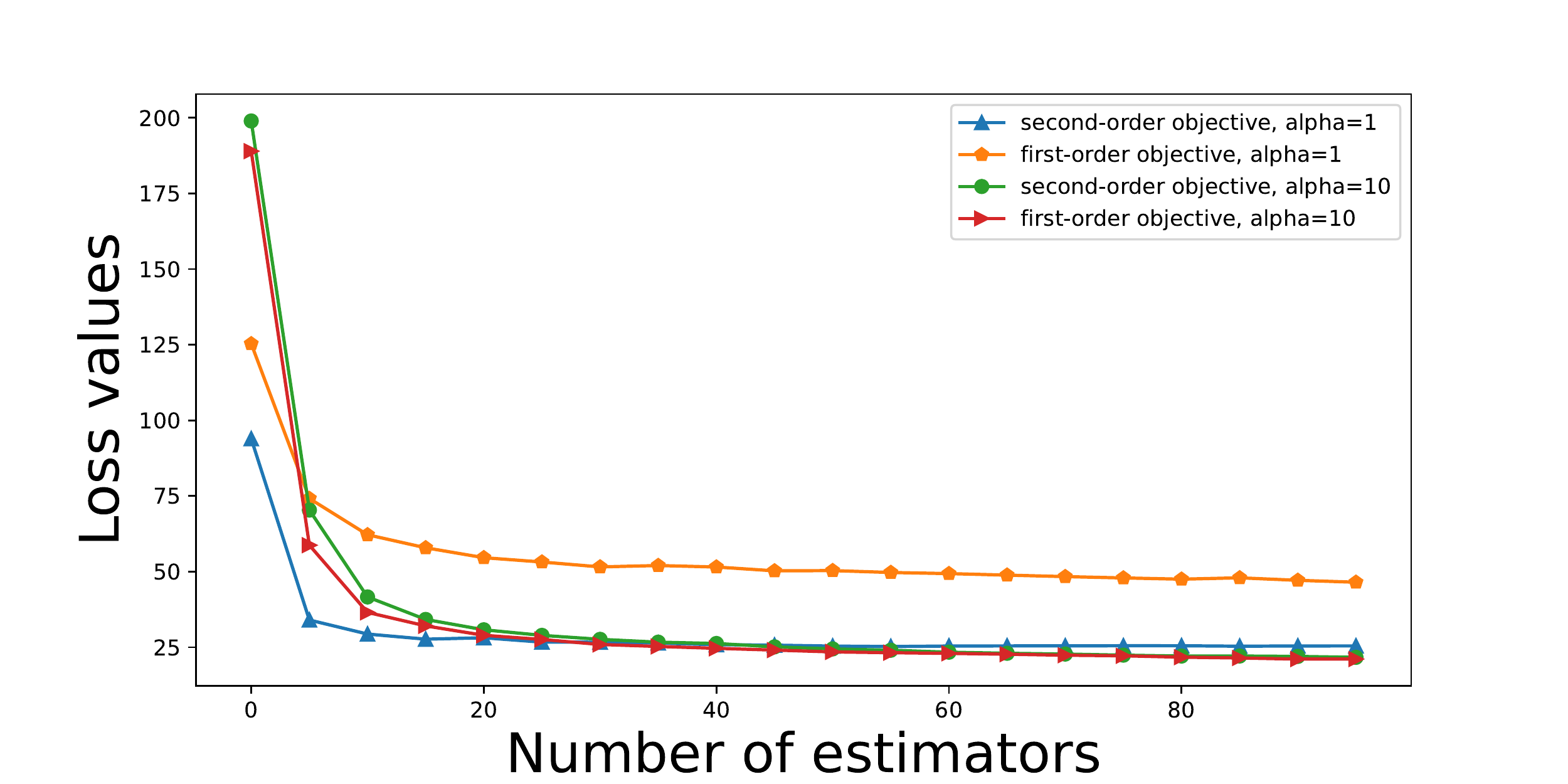}}
\caption{\textbf{Loss curves of Taylor expansion of different orders}. The orange line represents the first-order approximation result of $\alpha=1$, the blue line represents the second-order approximation result of $\alpha=1$, the red line represents the first-order approximation result of $\alpha=10$, and the green line represents the second-order approximation result of $\alpha=10$. } 
\label{F.hessian}
\end{figure}

\section{Conclusions and Future Work}
In this paper, we propose TRBoost, a generic gradient boosting machine that utilizes the Trust-region method. Our algorithm is applicable to any learner and offers the combined benefits of both first-order and second-order algorithms. Additionally, TRBoost is capable of automatically controlling the number of learners and the fitted target.
We demonstrate theoretically and numerically that our algorithm has faster convergence when dealing with strictly convex losses, outperforming the first-order algorithm and delivering comparable performance to the second-order algorithm. And it can still produce satisfactory results even when the Hessian is not positive definite.
Moreover, we conducted some analyses and experiments, which demonstrate that our approach is not affected by the degree of expansion of the loss functions. Overall, by combining these findings, TRBoost offers a more general and higher-performing solution for general supervised learning challenges.

There are many avenues for future work. First, the current version of TRBoost is implemented in a simple way. The performance can be improved by employing more advanced technology \citep{verbraeken2020survey} and better learners \citep{chen2016xgboost,ke2017lightgbm,prokhorenkova2018catboost}. Besides, since the loss functions commonly used are convex, this property may be applied to design models that converge faster while maintaining accuracy. In this case, few estimators are needed and more powerful models such as neural networks can be employed as the base learners, which is meaningful but challenging.

\begin{appendices}

\section{Related Theory}
\label{appd.theory}

\subsection{Existing theoretical results}
In this subsection, we introduce some related results proposed in the paper \citep{sun2014convergence}.
\begin{definition}[Logistic loss, gradient, Hessian]
\label{def.1}
The instance wise loss $l(\cdot, \cdot)$ is defined as
\begin{equation}
    l(y, F) = ylog(\frac{1}{p})+(1-y)log(\frac{1}{1-p}),
\end{equation}
where $y \in \{0, 1\}$ and the probability estimate $p \in [0, 1]$ is computed by a  sigmoid-like function on $F$
\begin{equation}
    p = \psi(F)=\frac{e^F}{e^F+e^{-F}}.
\end{equation}
And the gradient and Hessian of $l(\cdot, F)$ are given as 
\begin{equation}
    g(F) = 2(p-y), h(F)=4p(1-p).
\end{equation}
\end{definition}

\begin{assumption}[Clapping on $p_i$]
\label{assu.clap}
In order to avoid the numerical stability problems, we do a clapping on the output probability $p_i, i=1,\cdots,n$:
$$ p_i=\left\{
\begin{aligned}
&1-\rho, (y_i=0)\wedge(p_i > 1-\rho) \\
&\rho, (y_i=1)\wedge(p_i < \rho)\\
&p_i, otherwise
\end{aligned}
\right.
$$
for some small constant $\rho$.
\end{assumption}

\begin{theorem}[Recurrence to O($\frac{1}{T}$) rate]
\label{thm.1/t}
If a sequence $\epsilon_0 \ge \cdots \ge \epsilon_{t-1} \ge \epsilon_{t} \ge \cdots \ge \epsilon_{T} > 0$ has the recurrence relation $\epsilon_t \le \epsilon_{t-1}-c\epsilon_{t-1}^2$ for a small constant $c>0$, then we have $\frac{1}{T}$ convergence rate: $\epsilon_T \le \frac{\epsilon_0}{1+c\epsilon_0T}$.
\end{theorem}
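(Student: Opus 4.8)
The plan is to pass to the reciprocal sequence $1/\epsilon_t$ and show that it grows by at least the additive constant $c$ at each step; once that one-step estimate is in hand, the claimed $O(1/T)$ bound drops out by telescoping and one final inversion.

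First I would record the standing positivity fact that the hypotheses silently enforce. Rewriting the recurrence as $\epsilon_t \le \epsilon_{t-1}(1 - c\epsilon_{t-1})$ and combining $\epsilon_t > 0$ with $\epsilon_{t-1} > 0$ forces $1 - c\epsilon_{t-1} > 0$, i.e.\ $c\epsilon_{t-1} < 1$ for every $t$. This is precisely the condition that will legitimize the reciprocal manipulations below.

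Next I would invert the recurrence. Since both sides of $\epsilon_t \le \epsilon_{t-1}(1 - c\epsilon_{t-1})$ are positive, taking reciprocals reverses the inequality:
\begin{equation}
\frac{1}{\epsilon_t} \ge \frac{1}{\epsilon_{t-1}} \cdot \frac{1}{1 - c\epsilon_{t-1}}.
\end{equation}
The one elementary inequality I need is $\frac{1}{1-x} \ge 1 + x$ for $x < 1$ (which follows from $\frac{1}{1-x} - (1+x) = \frac{x^2}{1-x} \ge 0$); applying it with $x = c\epsilon_{t-1}$ gives the clean additive step
\begin{equation}
\frac{1}{\epsilon_t} \ge \frac{1}{\epsilon_{t-1}}(1 + c\epsilon_{t-1}) = \frac{1}{\epsilon_{t-1}} + c.
\end{equation}

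Finally I would telescope this over $t = 1, \dots, T$ (equivalently, finish by induction), obtaining $\frac{1}{\epsilon_T} \ge \frac{1}{\epsilon_0} + cT = \frac{1 + c\epsilon_0 T}{\epsilon_0}$, and invert once more to reach $\epsilon_T \le \frac{\epsilon_0}{1 + c\epsilon_0 T}$. I do not expect a genuine obstacle here: the whole argument is routine, and the only point demanding a moment's care is justifying $c\epsilon_{t-1} < 1$ so that $\frac{1}{1-x} \ge 1+x$ may be invoked — which, as noted, is guaranteed by positivity of the sequence. The monotonicity assumption $\epsilon_0 \ge \cdots \ge \epsilon_T$ is not actually required to derive the bound, but it is automatically consistent with the recurrence whenever $c\epsilon_{t-1} < 1$.
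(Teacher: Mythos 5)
Your proof is correct, and it is essentially the canonical argument for this statement: the paper itself offers no proof of Theorem~\ref{thm.1/t} (it is imported from \citep{sun2014convergence}, to which all proofs are deferred), and the proof there is the same reciprocal-plus-telescoping scheme, with the one-step bound $\frac{1}{\epsilon_t} \ge \frac{1}{\epsilon_{t-1}} + c$ obtained by dividing the recurrence by $\epsilon_t\epsilon_{t-1}$ and invoking monotonicity via $\epsilon_{t-1}/\epsilon_t \ge 1$. Your substitute step via $\frac{1}{1-x} \ge 1+x$ for $x = c\epsilon_{t-1} < 1$ is a cosmetic variant of the same idea, with the small bonus that it makes the monotonicity hypothesis visibly redundant (indeed the recurrence itself forces $\epsilon_t < \epsilon_{t-1}$ once $c\epsilon_{t-1} < 1$, which you correctly extract from positivity).
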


\begin{theorem}[Recurrence to linear rate]
\label{thm.linear}
If a sequence $\epsilon_0 \ge \cdots \ge \epsilon_{t-1} \ge \epsilon_{t} \ge \cdots \ge \epsilon_{T} > 0$ has the recurrence relation $\epsilon_t \le c\epsilon_{t-1}$ for a small constant $0<c<1$, then we have linear convergence rate: $\epsilon_T \le \epsilon_0 c^T$.
\end{theorem}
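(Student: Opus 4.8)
The plan is to prove the bound by a straightforward induction on the iteration index $t$, since the recurrence $\epsilon_t \le c\epsilon_{t-1}$ is exactly of the form that iterates multiplicatively. First I would establish the base case: at $t=0$ the claimed inequality reads $\epsilon_0 \le \epsilon_0 c^{0}=\epsilon_0$, which holds with equality. For the inductive step I would assume the bound $\epsilon_{t-1}\le \epsilon_0 c^{t-1}$ at stage $t-1$ and apply the hypothesis $\epsilon_t \le c\epsilon_{t-1}$ together with $c\ge 0$ to obtain $\epsilon_t \le c\,\epsilon_0 c^{t-1}=\epsilon_0 c^{t}$, which is precisely the claim at stage $t$. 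Specializing to $t=T$ then yields the desired $\epsilon_T \le \epsilon_0 c^{T}$.

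Equivalently, I could bypass the formal induction and simply telescope the chain of inequalities,
\[
\epsilon_T \le c\epsilon_{T-1}\le c^{2}\epsilon_{T-2}\le \cdots \le c^{T}\epsilon_0,
\]
where each step substitutes the recurrence once and uses $c>0$ to preserve the direction of the inequality when multiplying through by $c$. Both routes produce the same conclusion, so I would present the induction as the primary argument and mention the telescoping form as an immediate alternative.

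There is no genuine obstacle here; the statement is an elementary consequence of iterating a single multiplicative inequality. The only points worth a moment's care are bookkeeping ones: because the factor $c$ is nonnegative, multiplying the inductive hypothesis by $c$ does not flip the inequality; and the hypothesis $0<c<1$ is exactly what guarantees that the bound $\epsilon_0 c^{T}$ decays geometrically to $0$ as $T\to\infty$, which justifies calling this a linear convergence rate. I would also note that the assumed positivity $\epsilon_t>0$ and monotonicity $\epsilon_{t-1}\ge\epsilon_t$ are not actually required to derive the upper bound — indeed monotonicity already follows from the recurrence, since $c<1$ forces $\epsilon_t<\epsilon_{t-1}$ — so they serve only to ensure that the quantities being bounded are meaningful error terms.
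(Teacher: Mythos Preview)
Your proposal is correct; the induction (or equivalently the telescoping chain) is exactly the standard and complete argument for this elementary fact. The paper itself does not supply a proof of this theorem at all --- it is listed in the appendix under ``Existing theoretical results'' and attributed to \citep{sun2014convergence} --- so there is nothing further to compare.
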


\begin{theorem}[Bounded Newton Step]
\label{thm.newstep}
With the value clapping on $p_i, i=1,\cdots,n$, the Newton step $-\frac{\Bar{g}_j}{\Bar{h}_j}$ is bounded such that $|\Bar{g}_j /\Bar{h}_j| \le 1/(2\rho)$. Here $\Bar{g}_j=\sum_{i\in I_j}g_i$ and $\Bar{h}_j=\sum_{i\in I_j}h_i$ denote the sum of gradient and Hessian in $j$th tree node.
\end{theorem}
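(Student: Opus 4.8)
The plan is to reduce the claimed node-level bound to a per-instance bound and then combine the instances by the triangle inequality. First I would fix a single instance $i$ in node $I_j$ and show that value clapping forces $|g_i| \le \frac{1}{2\rho}\, h_i$, where by Definition~\ref{def.1} we have $g_i = 2(p_i - y_i)$ and $h_i = 4p_i(1-p_i)$, with $h_i \ge 0$ since $p_i \in [0,1]$.

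The per-instance estimate splits into the two label cases. When $y_i = 1$, I would use $|g_i| = 2|p_i - 1| = 2(1-p_i)$, so that the instance-wise Newton magnitude is $|g_i|/h_i = \tfrac{2(1-p_i)}{4p_i(1-p_i)} = \tfrac{1}{2p_i}$; Assumption~\ref{assu.clap} forces $p_i \ge \rho$ in this case, whence $|g_i|/h_i \le \tfrac{1}{2\rho}$. When $y_i = 0$, I would instead use $|g_i| = 2p_i$, giving $|g_i|/h_i = \tfrac{2p_i}{4p_i(1-p_i)} = \tfrac{1}{2(1-p_i)}$, and here the clapping forces $p_i \le 1-\rho$, i.e.\ $1-p_i \ge \rho$, so again $|g_i|/h_i \le \tfrac{1}{2\rho}$. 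In both cases this rearranges to $|g_i| \le \frac{1}{2\rho}\, h_i$.

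I would then aggregate over the node by the triangle inequality:
\begin{equation*}
|\Bar{g}_j| = \Big|\sum_{i\in I_j} g_i\Big| \le \sum_{i\in I_j} |g_i| \le \frac{1}{2\rho}\sum_{i\in I_j} h_i = \frac{1}{2\rho}\,\Bar{h}_j .
\end{equation*}
Because each $h_i \ge 0$, and because $p_i = \psi(F_i)$ stays strictly inside $(0,1)$ (the clapping only pushes it further from the endpoints, to $\rho$ or $1-\rho$), the denominator $\Bar{h}_j > 0$ is guaranteed, so dividing yields $|\Bar{g}_j/\Bar{h}_j| \le 1/(2\rho)$, as claimed.

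The argument is essentially an elementary case analysis, so there is no deep obstacle; the only points requiring care are bookkeeping the direction of the clapping inequality, which tightens $p_i$ from the \emph{lower} end when $y_i=1$ but from the \emph{upper} end when $y_i=0$, and confirming positivity of $\Bar{h}_j$ so that the ratio is well-defined before the division.
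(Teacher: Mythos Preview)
Your proof is correct. Note, however, that the present paper does not itself prove this theorem: it is listed in the Appendix under ``Existing theoretical results'' as a statement quoted from Sun et al.\ (2014), with readers explicitly referred to that source for the proof. Your elementary case-analysis argument---the per-instance inequality $|g_i| \le \tfrac{1}{2\rho}\,h_i$ established separately for $y_i=1$ and $y_i=0$ using the clapping direction, followed by aggregation via the triangle inequality and strict positivity of each $h_i$---is the natural and standard way to obtain this bound, and is exactly what one would expect the cited proof to contain.
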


\begin{theorem}[Convergence rate]
For GBMs, it has $O(\frac{1}{T})$ rate when using gradient descent, while a linear rate is achieved when using Newton descent.
\end{theorem}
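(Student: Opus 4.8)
The plan is to reduce both claims to the two scalar recurrence theorems (Theorem \ref{thm.1/t} and Theorem \ref{thm.linear}) by tracking how a single boosting step decreases the per-instance logistic loss of Definition \ref{def.1}. Since each instance contributes independently to the total loss, it suffices to analyze one instance; let $l$ denote the current loss and $l^+$ the loss after adding the step $f$. First I would invoke the Mean Value Theorem to write $l^+ = l + gf + \tfrac{1}{2}h_\xi f^2$, where $h_\xi$ is the Hessian at an intermediate point, exactly as in the one-instance TRBoost analysis already carried out above.

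For the gradient-descent case I would substitute $f=-\nu g$, giving $l^+ = l - \nu g^2\bigl(1-\tfrac{1}{2}\nu h_\xi\bigr)$. The crucial step is to compare $g^2$ with $l$ itself: using $g=2(p-y)$ together with $l=-\log p$ (for $y=1$) and $l=-\log(1-p)$ (for $y=0$), one checks that $g^2$ behaves like $4l^2$ near the optimum, while the clapping of Assumption \ref{assu.clap} keeps $p$ inside $[\rho,1-\rho]$ so that a uniform constant exists. This yields $g^2 \ge c' l^2$, hence $l^+ \le l - cl^2$ for a small $c>0$ once $\nu$ is chosen small enough that $1-\tfrac{1}{2}\nu h_\xi>0$ (recall $h=4p(1-p)\le 1$). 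Applying Theorem \ref{thm.1/t} to the monotonically decreasing loss sequence then delivers the $O(\tfrac{1}{T})$ rate.

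For Newton's method I would instead substitute $f=-\nu g/h$, obtaining $l^+ = l - \tfrac{g^2}{h}\bigl(\nu-\tfrac{1}{2}\nu^2\tfrac{h_\xi}{h}\bigr)$. Now the governing quantity is $g^2/h$, and for the logistic loss $g^2/h=(1-p)/p$ (for $y=1$), which is proportional to $l$ near the optimum rather than to $l^2$; thus $g^2/h \ge c' l$. The bounded-step guarantee of Theorem \ref{thm.newstep} together with the clapping keeps the ratio $h_\xi/h$ bounded, so the parenthesized coefficient stays positive and bounded away from both extremes. This produces a genuine contraction $l^+ \le (1-c)l$ with $0<c<1$, and Theorem \ref{thm.linear} then gives the linear rate.

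The main obstacle I anticipate is establishing the two loss-to-gradient comparisons rigorously over the full range of $p$, not merely asymptotically near the optimum: one must show that $g^2/l^2$ (gradient case) and $g^2/(hl)$ (Newton case) are each bounded below by a positive constant on the clapped interval $[\rho,1-\rho]$. This is precisely where Assumption \ref{assu.clap} is indispensable, since without it these ratios could degenerate as $p\to 0,1$. The remaining bookkeeping—summing the single-instance bound over all instances (or over tree leaves, as in the tree-model argument above) and fixing $\nu$—is routine.
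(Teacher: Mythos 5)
Your two one-instance arguments are sound and follow essentially the same route as the source of this theorem: the statement is imported from Sun et al.\ \citep{sun2014convergence} (the paper itself gives no proof, deferring to that reference), and the cited proof, like the paper's own TRBoost analogues, runs exactly through the Mean Value Theorem expansion, a gradient-to-loss comparison, and the two recurrence theorems. Your anticipated ``main obstacle'' in fact dissolves cleanly: for the gradient case, $\log(1/p)\le(1-p)/p$ gives $g^2/l^2 = 4p^2(1-p)^2/(p\log(1/p))^2 \cdot \ldots$ — more directly, $g^2 \ge 4\rho^2 l^2$ on the clapped range (the relevant range is $[\rho,1]$ for $y=1$, not $[\rho,1-\rho]$; the degenerate end is the only one clapping must exclude, since $g^2/l^2\to 4$ at the optimum end). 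For the Newton case the needed comparison $g^2/h\ge l$ is precisely Theorem \ref{thm.comp} and holds for \emph{all} $p$ with no clapping at all; clapping enters instead where you also use it, namely the step bound of Theorem \ref{thm.newstep} and the Hessian-drift control of Lemma \ref{lemma.change}.

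The genuine gap is your final sentence: summing the single-instance bound over instances or leaves is \emph{not} routine, and it is where the actual content of the theorem for GBMs lies. A tree outputs one value per leaf, so the realized per-instance step is the aggregate $-\nu\bar{g}_j/\bar{h}_j$ (or the leaf-average of gradients), not your idealized $-\nu g_i$ or $-\nu g_i/h_i$; if gradients cancel within a leaf ($\bar{g}_j=0$) the step is zero while the loss is not, and no per-instance recurrence holds, so without an additional hypothesis the theorem is simply false. The bridge is the weak-learnability machinery: Lemma \ref{lemma.weak1} and Lemma \ref{lemma.weak2} guarantee a tree whose leaf-aggregated reduction satisfies $(\mathbf{V}^{\intercal}\mathbf{g})^{\intercal}(\mathbf{V}^{\intercal}\mathbf{H}\mathbf{V})^{-1}(\mathbf{V}^{\intercal}\mathbf{g}) \ge \gamma_{*}^2\, \mathbf{g}^{\intercal}\mathbf{H}^{-1}\mathbf{g}$ with $\gamma_{*}^2 = 4\delta^2\rho/n$, and this $\gamma_{*}^2$ enters the contraction factor of the linear rate (compare the paper's own tree-model proof, which needs Lemmas \ref{lemma.nodewise} and \ref{lemma.weak3} and Corollary \ref{cor.bound} for exactly this step). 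Your proposal proves the one-instance toy version correctly but, as a proof of the stated theorem, is missing its key lemma: the rate constants must depend on the weak-learnability edge $\delta$, and no amount of bookkeeping recovers that from the per-instance bounds alone.
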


\begin{theorem}[Comparison]
\label{thm.comp}
Let $g$, $h$, and $l$ be the shorthand for gradient, Hessian, and loss, respectively. Then $\forall p$ (and thus $\forall F$), the inequality $\frac{g^2}{h}\ge l$ always holds.
\end{theorem}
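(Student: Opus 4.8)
The plan is to prove the inequality directly by substituting the closed-form expressions for the logistic loss, its gradient, and its Hessian from Definition \ref{def.1}, and then reducing everything to a single elementary one-variable inequality. Since $g = 2(p-y)$ and $h = 4p(1-p)$, the quantity $\frac{g^2}{h}$ simplifies to $\frac{(p-y)^2}{p(1-p)}$, so the claim becomes $\frac{(p-y)^2}{p(1-p)} \ge l(y,F)$ for every admissible $p$.

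Because $y$ takes only the two values $0$ and $1$, I would split into two cases. When $y=0$, the loss is $l = -\log(1-p)$ and $\frac{g^2}{h} = \frac{p}{1-p}$, so the target inequality is $\frac{p}{1-p} \ge -\log(1-p)$. When $y=1$, the loss is $l = -\log p$ and $\frac{g^2}{h} = \frac{1-p}{p}$, giving $\frac{1-p}{p} \ge -\log p$. Writing $w = 1-p$ in the first case and $w = p$ in the second, both cases collapse to the \emph{same} statement: $\frac{1}{w} - 1 \ge -\log w$ for $w \in (0,1]$.

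To finish, I would invoke the standard elementary inequality $\log x \le x - 1$, valid for all $x > 0$ with equality only at $x=1$. Applying it with $x = 1/w$ yields $\log(1/w) \le 1/w - 1$, i.e. $-\log w \le \frac{1}{w} - 1$, which is exactly what is needed; equality holds precisely when $w=1$, i.e. when the prediction is perfect. This establishes the claim in both cases and hence for all $p$ (and all $F$).

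I do not expect a serious obstacle here, since the statement reduces to a concrete one-variable inequality once the substitutions are made. The only points demanding care are the clean reduction of the two $y$-cases to the common form and the behavior at the endpoints $p \to 0^{+}$ and $p \to 1^{-}$, where both $l$ and $\frac{g^2}{h}$ diverge; the clapping of $p_i$ in Assumption \ref{assu.clap} keeps these quantities finite and the inequality well defined throughout.
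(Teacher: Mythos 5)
Your proof is correct. One thing worth noting: the paper itself contains no proof of Theorem \ref{thm.comp} --- it appears in the appendix under ``Existing theoretical results,'' imported from \citep{sun2014convergence} with readers referred to that paper for the argument --- so your write-up supplies a proof the present paper omits rather than paralleling an in-paper one. The derivation checks out line by line: from Definition \ref{def.1}, $\frac{g^2}{h} = \frac{4(p-y)^2}{4p(1-p)} = \frac{(p-y)^2}{p(1-p)}$; the two cases $y=0$ and $y=1$ give $\frac{p}{1-p} \ge -\log(1-p)$ and $\frac{1-p}{p} \ge -\log p$, which under $w=1-p$ and $w=p$ respectively collapse to the single inequality $\frac{1}{w}-1 \ge -\log w$, and this follows from $\log x \le x-1$ at $x = 1/w$ (valid for all $w>0$, not just $w\le 1$, so the endpoint direction is harmless). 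Your closing caveat is also the right one: the literal ``$\forall p$'' is degenerate at $p\in\{0,1\}$ where $h=0$, but since $p=\psi(F)\in(0,1)$ for every finite $F$, and the clapping in Assumption \ref{assu.clap} keeps $p$ bounded away from the endpoints in practice, the strict-interior inequality you prove is exactly what the convergence analysis uses.
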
 

\begin{corollary}
\label{cor.bound}
Connect Newton's objective reduction and loss reduction using Theorem \ref{thm.comp}, we have
\begin{equation}
    \mathbf{g}^{\intercal}\mathbf{H}^{-1}\mathbf{g} \ge L.
\end{equation}
\end{corollary}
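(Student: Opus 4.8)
The plan is to exploit the diagonal structure of the Hessian matrix to reduce the claimed matrix inequality to a sum of the scalar inequalities already furnished by Theorem \ref{thm.comp}. Since $\mathbf{H}=diag(h_1,\cdots,h_n)$ is diagonal with strictly positive diagonal entries---the per-instance Hessian $h_i=4p_i(1-p_i)$ of Definition \ref{def.1} stays positive because the clapping of Assumption \ref{assu.clap} keeps each $p_i$ strictly inside $(0,1)$---its inverse is simply $\mathbf{H}^{-1}=diag(1/h_1,\cdots,1/h_n)$. Consequently the quadratic form decomposes instance-wise as
\begin{equation}
    \mathbf{g}^{\intercal}\mathbf{H}^{-1}\mathbf{g} = \sum_{i=1}^{n}\frac{g_i^2}{h_i}. \nonumber
\end{equation}

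The remaining step is to apply the pointwise comparison. For each instance $i$, Theorem \ref{thm.comp} gives $\frac{g_i^2}{h_i}\ge l_i$, where $l_i=l(y_i,F_i)$ denotes the loss of that instance. Summing these $n$ inequalities and recalling that the total loss is $L=\sum_{i=1}^{n}l_i$ yields
\begin{equation}
    \mathbf{g}^{\intercal}\mathbf{H}^{-1}\mathbf{g}=\sum_{i=1}^{n}\frac{g_i^2}{h_i}\ge\sum_{i=1}^{n}l_i=L, \nonumber
\end{equation}
which is exactly the desired bound.

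There is essentially no genuine obstacle in this corollary, as it is purely a ``lifting'' of the scalar fact $g^2/h\ge l$ to vector form; the only point deserving care is the invertibility of $\mathbf{H}$, which is why I would explicitly invoke the clapping assumption to guarantee $h_i>0$ for every $i$. If one instead wished to accommodate a general loss whose Hessian can vanish, this is precisely where the argument would require modification---for example by replacing $\mathbf{H}$ with the regularized $\mathbf{H}+\mu\mathbf{I}$ used elsewhere in the paper---but under the logistic-loss setting of this appendix the stated form follows immediately.
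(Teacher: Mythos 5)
Your proof is correct and matches the argument the paper intends: Corollary \ref{cor.bound} is exactly the instance-wise inequality $g_i^2/h_i \ge l_i$ of Theorem \ref{thm.comp} summed over all $n$ instances, using the diagonal structure of $\mathbf{H}$ (with $h_i>0$ guaranteed by the clapping in Assumption \ref{assu.clap}) to decompose $\mathbf{g}^{\intercal}\mathbf{H}^{-1}\mathbf{g}=\sum_i g_i^2/h_i$. Your closing remark about replacing $\mathbf{H}$ by $\mathbf{H}+\mu\mathbf{I}$ for degenerate Hessians is a sensible observation but not needed here, since the corollary is stated in the logistic-loss setting.
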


\begin{lemma}[Weak Learnability \uppercase\expandafter{\romannumeral1}]
\label{lemma.weak1}
For a set of $n$ instances with labels $\{0, 1\}^n$ and non-negative weights $\{w_1, \cdots, w_n\}$, there exists a $J$-leaf classification tree (i.e., outputting $\{0, 1\}$  at each leaf) such that the weighted error rate at each leaf is strictly less than $\frac{1}{2}$ by at least $\delta > 0$. 
\end{lemma}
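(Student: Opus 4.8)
The plan is to prove the lemma by an explicit top-down construction of the tree combined with a finiteness argument that produces the uniform margin $\delta$. The guiding observation is that if each leaf is forced to output the \emph{weighted-majority} label of the instances it contains, then its weighted error rate is automatically at most $\frac{1}{2}$; consequently the only real content of the lemma is to sharpen this to the \emph{strict} bound $\frac{1}{2}-\delta$ at every leaf simultaneously, and with a single $\delta>0$ rather than one that degrades from leaf to leaf.

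First I would refine the tree greedily. Starting from the root — a single leaf holding all $n$ instances — I would repeatedly take any leaf that is nearly balanced in weighted label mass and split it on a feature coordinate that separates two oppositely-labelled instances inside it. Since the instance set is finite and each split strictly increases the number of leaves without ever recombining instances, the refinement must terminate; in the extreme it halts only when every leaf is a singleton, where the leaf outputs the correct label and incurs weighted error $0$. This simultaneously bounds the leaf count by $J\le n$ and yields a terminal tree in which no leaf is perfectly balanced.

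Next I would extract the uniform margin. Each leaf $j$ of the terminal tree has weighted error $e_j<\frac{1}{2}$; because there are only finitely many leaves, I may set $\delta=\frac{1}{2}-\max_j e_j>0$, so that every leaf satisfies $e_j\le\frac{1}{2}-\delta$, which is exactly the claimed property, and the existence of the desired $J$-leaf tree follows.

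The main obstacle is the degenerate leaf that is exactly balanced yet cannot be split — which occurs precisely when two oppositely-labelled instances carry an identical feature vector and equal weighted mass, so that no tree split can separate them and the weighted majority sits exactly at $\frac{1}{2}$, making $\delta>0$ impossible. Removing this obstruction requires the standard realizability (consistency) hypothesis that oppositely-labelled instances are always distinguishable by some feature; under it every near-balanced leaf admits a further split and the construction closes. I would therefore state this hypothesis explicitly at the outset and verify that it is exactly what forbids the balanced-unsplittable leaf. I would also flag that if $J$ is instead fixed in advance at a small value, as in the usual weak-learner regime, the property can no longer be forced by memorization and should be read as a weak-learnability assumption on the data-class pair rather than a claim provable from nothing.
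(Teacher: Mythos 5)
You should know at the outset that the paper contains no proof of this lemma to compare against: it is imported verbatim from Sun et al.\ (2014), the text explicitly defers all proofs to that reference, and it functions there---and in this paper---as the standard boosting \emph{weak-learnability assumption} rather than as an unconditional theorem (indeed, Lemma~\ref{lemma.weak2} literally invokes it as ``the weak Learnability assumption~\ref{lemma.weak1}''). Judged on its own terms, your construction is sound for the variant you actually prove: under the realizability hypothesis you add (oppositely-labelled instances never share a feature vector), every impure leaf can be split so as to separate two concrete instances, both children are then nonempty, the leaf count strictly increases and is bounded by $n$, so the greedy refinement terminates with label-pure leaves of weighted error $0$, and finiteness of the leaf set yields a uniform $\delta$ (one small point: a leaf carrying weight on only one label, or zero total weight, should be declared pure by convention so the error rate is well defined). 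Your two caveats are exactly where the real content lies, and both are correct: without realizability the statement is false---two identical instances with opposite labels and equal weights force a leaf with error exactly $\tfrac12$, so no $\delta>0$ exists---and when $J$ is fixed in advance, which is how the lemma is consumed downstream (the constant $\gamma_{*}^2=\frac{4\delta^2\rho}{n}$ in Lemma~\ref{lemma.weak2} is tied to the tree size the boosting algorithm actually grows), memorization with up to $n$ leaves is unavailable and the statement must be read as an assumption on the data--class pair. So the comparison is: you supply an elementary, self-contained proof of a realizable, large-$J$ surrogate, which buys a concrete existence result but with uncontrolled leaf count; the paper, following Sun et al., simply postulates the fixed-$J$ statement, which buys exactly the uniform-margin hypothesis needed to drive the linear convergence rate. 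Flagging that distinction explicitly, as you did, is the correct resolution rather than a defect.
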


\begin{lemma}[Weak Learnability \uppercase\expandafter{\romannumeral2}]
\label{lemma.weak2}
Let $\mathbf{g}=(g_1, \cdots, g_n )^{\intercal}$, $\mathbf{H}=diag(h_1,\cdots,h_n)$. If the weak Learnability assumption \ref{lemma.weak1} holds, then there exists a $J$-leaf regression tree whose projection matrix $\mathbf{V}\in \mathbb{R}^{n\times J}$ satisfies 
\begin{equation}
(\mathbf{V}^{\intercal}\mathbf{g})^{\intercal}(\mathbf{V}^{\intercal}\mathbf{H}\mathbf{V})(\mathbf{V}^{\intercal}\mathbf{g})  \ge 
\gamma_{*}^2 \mathbf{g}^{\intercal}\mathbf{H}^{-1}\mathbf{g}, 
\end{equation}
where $\gamma_{*}^2 = \frac{4\delta^2\rho}{n}$.
\end{lemma}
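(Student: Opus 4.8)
The plan is to reduce the matrix inequality to a statement about leaf-aggregated quantities and then invoke the weak learnability assumption. First I would record the structural facts about $\mathbf{V}$: since each instance lands in exactly one leaf, the columns of the $0/1$ matrix $\mathbf{V}$ have disjoint supports, so $\mathbf{V}^{\intercal}\mathbf{g}$ is the vector of node gradient sums $\bar g_j=\sum_{i\in I_j}g_i$ and, because $\mathbf{H}$ is diagonal, $\mathbf{V}^{\intercal}\mathbf{H}\mathbf{V}=\mathrm{diag}(\bar h_1,\dots,\bar h_J)$ with $\bar h_j=\sum_{i\in I_j}h_i$. Reading the middle factor as $(\mathbf{V}^{\intercal}\mathbf{H}\mathbf{V})^{-1}$, the left-hand side is exactly the node-wise Newton reduction $\sum_{j}\bar g_j^2/\bar h_j$, while the right-hand side is $\gamma_{*}^2$ times the instance-wise reduction $\mathbf{g}^{\intercal}\mathbf{H}^{-1}\mathbf{g}=\sum_i g_i^2/h_i$. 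Thus the target becomes $\sum_j \bar g_j^2/\bar h_j \ge \gamma_{*}^2\sum_i g_i^2/h_i$.

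Next I would produce the tree. I would apply Lemma~\ref{lemma.weak1} with non-negative weights $w_i=|g_i|$ and binary labels $\tilde y_i=\mathbf{1}[-g_i>0]$ encoding the descent direction (which by Definition~\ref{def.1} coincides with the true label under clapping). This returns a $J$-leaf classification tree whose partition I reuse as the regression tree defining $\mathbf{V}$. The per-leaf guarantee, weighted error below $\tfrac12-\delta$, says that on leaf $j$ the weight of the instances disagreeing with the leaf's predicted direction is at most $(\tfrac12-\delta)W_j$ with $W_j=\sum_{i\in I_j}|g_i|$. Splitting $\bar g_j$ into its agreeing and disagreeing parts then yields the margin bound $|\bar g_j|\ge 2\delta\,W_j=2\delta\sum_{i\in I_j}|g_i|$.

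Then I would assemble the constant. Squaring the margin bound and using the elementary lower bound $\big(\sum_{i\in I_j}|g_i|\big)^2\ge\sum_{i\in I_j}g_i^2$ gives $\bar g_j^2\ge 4\delta^2\sum_{i\in I_j}g_i^2$. Dividing by $\bar h_j$, using $h_i\le 1$ (hence $\bar h_j\le n$), and summing over the leaves produces $\sum_j \bar g_j^2/\bar h_j\ge \tfrac{4\delta^2}{n}\|\mathbf{g}\|^2$. Finally I would call on the clapping Assumption~\ref{assu.clap}, which keeps $p_i$ away from the loss-blowup extreme and thereby bounds each Hessian below by a multiple of $\rho$; this converts $\|\mathbf{g}\|^2=\sum_i g_i^2$ into $\rho\sum_i g_i^2/h_i=\rho\,\mathbf{g}^{\intercal}\mathbf{H}^{-1}\mathbf{g}$ and collects the stated factor $\gamma_{*}^2=4\delta^2\rho/n$.

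The main obstacle is the middle step: turning the purely combinatorial weighted-error guarantee of Lemma~\ref{lemma.weak1} into a quadratic lower bound on $\bar g_j^2$, and tracking the constant exactly. Two points need care. First, the margin must be converted into a bound on $|\bar g_j|$ rather than on instance counts, which is why the weights are taken to be $|g_i|$ and why the lower bound $\big(\sum|g_i|\big)^2\ge\sum g_i^2$ (and not a Cauchy--Schwarz upper bound, which would point the wrong way) is the right tool. Second, the appearance of $\rho$ in $\gamma_{*}^2$ hinges on a lower bound for $h_i=4p_i(1-p_i)$; in the regime where $h_i$ is small the corresponding $g_i$ is also small, so one must verify that the clapping bound yields the clean factor $\rho$ rather than a weaker $h^{\min}$, matching the derivation of Sun et al.~\citep{sun2014convergence}.
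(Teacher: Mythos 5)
First, a point of reference: the paper contains no proof of this lemma. It sits in the appendix under ``Existing theoretical results'' and is imported from \citep{sun2014convergence}, with the reader explicitly referred to that paper for proofs; so your attempt can only be measured against the source's argument. Your skeleton does match it: reading the middle factor as $(\mathbf{V}^{\intercal}\mathbf{H}\mathbf{V})^{-1}$ (the printed statement has a typo, confirmed by the way the lemma is used with $\hat{\mathbf{H}}^{-1}$ in Lemma \ref{lemma.weak3} and in the tree-model convergence proof), invoking Lemma \ref{lemma.weak1} with weights $w_i=\vert g_i\vert$ and binary labels given by the sign of $-g_i$ (which encodes $y_i$, since $-g_i=2(y_i-p_i)$), and extracting the per-leaf margin $\vert\bar g_j\vert\ge 2\delta W_j$ with $W_j=\sum_{i\in I_j}\vert g_i\vert$ --- all of this is exactly the standard route.

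The genuine gap is in your final constant-assembly step, at precisely the point you flagged but did not resolve. You assert that the clapping Assumption \ref{assu.clap} ``bounds each Hessian below by a multiple of $\rho$,'' i.e.\ $h_i\ge c\rho$, in order to convert $\sum_i g_i^2$ into $\rho\sum_i g_i^2/h_i$. This is false: clapping clamps $p_i$ only on the mistake side ($y_i=0$ with $p_i>1-\rho$, or $y_i=1$ with $p_i<\rho$), so for a well-classified instance with $y_i=0$ and $p_i=\epsilon$ small one has $h_i=4\epsilon(1-\epsilon)\to 0$, and the needed inequality $g_i^2\ge\rho\,g_i^2/h_i$ reads $4\epsilon^2\ge\rho\epsilon/(1-\epsilon)$, which fails once $\epsilon<\rho/4$. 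What clapping actually yields is a bound relative to the \emph{gradient}, not a uniform one: writing $h_i=2p_i\vert g_i\vert$ when $y_i=1$ and $h_i=2(1-p_i)\vert g_i\vert$ when $y_i=0$, clapping gives $h_i\ge 2\rho\vert g_i\vert$, equivalently $g_i^2/h_i\le\vert g_i\vert/(2\rho)$. The repair is to keep first powers of $\vert g_i\vert$ on both sides instead of detouring through $\Vert\mathbf{g}\Vert^2$: from $\bar g_j^2\ge 4\delta^2 W_j^2$ and $\bar h_j\le 2W_j$ (using $h_i\le 2\vert g_i\vert$) one obtains $\sum_j\bar g_j^2/\bar h_j\ge 2\delta^2\sum_i\vert g_i\vert$, while the gradient-relative bound gives $\mathbf{g}^{\intercal}\mathbf{H}^{-1}\mathbf{g}\le\frac{1}{2\rho}\sum_i\vert g_i\vert$; combining the two yields the claim with constant $4\delta^2\rho\ge 4\delta^2\rho/n$, which covers the stated $\gamma_{*}^2$. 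Your other ingredients ($W_j^2\ge\sum_{i\in I_j}g_i^2$, $h_i\le 1$ hence $\bar h_j\le n$) are correct but end up unnecessary once the last step is done properly.
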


\begin{lemma}[Change of Hessian]
\label{lemma.change}
For the current $F \in \mathbb{R}$ and a step $f \in \mathbb{R}$, the change of the Hessian can be bounded in terms of step size $|f|$:
\begin{equation}
    \frac{h(F+f)}{h(f)} \le e^{2|f|}.
\end{equation}
\end{lemma}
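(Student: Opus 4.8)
The plan is to reduce the lemma to an elementary Lipschitz estimate on $\log h$. First I would write the Hessian in closed form. Starting from Definition~\ref{def.1}, with $p=\psi(F)=\frac{e^F}{e^F+e^{-F}}$, a direct simplification gives $h(F)=4p(1-p)=\frac{4}{(e^F+e^{-F})^2}=\operatorname{sech}^2 F=\frac{1}{\cosh^2 F}$. This closed form is the key observation; everything that follows is routine.

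Next I would study the function $F\mapsto \log h(F)=-2\log\cosh F$. Differentiating, $\frac{d}{dF}\log h(F)=-2\tanh F$, and since $|\tanh F|\le 1$ for every $F\in\mathbb{R}$, the map $F\mapsto \log h(F)$ is $2$-Lipschitz. Applying the Mean Value Theorem on the interval between $F$ and $F+f$ yields $\bigl|\log h(F+f)-\log h(F)\bigr|=2\,|\tanh\xi|\,|f|\le 2|f|$ for some $\xi$ lying between $F$ and $F+f$.

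Finally I would exponentiate. The one-sided consequence $\log h(F+f)-\log h(F)\le 2|f|$ gives $\frac{h(F+f)}{h(F)}\le e^{2|f|}$, which is the claimed bound; here the denominator is the Hessian $h(F)$ at the current point, consistent with the lemma's stated purpose of bounding how the Hessian changes over a step of size $|f|$.

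I do not expect a genuine obstacle in this argument: once the closed form $h=\operatorname{sech}^2$ is available, the uniform bound $|\tanh|\le 1$ does all the work, and in particular no appeal to the clapping Assumption~\ref{assu.clap} is required. The only point demanding a little care is the initial algebraic simplification of $4p(1-p)$ into $\operatorname{sech}^2 F$; an alternative route that attempts to control the change of $p$ directly, rather than passing through $\log h$, would be considerably messier, so recognizing the logarithmic-derivative structure is exactly what keeps the proof short.
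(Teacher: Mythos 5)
Your proof is correct. One point of context first: the paper itself gives no proof of this lemma --- it sits in the subsection of ``existing theoretical results'' imported from \citep{sun2014convergence}, with readers referred there for proofs --- so there is no in-paper argument to compare against, only the original one. Your route is sound as it stands: the closed form $h(F)=4p(1-p)=\operatorname{sech}^2F$ is right, $\frac{d}{dF}\log h(F)=-2\tanh F$ gives the uniform $2$-Lipschitz bound on $\log h$, and exponentiating the Mean Value Theorem estimate yields the claim. You are also right on the two side remarks: the bound is global in $F$ and needs no appeal to the clapping Assumption \ref{assu.clap}, and the denominator in the statement must be read as $h(F)$ rather than $h(f)$ --- a typo, consistent with how the lemma is invoked in the proof of Lemma \ref{lemma.nodewise}, where the step $h(F_i+\eta_j)\le h(F_i)\,e^{2|\eta_j|}$ is what is actually used (note the paper there also drops an absolute value, writing $e^{2\eta_j}$). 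For comparison, the original argument in \citep{sun2014convergence} is a direct algebraic estimate rather than a differential one: from $h(F)=4/(e^F+e^{-F})^2$ one writes $\frac{h(F+f)}{h(F)}=\bigl(\frac{e^F+e^{-F}}{e^{F+f}+e^{-(F+f)}}\bigr)^2$ and bounds the denominator termwise via $e^{F+f}\ge e^F e^{-|f|}$ and $e^{-(F+f)}\ge e^{-F}e^{-|f|}$, so the ratio is at most $e^{2|f|}$ with no calculus at all. Your Lipschitz-on-$\log h$ version is the differential form of the same fact; the termwise bound is marginally more elementary, while yours makes transparent where the constant $2$ comes from (it is $\sup_F|(\log h)'(F)|=2\sup_F|\tanh F|$) and extends verbatim to any loss whose Hessian has bounded logarithmic derivative.
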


\subsection{New theoretical results proposed in this paper}
\begin{assumption}[Boundedness of $\mu$]
\label{assu.bound}
In the Algorithm \ref{alg.trgbm}, we can see that $\mu^{t}$ is monotonically increasing, and it will become infinite as the number of iterations increase, which makes the step $\mathbf{p}_k$ go to 0. In order to avoid this situation, in practice we assume $\mu^t$ is bounded, i.e., $\mu^t \in [\mu^0, \mu^{max}]$, $\forall t$.
\end{assumption}

\begin{theorem}[Boundedness of Hessian]
\label{thm.truststep}
Since $\mu$ and Hessian $h$ are bounded, there exists a constant $\lambda$ that satisfies 
\begin{equation}
    \frac{1}{\sum_{j\in I_j}(h_j+\mu)} \ge \lambda\frac{1}{\sum_{j\in I_j}h_j}, \forall j=1,\cdots,J.
\end{equation}
\begin{proof}
Let $h^{max}$ and $h^{min}$ be the maximum and minimum values of the Hessian respectively.
The minimum value on the left side of the inequality is $\frac{1}{n(h^{max}+\mu)}$, and the maximum value on the right side is $\frac{1}{h^{min}}$. Just let 
\begin{equation}
    \lambda \le \frac{h^{min}}{n(h^{max}+\mu)},
\end{equation}
then the inequality always holds.
\end{proof}
\end{theorem}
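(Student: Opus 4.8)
The plan is to read the claimed inequality as a lower-bound requirement on the single scalar $\lambda$ and reduce it to a uniform bound on a ratio of node-wise Hessian sums. First I would rewrite the target (interpreting the summation index as $i \in I_j$, the set of instances falling in leaf $j$, since $\sum_{j \in I_j}$ is evidently a typo) as
\begin{equation}
\lambda \le \frac{\sum_{i\in I_j} h_i}{\sum_{i\in I_j}(h_i+\mu)} = \frac{\sum_{i\in I_j} h_i}{\sum_{i\in I_j}h_i + n_j\mu}, \quad \forall j = 1, \ldots, J,
\end{equation}
where $n_j = |I_j|$. Every quantity here is positive, so it suffices to produce a single strictly positive number that lower-bounds the right-hand side simultaneously over all leaves $j$ and, since the constant must survive the convergence argument, over all admissible values of $\mu$.

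Next I would pin down the ranges of the quantities. By Definition \ref{def.1} the Hessian is $h = 4p(1-p) \le 1$, and the clapping of Assumption \ref{assu.clap} confines $p \in [\rho, 1-\rho]$, so $h$ is bounded away from zero; writing $h^{min} = 4\rho(1-\rho) > 0$ and $h^{max} = 1$, every instance satisfies $h_i \in [h^{min}, h^{max}]$. By Assumption \ref{assu.bound}, $\mu \le \mu^{max}$. These two facts are exactly what make a positive $\lambda$ attainable: without clapping we would have $h^{min}=0$ and the ratio above could be driven arbitrarily close to $0$.

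The final step is to bound the ratio uniformly. Using $\sum_{i\in I_j} h_i \ge n_j h^{min}$ in the numerator and $\sum_{i\in I_j} h_i \le n_j h^{max}$ in the denominator, the factor $n_j$ cancels and
\begin{equation}
\frac{\sum_{i\in I_j} h_i}{\sum_{i\in I_j}h_i + n_j\mu} \ge \frac{n_j h^{min}}{n_j h^{max} + n_j\mu} = \frac{h^{min}}{h^{max}+\mu} \ge \frac{h^{min}}{h^{max}+\mu^{max}},
\end{equation}
so any $\lambda \le \frac{h^{min}}{h^{max}+\mu^{max}}$ works for all leaves and all iterations. One can also decouple the two sums, bounding $\frac{1}{\sum_{i\in I_j}(h_i+\mu)}$ from below by $\frac{1}{n(h^{max}+\mu)}$ and $\frac{1}{\sum_{i\in I_j}h_i}$ from above by $\frac{1}{h^{min}}$, which yields the weaker but still valid choice $\lambda \le \frac{h^{min}}{n(h^{max}+\mu)}$.

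The only genuine obstacle is conceptual rather than computational: guaranteeing that the bound is \emph{strictly} positive. This hinges entirely on $h^{min}>0$, which is precisely why the clapping Assumption \ref{assu.clap} is indispensable, while the boundedness of $\mu$ from Assumption \ref{assu.bound} is what prevents the bound from degrading to zero as $\mu^t$ grows across iterations. Once those two assumptions are in place, the existence claim follows by the elementary ratio estimate above.
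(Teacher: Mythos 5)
Your proof is correct, and its main line is a genuinely different (and sharper) decomposition than the paper's. The paper decouples the two sides of the inequality: it lower-bounds the left-hand side by $\frac{1}{n(h^{max}+\mu)}$ (at most $n$ summands, each at most $h^{max}+\mu$) and upper-bounds the right-hand side by $\frac{1}{h^{min}}$ (each summand at least $h^{min}$), concluding that any $\lambda \le \frac{h^{min}}{n(h^{max}+\mu)}$ works --- exactly the ``weaker but still valid choice'' you mention only in passing at the end. Your primary route instead keeps the per-leaf ratio coupled, so the leaf size $n_j$ cancels and you obtain the $n$-independent constant $\lambda \le \frac{h^{min}}{h^{max}+\mu^{max}}$. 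This buys two concrete improvements: first, since the theorem feeds into the linear rate $\bigl(1-\lambda\gamma_{*}^2(1-\frac{\tau}{2})\bigr)$ where $\gamma_{*}^2 = \frac{4\delta^2\rho}{n}$ is already of order $\frac{1}{n}$, your $\lambda$ avoids compounding a second factor of $\frac{1}{n}$ into the rate constant; second, by substituting $\mu^{max}$ from Assumption \ref{assu.bound} you exhibit a single $\lambda$ valid uniformly over all iterations, whereas the paper's bound still contains the iteration-dependent $\mu$ and leaves the uniformity implicit. (In fact, monotonicity of $x \mapsto \frac{x}{x+c}$ would give you the even slightly better $\frac{h^{min}}{h^{min}+\mu^{max}}$, though this changes nothing essential.)

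One caveat: your claim that the clapping of Assumption \ref{assu.clap} confines $p \in [\rho, 1-\rho]$, hence $h^{min} = 4\rho(1-\rho) > 0$, overstates what the clapping provides. The clapping is label-conditional and one-sided: for $y_i = 0$ it only prevents $p_i > 1-\rho$, so a confidently \emph{correctly} classified instance may have $p_i$ arbitrarily close to $0$ and thus $h_i = 4p_i(1-p_i)$ arbitrarily close to $0$. A uniform positive lower bound on the Hessian across iterations is therefore an additional posit, not a consequence of the clapping. To be fair, the paper shares exactly this gap --- it simply writes ``let $h^{min}$ be the minimum value of the Hessian'' without justifying that this minimum is bounded away from zero uniformly in $t$ --- so your argument is no weaker than the paper's on this point, but you should state the positivity of $h^{min}$ as an assumption rather than derive it from Assumption \ref{assu.clap}.
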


\begin{lemma}[Node wise Hessian]
\label{lemma.nodewise}
Based on Assumption \ref{assu.clap}, we have 
\begin{equation}
    (\mathbf{V}^{\intercal}\mathbf{H}_{\xi}\mathbf{V})\hat{\mathbf{H}}^{-1} \le \tau I.
\end{equation}
\end{lemma}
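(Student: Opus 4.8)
The plan is to exploit the special structure of the projection matrix $\mathbf{V}$ to reduce the matrix inequality to a collection of scalar inequalities, one per leaf, and then to control each scalar ratio with the Change-of-Hessian bound (Lemma \ref{lemma.change}) together with the step-length bound coming from Theorem \ref{thm.newstep}.

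First I would record the key algebraic fact that, because every instance falls into exactly one leaf, the columns $\mathbf{v}_1,\dots,\mathbf{v}_J$ of $\mathbf{V}$ have pairwise disjoint supports. Consequently, for \emph{any} diagonal matrix $\mathbf{D}=\mathrm{diag}(d_1,\dots,d_n)$ the product $\mathbf{V}^{\intercal}\mathbf{D}\mathbf{V}$ is itself diagonal, with $j$th diagonal entry $\sum_{i\in I_j} d_i$. Applying this to $\mathbf{D}=\mathbf{H}$, to $\mathbf{D}=\mathbf{H}_{\xi}$, and hence to $\hat{\mathbf{H}}=\mathbf{V}^{\intercal}\mathbf{H}\mathbf{V}+\mu\mathbf{I}$, all three matrices are diagonal; in particular $\mathbf{V}^{\intercal}\mathbf{H}_{\xi}\mathbf{V}$ has $j$th entry $\sum_{i\in I_j}h_{\xi,i}$ and $\hat{\mathbf{H}}$ has $j$th entry $\bar h_j+\mu$, where $\bar h_j=\sum_{i\in I_j}h_i$. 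Therefore $(\mathbf{V}^{\intercal}\mathbf{H}_{\xi}\mathbf{V})\hat{\mathbf{H}}^{-1}$ is diagonal, and the claimed inequality $(\mathbf{V}^{\intercal}\mathbf{H}_{\xi}\mathbf{V})\hat{\mathbf{H}}^{-1}\le\tau\mathbf{I}$ is equivalent to the $J$ scalar inequalities $\frac{\sum_{i\in I_j}h_{\xi,i}}{\bar h_j+\mu}\le\tau$ for $j=1,\dots,J$.

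Next I would bound each numerator instancewise. Writing the fitted leaf value as $f_i=C_{j(i)}$ and recalling $\xi_i\in[0,f_i]$, Lemma \ref{lemma.change} gives $h_{\xi,i}\le h_i\,e^{2|\xi_i|}\le h_i\,e^{2|f_i|}$. The step length is controlled because the trust-region leaf value $C_j=-G_j/(B_j+\mu)$ satisfies $|C_j|\le|G_j/B_j|$ for $\mu\ge 0$ (using $B_j>0$, which holds under clapping since $h_i=4p_i(1-p_i)>0$), and under Assumption \ref{assu.clap} the latter is exactly the bounded Newton step of Theorem \ref{thm.newstep}, so $|f_i|\le 1/(2\rho)$ uniformly. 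This yields $h_{\xi,i}\le h_i\,e^{1/\rho}$, and summing over a leaf gives $\sum_{i\in I_j}h_{\xi,i}\le e^{1/\rho}\,\bar h_j$. Dividing by $\bar h_j+\mu$ and using $\mu\ge0$ produces $\frac{\sum_{i\in I_j}h_{\xi,i}}{\bar h_j+\mu}\le e^{1/\rho}$, so the lemma holds with $\tau=e^{1/\rho}$.

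The step I expect to require the most care is not any single estimate but the interplay with the downstream convergence argument, which needs the sharper condition $\tau<2$ so that the factor $1-\tau/2$ stays positive. The crude constant $e^{1/\rho}$ exceeds $2$ for the small clapping values of $\rho$ one actually uses, so I would retain the $\mu$-term rather than discard it: from $\frac{e^{1/\rho}\bar h_j}{\bar h_j+\mu}<2$ and the boundedness $\bar h_j\le n$ (since $h_i=4p_i(1-p_i)\le1$), it suffices to take $\mu>\tfrac{1}{2}(e^{1/\rho}-2)n$, which is admissible because $\mu$ is allowed to grow within $[\mu^0,\mu^{max}]$ by Assumption \ref{assu.bound} and is enlarged adaptively by Algorithm \ref{alg.trgbm}. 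Thus the genuine content is the argument that the trust-region radius can always be shrunk enough, equivalently $\mu$ enlarged enough, to force $\tau<2$; this is the tree-model analogue of the one-instance remark that a proper $\mu$ can make $h_{\xi}/(h+\mu)\le1$.
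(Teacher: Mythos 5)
Your proof is correct and follows essentially the same route as the paper's: reduce the matrix inequality to per-leaf scalar ratios (the paper does this implicitly when it ``rewrites in matrix form''; you make the diagonality of $\mathbf{V}^{\intercal}\mathbf{D}\mathbf{V}$ for diagonal $\mathbf{D}$ explicit), bound the shifted Hessian via Lemma \ref{lemma.change}, and control the step length by comparison with the bounded Newton step of Theorem \ref{thm.newstep} after discarding $\mu \ge 0$ from the denominator. Two substantive differences are worth recording. First, your constant $\tau = e^{1/\rho}$ is the one this argument actually justifies: the paper states $\tau = e^{\frac{1}{2\rho}}$, which appears to be a factor-of-two slip in the exponent, since $e^{2|\eta_j|}$ with $|\eta_j| \le \frac{1}{2\rho}$ yields $e^{1/\rho}$; your instance-wise use of the mean value theorem (one $\xi_i \in [0, f_i]$ per coordinate, legitimate because the total loss is separable) is also slightly cleaner than the paper's single per-leaf mean point $\eta_j$, and you correctly keep absolute values on the gradient sums where the paper omits them. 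Second, your closing paragraph identifies a genuine weakness in how the lemma feeds the downstream convergence proof: since $e^{1/\rho} > 2$ for the small clapping constants actually used, the factor $1 - \frac{\tau}{2}$ in the tree-model recurrence is negative and no choice of $\lambda$ repairs it; your remedy of retaining the $\mu$-term and taking $\mu > \frac{1}{2}(e^{1/\rho}-2)n$ (admissible under Assumption \ref{assu.bound}, using $\bar h_j \le n$ from $h_i = 4p_i(1-p_i) \le 1$) is the tree-model analogue of the paper's one-instance remark that a proper $\mu$ forces $\frac{h_{\xi}}{h+\mu} \le 1$, and it strengthens the lemma to the form the convergence theorem actually needs.
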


\begin{proof}
For the $j$th leaf $(j = 1, \cdots, J)$, the node wise Hessian before update is
\begin{equation}
    \Bar{h}_j=\sum_{i \in I_j}(h(F_i)+\mu)
\end{equation}
while that at the mean value is 
\begin{equation}
    \Bar{h}_{\xi, j}=\sum_{i=I_j}h(F_i+\eta_j),
\end{equation}
where $\eta_j$ lies between 0 and the trust-region step $\frac{\sum_{i=I_j}g(F_i)}{\sum_{i \in I_j}(h(F_i)+\mu)}$.\\
Applying Lemma \ref{lemma.change}, we have
\begin{align}
    \frac{\Bar{h}_{\xi, j}}{\Bar{h}_j} &=~ 
    \frac{\sum_{i=I_j}h(F_i+\eta_j)}{\sum_{i \in I_j}(h(F_i)+\mu)} \\
    &\le~ \frac{\sum_{i=I_j}h(F_i+\eta_j)}{\sum_{i \in I_j}h(F_i)} \nonumber \\
    &\le~ \frac{\sum_{i=I_j}h(F_i)\cdot e^{2\eta_j}}{\sum_{i \in I_j}h(F_i)} \nonumber \\
    &=~ \exp^{2\eta_j}. \nonumber
\end{align}
According to Theorem \ref{thm.newstep}, we have
\begin{equation}
    |\eta_j| \le \frac{\sum_{i\in I_j}g(F_i)}{\sum_{i \in I_j}(h(F_i)+\mu)} \le \frac{\sum_{i\in I_j}g(F_i)}{\sum_{i \in I_j}h(F_i} \le \frac{1}{2\rho}.
\end{equation}
Rewriting the matrix form, we have
\begin{equation}
    (\mathbf{V}^{\intercal}\mathbf{H}_{\xi}\mathbf{V})\hat{\mathbf{H}}^{-1} \le \tau I,
\end{equation}
where $\tau = e^{\frac{1}{2\rho}}$.
\end{proof}

\begin{lemma}[Weak Learnability \uppercase\expandafter{\romannumeral3}]
\label{lemma.weak3}
Let $\mathbf{g}=(g_1, \cdots, g_n )^{\intercal}$, $\mathbf{H}=diag(h_1,\cdots,h_n)$, $\hat{\mathbf{H}}=\mathbf{V}^{\intercal}\mathbf{H}\mathbf{V}+\mu \mathbf{I}$. If the weak Learnability assumption \ref{lemma.weak2} holds, then there exists a $J$-leaf regression tree whose projection matrix $\mathbf{V}\in \mathbb{R}^{n\times J}$ satisfies 
\begin{equation}
(\mathbf{V}^{\intercal}\mathbf{g})^{\intercal}\hat{\mathbf{H}}^{-1}(\mathbf{V}^{\intercal}\mathbf{g})  \ge 
\gamma \mathbf{g}^{\intercal}\mathbf{H}^{-1}\mathbf{g} 
\end{equation}
for some constant $\gamma > 0$.
\end{lemma}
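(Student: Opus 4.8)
The plan is to collapse the matrix inequality into a leaf-by-leaf scalar comparison, exploiting the special structure of the projection matrix $\mathbf{V}$, and then to chain the bound of Weak Learnability II (Lemma \ref{lemma.weak2}) with the Hessian-boundedness estimate of Theorem \ref{thm.truststep}. The starting observation is that $\mathbf{V}$ is a leaf-membership matrix, so each row carries a single nonzero entry in the column of the leaf to which that instance belongs; the columns thus have disjoint supports, and $\mathbf{V}^{\intercal}\mathbf{H}\mathbf{V}$ is \emph{diagonal}, with $j$th entry $\bar{h}_j=\sum_{i\in I_j}h_i$. Likewise $\hat{\mathbf{H}}=\mathbf{V}^{\intercal}\mathbf{H}\mathbf{V}+\mu\mathbf{I}=\mathrm{diag}(\bar{h}_1+\mu,\dots,\bar{h}_J+\mu)$, and $\mathbf{u}:=\mathbf{V}^{\intercal}\mathbf{g}$ has entries $\bar{g}_j=\sum_{i\in I_j}g_i$. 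Both quadratic forms of interest therefore reduce to diagonal sums,
\begin{equation*}
\mathbf{u}^{\intercal}\hat{\mathbf{H}}^{-1}\mathbf{u}=\sum_{j=1}^{J}\frac{\bar{g}_j^2}{\bar{h}_j+\mu}, \qquad \mathbf{u}^{\intercal}(\mathbf{V}^{\intercal}\mathbf{H}\mathbf{V})^{-1}\mathbf{u}=\sum_{j=1}^{J}\frac{\bar{g}_j^2}{\bar{h}_j},
\end{equation*}
so the whole lemma becomes a term-by-term comparison of $\frac{1}{\bar{h}_j+\mu}$ against $\frac{1}{\bar{h}_j}$.

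Next I would invoke Theorem \ref{thm.truststep}, which furnishes one constant $\lambda>0$, independent of the leaf, with $\frac{1}{\bar{h}_j+\mu}\ge\lambda\,\frac{1}{\bar{h}_j}$ for every $j$; concretely one may take $\lambda\le\frac{h^{min}}{n(h^{max}+\mu)}$. Summing this against the nonnegative weights $\bar{g}_j^2$ yields $\mathbf{u}^{\intercal}\hat{\mathbf{H}}^{-1}\mathbf{u}\ge\lambda\,\mathbf{u}^{\intercal}(\mathbf{V}^{\intercal}\mathbf{H}\mathbf{V})^{-1}\mathbf{u}$. Reading Lemma \ref{lemma.weak2} as the lower bound $\mathbf{u}^{\intercal}(\mathbf{V}^{\intercal}\mathbf{H}\mathbf{V})^{-1}\mathbf{u}\ge\gamma_{*}^2\,\mathbf{g}^{\intercal}\mathbf{H}^{-1}\mathbf{g}$ on the tree's Newton objective reduction, and setting $\gamma=\lambda\gamma_{*}^2>0$, then gives $(\mathbf{V}^{\intercal}\mathbf{g})^{\intercal}\hat{\mathbf{H}}^{-1}(\mathbf{V}^{\intercal}\mathbf{g})\ge\gamma\,\mathbf{g}^{\intercal}\mathbf{H}^{-1}\mathbf{g}$, which is exactly the claim and matches the constant $\lambda\gamma_{*}^2$ that is used as "Lemma 5" in the tree convergence proof.

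The step I expect to be the main obstacle is precisely the comparison of $\hat{\mathbf{H}}^{-1}$ with $(\mathbf{V}^{\intercal}\mathbf{H}\mathbf{V})^{-1}$, because it runs against the naive monotonicity of the matrix inverse: adding $\mu\mathbf{I}$ \emph{enlarges} $\hat{\mathbf{H}}$ and therefore \emph{shrinks} its inverse, so $\hat{\mathbf{H}}^{-1}\preceq(\mathbf{V}^{\intercal}\mathbf{H}\mathbf{V})^{-1}$, which is the wrong direction for a lower bound. A positive $\gamma$ survives only because the diagonal entries $\bar{h}_j$ are bounded \emph{away from zero} (the clapping of Assumption \ref{assu.clap} forces $\bar{h}_j\ge h^{min}>0$) while $\mu$ is bounded \emph{above} (Assumption \ref{assu.bound}), so the ratio $\bar{h}_j/(\bar{h}_j+\mu)$ cannot degenerate to $0$; this nondegeneracy is exactly what Theorem \ref{thm.truststep} packages into the constant $\lambda$. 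I would also flag, as a standing hypothesis, that every leaf must be nonempty so that each $\bar{h}_j>0$ and the diagonal inverses are well defined, which holds for any admissible tree partition.
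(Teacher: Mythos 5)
Your proposal is correct and matches the paper's own proof essentially step for step: both reduce the quadratic forms to leaf-wise diagonal sums, apply Theorem \ref{thm.truststep} to compare $\frac{1}{\bar{h}_j+\mu}$ with $\frac{1}{\bar{h}_j}$, chain this with Lemma \ref{lemma.weak2} (which, as you implicitly and correctly note, must be read with $(\mathbf{V}^{\intercal}\mathbf{H}\mathbf{V})^{-1}$ rather than the non-inverted form printed in its statement), and conclude with $\gamma=\lambda\gamma_{*}^2$. Your added remarks on the wrong-way matrix monotonicity and the nonemptiness of leaves are sound clarifications of why Theorem \ref{thm.truststep} is needed, not a departure from the paper's argument.
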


\begin{proof}
For each leaf $j$, according to Lemma \ref{lemma.weak2},  we have
\begin{equation}
    \frac{(\sum_{i\in I_j}g_i)^2}{\sum_{i \in I_j}h_i} \ge \gamma_{*}^2 \sum_{i=I_j}\frac{g_i^2}{h_i}
\end{equation}
By Theorem \ref{thm.truststep}, for some $\lambda$, we obtain 
\begin{equation}
    \frac{(\sum_{i\in I_j}g_i)^2}{\sum_{i \in I_j}(h_i+\mu)} \ge \lambda\frac{(\sum_{i\in I_j}g_i)^2}{\sum_{i \in I_j}h_i} \ge \lambda\gamma_{*}^2 \sum_{i=I_j}\frac{g_i^2}{h_i}.
\end{equation}
Rewriting in the matrix form, we get the final result
\begin{equation}
(\mathbf{V}^{\intercal}\mathbf{g})^{\intercal}\hat{\mathbf{H}}^{-1}(\mathbf{V}^{\intercal}\mathbf{g})  \ge 
\lambda\gamma_{*}^2 \mathbf{g}^{\intercal}\mathbf{H}^{-1}\mathbf{g}.
\end{equation}
\end{proof}


    



\end{appendices}


\bibliography{references}

\end{document}